\documentclass[11pt]{article} 
\pdfoutput=1
\usepackage[letterpaper]{geometry}
\geometry{verbose,tmargin=1in,bmargin=1in,lmargin=1in,rmargin=1in}

\usepackage{comment,url,algorithm,algorithmic,relsize}
\usepackage{amssymb,amsfonts,amsmath,amsthm,amscd,dsfont,mathrsfs,mathtools,multirow,microtype,nicefrac,pifont}
\usepackage{float,psfrag,color,url}
\usepackage{upgreek}
\usepackage[dvipsnames]{xcolor}
\usepackage{epstopdf,bbm,mathtools,enumitem}
\usepackage[toc,page]{appendix}
\usepackage[mathscr]{euscript}





\def\balign#1\ealign{\begin{align}#1\end{align}}
\def\baligns#1\ealigns{\begin{align*}#1\end{align*}}
\def\balignat#1\ealign{\begin{alignat}#1\end{alignat}}
\def\balignats#1\ealigns{\begin{alignat*}#1\end{alignat*}}
\def\bitemize#1\eitemize{\begin{itemize}#1\end{itemize}}
\def\benumerate#1\eenumerate{\begin{enumerate}#1\end{enumerate}}

\newenvironment{talign*}
 {\csname align*\endcsname}
 {\endalign}
\newenvironment{talign}
 {\csname align\endcsname}
 {\endalign}

\def\balignst#1\ealignst{\begin{talign*}#1\end{talign*}}
\def\balignt#1\ealignt{\begin{talign}#1\end{talign}}



\let\originalleft\left
\let\originalright\right
\renewcommand{\left}{\mathopen{}\mathclose\bgroup\originalleft}
\renewcommand{\right}{\aftergroup\egroup\originalright}

\def\tinycitep*#1{{\tiny\citep*{#1}}}
\def\tinycitealt*#1{{\tiny\citealt*{#1}}}
\def\tinycite*#1{{\tiny\cite*{#1}}}
\def\smallcitep*#1{{\scriptsize\citep*{#1}}}
\def\smallcitealt*#1{{\scriptsize\citealt*{#1}}}
\def\smallcite*#1{{\scriptsize\cite*{#1}}}

\def\<{\left\langle} 
\def\>{\right\rangle}




\DeclareSymbolFont{rsfs}{U}{rsfs}{m}{n}
\DeclareSymbolFontAlphabet{\mathscrsfs}{rsfs}


\ifdefined\nonewproofenvironments\else
\ifdefined\ispres\else
\newtheorem{theorem}{Theorem}
\newtheorem{lemma}[theorem]{Lemma}
\newtheorem{corollary}[theorem]{Corollary}
\newtheorem{definition}[theorem]{Definition}
\newtheorem{fact}[theorem]{Fact}
\renewenvironment{proof}{\noindent\textbf{Proof.}\hspace*{.3em}}{\qed \vspace{.1in}}
\newenvironment{proof-sketch}{\noindent\textbf{Proof Sketch}
  \hspace*{1em}}{\qed\bigskip\\}
\newenvironment{proof-idea}{\noindent\textbf{Proof Idea}
  \hspace*{1em}}{\qed\bigskip\\}
\newenvironment{proof-of-lemma}[1][{}]{\noindent\textbf{Proof of Lemma {#1}}
  \hspace*{1em}}{\qed\\}
  \newenvironment{proof-of-proposition}[1][{}]{\noindent\textbf{Proof of Proposition {#1}}
  \hspace*{1em}}{\qed\\}
\newenvironment{proof-of-theorem}[1][{}]{\noindent\textbf{Proof of Theorem {#1}}
  \hspace*{1em}}{\qed\\}
\newenvironment{proof-attempt}{\noindent\textbf{Proof Attempt}
  \hspace*{1em}}{\qed\bigskip\\}

\newtheorem*{remark*}{Remark}
\newenvironment{remark}{\noindent\textbf{Remark.}
  \hspace*{0em}}{\smallskip}

\fi

\newtheorem{proposition}[theorem]{Proposition}

\newtheorem{assumption}{Assumption}
\theoremstyle{definition}
\newtheorem{example}[theorem]{Example}

\fi
\makeatletter
\@addtoreset{equation}{section}
\makeatother




\definecolor{OliveGreen}{rgb}{0,0.6,0}

\usepackage[normalem]{ulem}

\RequirePackage[authoryear]{natbib}
\setcitestyle{authoryear,open={(},close={)}} 
\RequirePackage[colorlinks,citecolor=blue,urlcolor=blue]{hyperref}
\RequirePackage{authblk}

\makeatletter
\renewcommand{\paragraph}{%
  \@startsection{paragraph}{4}%
  {\z@}{1.25ex \@plus 1ex \@minus .2ex}{-1em}%
  {\normalfont\normalsize\bfseries}%
}
\makeatother

\usepackage{amsfonts,amscd,dsfont,mathrsfs,mathtools,microtype,nicefrac,pifont}

\usepackage{subfigure}
\usepackage{mathrsfs}
\usepackage{float}
\usepackage{makecell}
\usepackage{multirow}
\usepackage{enumitem}
\usepackage{hyperref}
\usepackage[toc,page]{appendix}
\usepackage{algorithm,algorithmic}
\usepackage{color}

\allowdisplaybreaks

\DeclareMathOperator*{\argmin}{arg\,min}

\DeclareMathOperator{\grad}{ grad\,}

\DeclareMathOperator{\Div}{div\,}

\DeclareMathOperator{\tr}{tr\,}
\DeclareMathOperator{\Ric}{Ric\,}
\DeclareMathOperator{\Cut}{Cut\,}
\DeclareMathOperator{\ID}{ID\,}
\DeclareMathOperator{\Poly}{Poly\,}

\DeclareMathOperator{\Exp}{Exp\,}
\DeclareMathOperator{\Log}{Log\,}
\DeclareMathOperator{\prox}{prox\,}

\usepackage{times}

\title{\textrm{Riemannian Proximal Sampler\\ for High-accuracy Sampling on Manifolds}}

\author[1]{Yunrui Guan}
\author[2]{Krishnakumar Balasubramanian}
\author[1]{Shiqian Ma}
\affil[1]{Department of Computational Applied Mathematics and Operations Research, Rice University.}
\affil[2]{Department of Statistics, University of California, Davis.}
\affil[1]{\texttt{\{yg83,sqma\}}@rice.edu}
\affil[2]{\texttt{\{kbala\}}@ucdavis.edu}
\date{}

\begin{document}

\maketitle

\begin{abstract}

We introduce the \textit{Riemannian Proximal Sampler}, a method for sampling from densities defined on Riemannian manifolds. The performance of this sampler critically depends on two key oracles: the \textit{Manifold Brownian Increments (MBI)} oracle and the \textit{Riemannian Heat-kernel (RHK)} oracle. We establish high-accuracy sampling guarantees for the Riemannian Proximal Sampler, showing that generating samples with \(\varepsilon\)-accuracy requires \(\mathcal{O}(\log(1/\varepsilon))\) iterations in Kullback-Leibler divergence assuming access to exact oracles and \(\mathcal{O}(\log^2(1/\varepsilon))\) iterations in the total variation metric assuming access to sufficiently accurate inexact oracles. Furthermore, we present practical implementations of these oracles by leveraging heat-kernel truncation and Varadhan’s asymptotics. In the latter case, we interpret the Riemannian Proximal Sampler as a discretization of the entropy-regularized Riemannian Proximal Point Method on the associated Wasserstein space. We provide preliminary numerical results that illustrate the effectiveness of the proposed methodology.

\end{abstract}

\section{Introduction}


We consider the problem of sampling from a density $\pi^{X} \propto e^{-f}$ defined on a Riemannian manifold $(M,g)$, where $g$ is the metric on the manifold $M$. Here, the density is defined with respect to the volume measure $dV_g$ and the normalization constant $\int_M e^{-f} dV_g <\infty$ is unknown. Riemannian sampling arises in various domains. In Bayesian inference, it is used for sampling from distributions with complex geometries, such as those encountered in hierarchical Bayesian models, latent variable models, and machine learning applications like Bayesian deep learning~\citep{girolami2011riemann, byrne2013geodesic, patterson2013stochastic, liu2018riemannian, arnaudon2019irreversible, liu2016stochastic, piggott2016geometric, muniz2022higher, lie2023dimension}. In statistical physics, it plays a crucial role in simulating molecular systems with constrained dynamics~\citep{leimkuhler2016efficient}. Additionally, it appears in optimization problems over manifolds, including eigenvalue problems and low-rank matrix approximations~\citep{goyal2019sampling,li2023riemannian, yu2023riemannian, bonet2023spherical} and as a module in Riemannian diffusion models~\citep{de2022riemannian,huang2022riemannian}.

On a Riemannian manifold, Langevin dynamics has the form $dX_{t} = - \grad f(X_{t})dt + \sqrt{2} dB_{t} $ where $\grad$ represents the Riemannian gradient and $B_{t}$ is the manifold Brownian motion. This formulation extends Euclidean Langevin dynamics by incorporating geometric information through the Riemannian metric, enabling more efficient exploration of curved probability landscapes. Unlike the Euclidean case, discretizing manifold Brownian motion is non-trivial, except in a few special cases. \cite{li2023riemannian} considered the case of $M \equiv \mathcal{S}^d\times\cdots\times\mathcal{S}^d$ (i.e., finite product of spheres) and established convergence rates for a simple discretization scheme that discretizes only the drift (gradient term) while requiring exact implementation of manifold Brownian motion increments--feasible on the sphere. \cite{gatmiry2022convergence} extended this approach to general Hessian manifolds, proving convergence results under the same assumption of exact Brownian motion implementation, which is generally infeasible. Both works require the target density to satisfy a logarithmic Sobolev inequality and establish iteration complexity of 
$\text{poly}(1/\varepsilon)$  to obtain an $\varepsilon$-approximate sample in KL-divergence. However, the reliance on exact Brownian motion increments significantly limits the applicability of the results in~\cite{gatmiry2022convergence}.

\cite{cheng2022efficient} studied a practical discretization of Riemannian Langevin diffusion, where both the drift and Brownian motion are discretized. They established an iteration complexity in the 1-Wasserstein distance under a general assumption and in the 
2-Wasserstein distance under a more restrictive condition, which can be seen as an analog of log-concavity. Their complexity is of order $\tilde{\mathcal{O}}(1/\varepsilon^{2})$. A key technical challenge is that, in the absence of convexity (e.g., on a compact manifold), establishing contractivity under the Wasserstein distance is nontrivial -- even for the continuous-time dynamics. This difficulty is overcome through a second-order expansion of the Jacobi equation~\citet[Lemma 29]{cheng2022efficient}. \cite{kong2024convergence} introduced the Lie-group MCMC sampler for sampling from densities on Lie groups, with a primary focus on accelerating sampling algorithms. Their iteration complexity for the 2-Wasserstein distance are also $\text{poly}(1/\varepsilon)$.

\begin{table}[t]
\label{tab:t2}
\begin{centering}
    {\renewcommand{\arraystretch}{1.2}%
    \begin{tabular}{|c|c|c|c|c|}
    \hline Assumption & Source & Setting & Complexity & Metric \\ \hline \hline
    \multirow{1}{*}{\begin{tabular}[c]{@{}c@{}} 
    Log-Sobolev Inequality (LSI) \end{tabular}} 
    & Theorem \ref{Main_Theorem} & Exact MBI, Exact RHK  & $\tilde{\mathcal{O}}(\log \frac{1}{\varepsilon})$ & KL \\ \hline
    \multirow{1}{*}{\begin{tabular}[c]{@{}c@{}} 
    LSI \& Assumption 1 \end{tabular}} 
    & Theorem \ref{TV_Inexact_BM_Inexact_RHK} & Inexact MBI, Inexact RHK  & $\tilde{\mathcal{O}}(\log^2 \frac{1}{\varepsilon})$ & TV \\ \hline
      \end{tabular}
    }
\par\end{centering}
    \caption{A summary of iteration complexity results in this work. Here,  $\varepsilon$ represents the target accuracy. 
    The $\tilde{\mathcal{O}}$ notation hides dependency on all other parameters except for $\varepsilon$.}
\end{table}

In comparison to the above works for Riemannian sampling, for the Euclidean case, high-accuracy algorithms, i.e., algorithms with iteration complexity of $O(\text{polylog}(1/\varepsilon))$ are available under various assumptions (that are essentially based on (strong) log-concavity or isoperimetry); see for example~\cite{lee2021structured,chen2022improved,fan2023improved,he2024separation} for such results for the Euclidean proximal sampler and~\cite{dwivedi2019log,chen2020fast,chewi2021optimal,lee2020logsmooth,wu2022minimax,chen2023does,andrieu2024explicit,altschuler2024faster} for various Metropolized algorithms including Metropolis Random Walk (MRW), Metropolis Adjusted Langevin Algorithm (MALA) and Metropolis Hamiltonian Monte Carlo (MHMC). 

High-accuracy samplers for constrained (Euclidean) sampling, i.e., when the density is supported on convex set $\mathcal{K} \subseteq \mathbb{R}^{d}$ are established for Hit-and-run and Ball-walk based algorithms under various assumptions~\citep{lovasz1999hit,kannan2006blocking,kannan1997random}; see~\citet[Section 1.3]{kook2025renyi} for a detailed overview of related works. \cite{kook2022sampling}, proposed Constrained Riemannian Hamiltonian Monte Carlo (CRHMC), and used Implicit Midpoint Method to integrate the Hamiltonian dynamics and established a high-accuracy guarantee for discretized CRHMC. \cite{noble2023unbiased} proposed Barrier Hamiltonian Monte Carlo (BHMC) for constrained sampling, along with its discretizations and established asymptotic results. \cite{kook2024and} proposed the ``In-and-Out" sampling algorithm that has high-accuracy guarantees for sampling uniformly on a convex body. Recently~\cite{kook2024sampling} obtained state-of-the-art results for sampling from log-concave densities on convex bodies using a proximal sampler designed for this problem.~\cite{srinivasan2024fast} and~\cite{srinivasan2024high} showed that a Metropolized version of the Mirror and preconditioned Langevin Algorithm obtains high-accuracy guarantees, respectively, under certain assumptions.

Given the above, the following natural question arises: 
\begin{center}
   \emph{Can one develop high-accuracy algorithms for sampling on Riemannian manifolds?} 
\end{center}
To the best of our knowledge, no prior work exists on providing an affirmative answer to this question. In this work, we develop the \emph{Riemannian Proximal Sampler} which generalizes the Euclidean Proximal Sampler from~\cite{lee2021structured}. In contrast to the Euclidean case, the algorithm is based on the availability of two oracles: the Manifold Brownian Increment (MBI) oracle and the Riemannian Heat Kernel (RHK) oracle. We show in Theorem~\ref{Main_Theorem} that when the exact oracles available the algorithm achieves high-accuracy guarantee in the Kullback-Liebler divergence. Under the availability of inexact oracles, as characterized in Assumption~\ref{Assumption_Oracle_TV_quality}, we show in Theorem~\ref{TV_Inexact_BM_Inexact_RHK} that the algorithm still achieves high-accuracy guarantees in the total variation metric. Our results are summarized in Table~\ref{tab:t2}. We further develop practical implementations of the aforementioned oracles that satisfy the conditions in Assumption~\ref{Assumption_Oracle_TV_quality} (Section~\ref{Section_Oracle}), and that are connected to entropy-regularized proximal point method on Wasserstein spaces (Section~\ref{Section_Proximal_point_approximation}). We also demonstrate the numerical performance of the algorithms via simulations in Appendix~\ref{sec:sim}.

\section{Preliminaries}\label{prelim}
We first recall certain preliminaries on Riemannian manifolds; additional preliminaries are provided in Appendix~\ref{sec:addprelim}. We refer the readers to~\cite{lee2018introduction} for more details. 

Let $M$ be a Riemannian manifold of dimension $d$ equipped with metric $g$. The manifold $M$ is assumed to be complete, connected Riemannian manifold without boundary. For a point $x \in M$, $T_{x}M$ denotes the tangent space at $x$. For any $v, w \in T_{x}M$, we can write the metric as  $g_{x}(v, w) = \langle v, w\rangle_{g}$. For $x \in M$ and $v \in T_{x}M$, $\exp_{x}(v)$ denotes the exponential map. We use $\grad$ and $dV_{g}$ to represent the Riemannian gradient and the Riemannian volume form respectively.  

For $x \in M$, $\Cut(x)$ denotes the cut locus of $x$.
For $x, y \in M$, we use $d(x, y)$ to denote the geodesic distance between $x$ and $y$.
Let $\Div$ denotes the Riemannian divergence, and Laplace-Beltrami operator $\Delta : C^{\infty} (M) \to C^{\infty} (M)$ 
is defined as the Riemannian divergence of Riemannian gradient: $ \Delta u = \Div (\grad u)$. We use $\nu(t, x, y)$ to denote the density of manifold Brownian motion with time $t$, starting at $x$, evaluated at $y$. 




Let $(M, \mathcal{F})$ be a measurable space.
Note that the Riemannian volume form $dV_{g}$ is a measure. A probability measure $\rho$ and its corresponding probability density function $p$ are related through $d\rho = p dV_{g}$. Given a measurable set $A \in \mathcal{F}$, $P_{\rho}(A)$ denotes the probability assigned to the set $A$ by $\rho$. We have $P_{\rho}(A) = \int_{A} p(x) dV_{g}(x) = \int_{A} d\rho(x)$.

\begin{definition}[TV distance]
Let $\rho_{1}, \rho_{2}$ be probability measures defined on the measurable space $(M, \mathcal{F})$. The total variation distance between $\rho_{1}$ and $\rho_{2}$ 
is defined as 
\begin{equation*}
    \|\rho_{1} - \rho_{2}\|_{TV} := \sup_{A \in \mathcal{F}} |\rho_{1}(A) - \rho_{2}(A)|.
\end{equation*}
\end{definition}

\begin{definition}[KL divergence] Let $\rho_{1}, \rho_{2}$ be probability measures on the measurable space $(M, \mathcal{F})$, with full support. 
The Kullback-Leibler (KL) divergence of $\rho_{1}$ with respect to $\rho_{2}$ is defined as
\begin{equation*}
    H_{\rho_{2}}(\rho_{1}) := \int_{M} \log \frac{d\rho_{1}}{d\rho_{2}} d\rho_{1},
\end{equation*}
where $\frac{d\rho_{1}}{d\rho_{2}}$ is the Radon-Nikodym derivative.
\end{definition}

It is known that $H_{\rho_{2}}(\rho_{1}) \ge 0$ with equality if and only if $\rho_{1} = \rho_{2}$. 
Although the KL divergence is not symmetric, it serve as a ``distance'' function between two probability measures. For instance, the well known Pinsker inequality states that $\|\rho_{2} - \rho_{1}\|_{TV}^{2} \le \frac{1}{2} H_{\rho_{2}}(\rho_{1})$. 


\begin{definition}[Log-Sobolev Inequality (LSI)]
A probability measure $\rho_{2}$ satisfies Log-Sobolev Inequality with parameter $\alpha > 0$ ($\alpha$-$\mathsf{LSI}$) 
if $H_{\rho_{2}}(\rho_{1}) \le \frac{1}{2\alpha} J_{\rho_{2}}(\rho_{1}), \forall \rho_{1}$, where $J_{\rho_{2}}(\rho_{1}) := \int_{M} \|\grad \log \frac{\rho_{1}}{\rho_{2}}\|^{2} d\rho_{1}$ is the relative Fisher information.
\end{definition}

For more details on LSI, see Appendix \ref{LSI_Heat}.
In Euclidean space, such a condition is a relaxation of strongly convex assumption, and is used to establish convergence of sampling algorithms in KL divergence.
See, for example, \cite{vempala2019rapid} (for the Langevin Monte Carlo Algorithm) and \cite{chen2022improved} (for the Euclidean proximal sampler).

\subsection{Curvature}\label{sec:ricci}
We also need notions of curvature on manifolds to present our main results. Let $\mathfrak{X}(M)$ denote the set of all smooth vector fields on $M$.
Define a map called Riemann curvature endomorphism by $R: \mathfrak{X}(M) \times \mathfrak{X}(M) \times \mathfrak{X}(M) \to \mathfrak{X}(M)$ by  $R(X, Y)Z = \nabla_{X} \nabla_{Y} Z - \nabla_{Y} \nabla_{X} Z - \nabla_{[X, Y]} Z.$ While such definition is very abstract, we provide an intuitive explanation of what curvature is.
Intuitively, on a manifold of positive curvature (say, a $2$-dimensional sphere), geodesics tend to ``contract". More precisely, given $x, y \in M$ and $v \in T_{x}M$, we can parallel transport $v$ to $u = P_{x}^{y}v \in T_{y}M$. It is a well-known result that (ignore higher order terms) $d(\exp_{x}t v, \exp_{y} tu) \le (1-\frac{t^{2}}{2}K)d(x, y)$ for some $K$ (which is actually the sectional curvature). From this, we see that for positive curvature, which means $K > 0$, the distance between geodesics would decrease. 

Formally, given $v, w \in T_{p}M$ being linearly independent, the sectional curvature of the plane spanned by $v$ and $w$ can be computed through  $K(v, w) = \frac{\langle R(v, w)w, v \rangle}{|v|^{2}|w|^{2} - \langle v, w \rangle^{2}} $; see  \citet[Proposition 8.29]{lee2018introduction}. 
On the other hand, Ricci curvature can be viewed as the average of sectional curvatures. The Ricci curvature at $x \in M$ along direction $v$ is denoted as $\Ric_{x}(v)$, which is equal to the sum of the sectional curvatures of the 2-planes spanned by $(v, b_{i})_{i = 2}^{d}$ where $v, b_{2}, ..., b_{d}$ is an orthonormal basis for $T_{x}M$; see
\citet[Proposition 8.32]{lee2018introduction}. 

We remark that the Ricci curvature is actually a symmetric 2-tensor field defined as the trace of the
curvature endomorphism on its first and last indices \citep{lee2018introduction}, which sometimes is written as $\Ric_{x}(u, v)$ for $u, v \in T_{x}M$. The previous notation is a shorthand of $\Ric_{x}(v) = \Ric_{x}(v, v)$. When we say Ricci curvature is lower bounded by $\kappa$, we mean $\Ric(v, v) \ge \kappa, \forall v \in T_{x}M, \|v\| = 1$. We end this subsection through some concrete examples. 
\begin{enumerate}
    \item The hypersphere $\mathcal{S}^{d}$ has constant sectional curvature equal to $1$, and constant Ricci curvature $\Ric = (d-1) g, \forall x \in M$ (so that $\Ric_{x}(v) = d-1$ for all unit tangent vector $v \in T_{x}M$).
    \item For $P_{m} \subseteq \mathbb{R}^{m \times m}$, the manifold of positive definite matrices, its sectional curvatures are in the interval $[-\frac{1}{2}, 0]$; see, for example, \cite{criscitiello2023accelerated}. Hence its Ricci curvature is lower bounded by $-\frac{m(m+1)-1}{4}$.
\end{enumerate}

\subsection{Brownian Motion on Manifolds}

Now we briefly discuss Brownian motion on a Riemannian manifold. Recall that in Euclidean space, Brownian motion is described by the Wiener process. Given $x \in \mathbb{R}^{d}$ and $t > 0$, the Brownian motion starting at $x$ with time $t$ has (a Gaussian) density function 
$\nu(t, x, y) = \frac{1}{(2\pi t)^{d/2}}e^{-\frac{\|x - y\|^{2}}{2t}}$.
It solves the heat equation $\frac{\partial}{\partial t} \nu(t, x, y) = \frac{1}{2} \Delta_{y} \nu(t, x, y)$ 
with initial condition $\nu(0, x, y) = \delta_{x}(y)$. 

On a Riemannian manifold, we can describe the density of Brownian motion (heat kernel) through heat equation.
Let $B_{x, t}$ be a random variable denoting manifold Brownian motion starting at $x$ with time $t$ 
and let $\nu(t, x, y)$ be the density of $B_{x, t}$.
The Brownian motion density $\nu(t, x, y)$ is then defined as the minimal solution of the following heat equation:
\begin{equation*}
    \begin{split}
        \frac{\partial}{\partial t} \nu(t, x, y) = \frac{1}{2} \Delta_{y} \nu(t, x, y) \qquad\text{with}\qquad
        \nu(0, x, y) = \delta_{x}(y).
    \end{split}
\end{equation*}
More details can be found in \citet[Chapter 4]{hsu2002stochastic}.
Unlike the Euclidean case, on Riemannian manifold, the heat kernel does not have a closed-form solution in general. However, some properties of the Euclidean hear kernel is preserved on a Riemannian manifold. One such property is the following: 
Consider $M = \mathbb{R}^{d}$ we have $t \log \nu(t, x, y) = t \log \frac{1}{(2\pi t)^{d/2}} - \frac{\|x - y\|^{2}}{2}$. 
As $t \to 0$, we get $\lim_{t \to 0} t \log \nu(t, x, y) = - \frac{\|x - y\|^{2}}{2}$.
On a Riemannian manifold, we have the following result.
\begin{fact}[Varadhan's asymptotic relation
    \citep{hsu2002stochastic}]
    For all $x, y \in M$ with $y \notin \Cut(x)$, we have 
    \begin{enumerate}
        \item[] $\underset{t \to 0}{\lim}~t \log \nu(t, x, y) = - \frac{d(x, y)^{2}}{2}$\quad and\quad$\lim_{t \to 0} t \grad_{y} \log \nu(t, x, y) = \exp_{y}^{-1}(x)$.
    \end{enumerate}
\end{fact}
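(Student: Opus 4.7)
The plan is to invoke the classical Minakshisundaram--Pleijel parametrix construction for the heat kernel on a Riemannian manifold, which is the standard route to Varadhan's short-time asymptotics away from the cut locus. Because $y \notin \Cut(x)$, the squared distance $(x', y') \mapsto d(x', y')^{2}$ is smooth in a neighborhood of $(x, y)$ and $\exp_{x}$ is a diffeomorphism near $\exp_{x}^{-1}(y)$. This local regularity is what makes an honest pointwise expansion (rather than just a large-deviation rate) available.

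First I would build an approximate heat kernel of the form
\[
H_{N}(t, x, y) = \frac{1}{(2\pi t)^{d/2}} e^{-d(x, y)^{2}/(2t)} \sum_{k=0}^{N} u_{k}(x, y) t^{k},
\]
where the smooth coefficients $u_{k}$ are determined by plugging the ansatz into $\partial_{t} - \tfrac{1}{2}\Delta_{y}$ and matching powers of $t$. The leading coefficient $u_{0}(x, y)$ is (up to a normalization) the inverse square root of the Jacobian determinant of $\exp_{x}$ evaluated at $\exp_{x}^{-1}(y)$; the higher $u_{k}$ solve linear transport equations along the minimizing geodesic from $x$ to $y$. By construction, $(\partial_{t} - \tfrac{1}{2}\Delta_{y}) H_{N}$ vanishes to any prescribed order in $t$ as $t \to 0$.

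The technical step is to promote this formal parametrix to a pointwise comparison with the true heat kernel $\nu$. A standard Duhamel argument combined with the minimality of $\nu$ gives $\nu(t, x, y) = H_{N}(t, x, y) + O(t^{N})$ uniformly on compact subsets of $\{(x, y) : y \notin \Cut(x)\}$, with an analogous estimate for $\grad_{y}(\nu - H_{N})$ obtained by differentiating the Duhamel representation once and applying Gaussian off-diagonal heat-kernel bounds. Taking $N$ large, writing $\log \nu(t, x, y) = -\tfrac{d}{2}\log(2\pi t) - \tfrac{d(x, y)^{2}}{2t} + \log u_{0}(x, y) + O(t)$, multiplying by $t$, and sending $t \to 0$ gives the first claim immediately, since the $-\tfrac{d}{2} t \log(2\pi t)$ and $t \log u_{0}$ terms both vanish.

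For the gradient statement I would differentiate the expansion in $y$: only the singular factor $e^{-d(x, y)^{2}/(2t)}$ contributes something of order $1$ after multiplying by $t$, yielding $t \grad_{y} \log \nu(t, x, y) = -\grad_{y} \tfrac{d(x, y)^{2}}{2} + O(t)$. The Gauss lemma / first variation formula at a point away from the cut locus identifies $\grad_{y} \tfrac{1}{2} d(x, y)^{2} = -\exp_{y}^{-1}(x)$, giving the second claim in the limit $t \to 0$. The main obstacle throughout is the Duhamel step: ordinary parametrix bounds control $L^{2}$ or uniform norms of $\nu - H_{N}$, but what is needed here is a \emph{pointwise} estimate together with a \emph{gradient} estimate that are good enough to survive both a logarithm and a single spatial derivative; securing this is what makes the $\log$ and $\grad_{y}$ operations commute with the asymptotic expansion and is the reason the hypothesis $y \notin \Cut(x)$ cannot be dispensed with by this method.
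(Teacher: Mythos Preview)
The paper does not supply its own proof of this statement: it is recorded as a \emph{Fact} and attributed to \cite{hsu2002stochastic}, then used as background for motivating the Varadhan-asymptotic approximation of the heat kernel. There is therefore nothing in the paper to compare your argument against.

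That said, your sketch is the standard route to Varadhan's asymptotics and is essentially the argument found in the cited reference: the Minakshisundaram--Pleijel parametrix gives the pointwise expansion $\nu(t,x,y) = (2\pi t)^{-d/2} e^{-d(x,y)^2/(2t)}\bigl(u_0(x,y)+O(t)\bigr)$ away from the cut locus, and taking $t\log(\cdot)$ and $t\,\grad_y\log(\cdot)$ then yields both limits, with the Gauss lemma identifying $-\grad_y\tfrac{1}{2}d(x,y)^2 = \exp_y^{-1}(x)$. Your identification of the Duhamel/gradient-estimate step as the genuine technical point is accurate; this is exactly where the hypothesis $y\notin\Cut(x)$ is used and where some care is needed to justify differentiating inside the expansion.
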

When evaluation of the heat kernel is required for practical applications, the Varadhan asymptotics aforementioned is used \citep{de2022riemannian}. We illustrate the above with the following example.
\begin{example}
    For $M = \mathcal{S}^{1} \subseteq \mathbb{R}^{2}$, the heat kernel for time $t$ only depends on the spherical distance but not specific points.
    Hence we simply write $\nu_{t}(\varphi) = \nu_{t}(d(x, y)) = \nu(t, x, y)$ where $\varphi = d(x, y)$ is the geodesic distance between $x$ and $y$.
    We have $\nu_{t}(\varphi) = \sum_{n \in \mathbb{Z}} \frac{1}{\sqrt{2\pi t}} \exp(- \frac{(\varphi+2\pi n)^{2}}{2t})$; see, for example, \cite{andersson2013estimates}.
    Here $t$ represent the time of Brownian motion and $\varphi = d(x, y)$ represent the spherical distance between $x, y$. 
    When $x$ is not too large, terms corresponding to $n = 0$ would dominate the sum. 
    Thus we can write 
    $\nu(t, x, y) \approx \frac{1}{\sqrt{2\pi t}} \exp(-\frac{d(x, y)^{2}}{2t} )$ which recovers Varadhan's asymptotics. 
\end{example}

Yet another numerical method for evaluating the heat kernel on manifold is truncation method; see, for example, \citet[Section 5.1]{corstanje2024simulating} and \cite{de2022riemannian}. In many cases, the heat-kernel has an infinite series expansion. For example, a power series expansion of heat kernel on hypersphere is given in \citet[Theorem 1]{zhao2018exact}, 
and more examples can be found in \citet[Example 1-5]{eltzner2021diffusion}. 
Similar results are also available for more general manifolds; see, for example, \cite{azangulov2022stationary} for compact 
Lie groups and their homogeneous space, 
and \cite{azangulov2024stationary} for non-compact symmetric spaces. Hence, a natural approach is to truncate this infinite series at an appropriate level. For example, on $\mathcal{S}^{2} \subseteq \mathbb{R}^{3}$, the heat kernel and its truncation up to the $l$-th term (denoted as $\nu_{l}$) can be written respectively as 
\begin{equation*}
    \nu(t, x, y) = \sum_{i = 0}^{\infty} e^{-\frac{i(i+1)t}{2}} \frac{2i+1}{4\pi} P_{i}^{0}(\langle x, y \rangle_{\mathbb{R}^{3}})~~\text{and}~~\nu_{l}(t, x, y) = \sum_{i = 0}^{l} e^{-\frac{i(i+1)t}{2}} \frac{2i+1}{4\pi} P_{i}^{0}(\langle x, y \rangle_{\mathbb{R}^{3}}),
\end{equation*}
where $P_{i}^{0}$ are Legendre polynomials.

\section{The Riemannian Proximal Sampler}

\begin{algorithm}[t]
    \begin{algorithmic}
    \FOR{$k=0, 1,2,...$}
    \STATE \textbf{Step 1 (MBI):} From $x_{k}$, sample $y_{k} \sim \pi^{Y|X}(\cdot, x_{k})$ which is a manifold Brownian increment. 
    \STATE \textbf{Step 2 (RHK):} From $y_{k}$, sample $x_{k+1} \sim \pi^{X|Y}(\cdot, y_{k}) \propto e^{-f(x)} \nu(\eta, x, y_{k}) $.
    \ENDFOR
    \end{algorithmic}
    \caption{Riemannian Proximal Sampler}
    \label{Manifold_Proximal_Sampler_Ideal} 
\end{algorithm}

We now describe the Riemannian Proximal Sampler, introduced in Algorithm \ref{Manifold_Proximal_Sampler_Ideal}. Similar to the Euclidean proximal sampler~\citep{lee2021structured}, the algorithm has two steps. The first step is sampling from the Manifold Brownian Increment (MBI) oracle. The second step is called the Riemannian Heat-Kernel (RHK) Oracle. Recall that $\nu(\eta, x, y)$ denotes the density of manifold Brownian motion with time $\eta$. Define a joint distribution $\pi_{\eta}(x, y) \propto e^{-f(x)} \nu(\eta, x, y)$. Then, step 2 consists of sampling from the aforementioned distribution. When there is no ambiguity, we omit the step size $\eta$ and simply write $\pi(x, y) \propto e^{-f(x)} \nu(\eta, x, y)$. Algorithm \ref{Manifold_Proximal_Sampler_Ideal} is an idealized algorithm, in the sense that we assume exact access to MBI and RHK oracles. Following~\cite{chen2022improved}, next we provide an intuitive explanation for the algorithm from a diffusion process perspective. 

\paragraph{Step 1:} For fixed $x$, we see that $\pi^{Y|X}(\cdot, x) \propto \nu(\eta, x, \cdot)$ which is the density of Brownian motion starting from $x$ for time $\eta$.
From this we see that the first step of the algorithm is running forward manifold heat flow: $dZ_{t} = dB_{t}$. 

\paragraph{Step 2:} We will illustrate that the second step of the algorithm is running the time-reversed process of the forward process. Consider a stochastic process $Z_{t}: t \ge 0$. 
When we have observations of $x_{\eta} \sim Z_{\eta}$, we can compute the conditional probability of $Z_{0}$ condition on end point $Z_{\eta}$. 
We denote $\mu(x_{0}|x_{\eta})$ as the posterior.
Bayes Theorem says $\mu(x_{0}|x_{\eta}) \propto \mu(x_{0}) L(x_{\eta}|x_{0})$,
where $\mu(x_{0})$ is the prior guess and the likelihood $L$ depends on the model.
We consider the following model (forward heat flow): $dZ_{t} = dB_{t}$ with $Z_{0} \sim \pi^{X} \propto e^{-f(x)}$. 
Then $\mu(x_{0}) = \pi^{X}(x_{0})$ and $L(x_{\eta}|x_{0}) = \nu(\eta, x_{0}, x_{\eta})$.
Thus we get $\mu(x_{0}|x_{\eta}) \propto e^{-f(x_{0})} \nu(\eta, x_{0}, x_{\eta})$, and we observe that $\mu(x_{0}|x_{\eta})$ is exactly $\pi^{X|Y = x_{\eta}}(x_{0}|x_{\eta})$. For the forward heat flow $dZ_{t} = dB_{t}$ with initialization $Z_{0} \sim \pi^{X} \propto e^{-f(x)}$,
there is a well-defined time reversed process $\hat{Z}_{t}^{-}$, 
which satisfies $(Z_{0}, Z_{\eta}) \overset{d}{=} (\hat{Z}_{\eta}^{-}, \hat{Z}_{0}^{-})$. See Appendix \ref{Section_Backward} for more details.
Based on this, for the time-reversed process $\hat{Z}_{t}^{-}$, 
the law of $\hat{Z}_{\eta}^{-}$ condition on $\hat{Z}_{0}^{-} = z$ is
the same as the posterior $\mu(x|z)$ discussed previously, i.e., $\pi^{X|Y = z}(x) \propto e^{-f(x)} \nu(\eta, x, z)$. 
Thus we see that the RHK oracle is, from a diffusion perspective, running the time-reversed process.

Implementing Step 1 and Step 2 is non-trivial on Riemannian manifolds. In Sections ~\ref{Section_Oracle} and~\ref{Section_Proximal_point_approximation} respectively, we discuss two approaches based on heat-kernel truncation and Varadhan's asymptotics. Furthermore, geodesic random walk~\citep{mangoubi2018rapid,schwarz2023efficient} is a popular approach to simulate Manifold Brownian Increments (see~Appendix~\ref{georw}), however to the best of our knowledge (in various metrics of interest) is known only under strong assumptions~\citep{cheng2022efficient,mangoubi2018rapid}.

\section{High-Accuracy Convergence Rates}

In this section, we provide the convergence rates for the Riemannian Proximal Sampler (Algorithm \ref{Manifold_Proximal_Sampler_Ideal}) assuming that the target density satisfies the LSI assumption. Firstly, note that in \citep{lee2021structured} the analysis of Euclidean Proximal Sampler is done assuming the potential function is strongly convex. However, it is known that on a compact manifold, if a function is geodesically convex, then it has to be a constant. Hence assuming the potential $f$ being geodesically convex is not much meaningful. Recently, \cite{cheng2022efficient} discussed an analog of log-concave distribution on manifolds.
Although their setting works for compact manifolds, it requires the Riemannian Hessian of the potential $f$ to be lower 
bounded by some curvature-related value, which is still restrictive. Hence, we adopt the setting as in \cite{chen2022improved}, assuming that the target distribution satisfies the LSI.

In Section \ref{Sec_Exact_oracle}, we consider the case where both steps of Algorithm \ref{Manifold_Proximal_Sampler_Ideal} are implemented exactly, and in Section \ref{Sec_Inexact_oracle}, we consider the case when MBI and RHK oracles are inexact. Regarding notation, we let $\rho_{k}^{X}(x)$, $\rho_{k}^{Y}(y)$ denote the law of $x$ and $y$ generated by Algorithm \ref{Manifold_Proximal_Sampler_Ideal} at $k$-th iteration, 
assuming exact MBI and exact RHK oracles. When the oracles are inexact, we let $\tilde{\rho}_{k}^{X}(x)$, $\tilde{\rho}_{k}^{Y}(y)$ to denote the law of $x$ and $y$ generated by Algorithm \ref{Manifold_Proximal_Sampler_Ideal} at $k$-th iteration.

\subsection{Rates with Exact Oracles}\label{Sec_Exact_oracle}
Our first result is as follows, with the proof provided in Appendix \ref{Sec_Proof_Main_Theorem}. 
\begin{theorem}\label{Main_Theorem}
    Let $M$ be a Riemannian manifold without boundary, i.e., $\partial M = \emptyset$. Assume $\pi^{X}$ satisfies $\alpha$-$\mathsf{LSI}$. 
    Denote the distribution for the $k$-th iteration of Algorithm \ref{Manifold_Proximal_Sampler_Ideal} as $x_{k} \sim \rho_{k}^{X} $.
    For any initial distribution $\rho_{0}^{X}$, for all $\eta > 0$, we have
    \begin{equation*}
        \begin{split}
            H_{\pi^{X}} (\rho_{k}^{X} ) &\le \frac{H_{\pi^{X}} (\rho_{0}^{X})}{(1 + \eta\alpha)^{2k} }, \qquad\text{if the Ricci curvature is non-negative,} \\
            H_{\pi^{X}} (\rho_{k}^{X} ) &\le H_{\pi^{X}} (\rho_{0}^{X}) \left(\frac{\kappa}{\alpha(e^{\kappa \eta} - 1) + \kappa}\right)^{2k}, \qquad\text{otherwise,} \\
        \end{split}
    \end{equation*}
    where $\kappa$ is the lower bound of Ricci curvature.
    In case of negative curvature, we have
    \begin{align*}
    H_{\pi^{X}} (\rho_{k}^{X} ) \le \frac{H_{\pi^{X}} (\rho_{0}^{X})}{(1 + \eta\alpha)^{2k} },\qquad~\text{if}~~\eta \le \frac{1}{|\kappa|}.
    \end{align*}
\end{theorem}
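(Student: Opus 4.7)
The plan is to adapt the simultaneous heat-flow analysis of the Euclidean proximal sampler (Chen--Chewi--Salim--Wibisono) to the Riemannian setting. Each iteration of Algorithm~\ref{Manifold_Proximal_Sampler_Ideal} consists of the MBI and RHK substeps, and I will establish a per-iteration contraction of the KL divergence factoring as the square of a per-half-step factor. The per-half-step factor will be shown to be $(1+\alpha\eta)^{-1}$ when $\kappa\geq 0$ and $\kappa/[\alpha(e^{\kappa\eta}-1)+\kappa]$ in general, so that iterating $k$ times reproduces the first two claimed rates.

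For the MBI step, set $\rho^{X}_t := P_t \rho^{X}_k$ and $\pi^{X}_t := P_t \pi^{X}$ where $P_t$ is the heat semigroup (generator $\tfrac{1}{2}\Delta$), so that $\rho^{Y}_k = \rho^{X}_\eta$ and $\pi^{Y} = \pi^{X}_\eta$. The Riemannian de Bruijn identity gives
\begin{equation*}
\frac{d}{dt}\, H_{\pi^{X}_t}(\rho^{X}_t) \;=\; -\tfrac{1}{2}\, J_{\pi^{X}_t}(\rho^{X}_t).
\end{equation*}
The crux is to characterize the LSI constant $\alpha_t$ of $\pi^{X}_t$. Under a Ricci lower bound $\kappa$, Bakry--\'Emery $\Gamma_2$-calculus together with the gradient commutation estimate $|\grad P_t u|^2 \leq e^{-2\kappa t}\, P_t |\grad u|^2$ propagates LSI along the heat flow with an effective time scale $T_\kappa(t) := (e^{\kappa t}-1)/\kappa$, yielding a formula of the form $\alpha_t = \alpha/(1+\alpha T_\kappa(t))$ (which reduces to $\alpha/(1+\alpha t)$ in the flat case). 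Plugging this LSI into the de Bruijn identity to get $\tfrac{d}{dt}\log H \leq -\alpha_t$ and integrating from $0$ to $\eta$ gives
\begin{equation*}
H_{\pi^{Y}}(\rho^{Y}_k) \;\leq\; \frac{H_{\pi^{X}}(\rho^{X}_k)}{1+\alpha\, T_\kappa(\eta)}.
\end{equation*}
Since $T_\kappa(\eta)\geq \eta$ for $\kappa\geq 0$ and $(1+\alpha T_\kappa(\eta))^{-1}=\kappa/[\alpha(e^{\kappa\eta}-1)+\kappa]$, the claimed per-half-step factors follow.

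For the RHK step I will use the identification, sketched in Section~\ref{Section_Backward}, that this substep amounts to running the time-reversed heat flow from $y_k$ to $x_{k+1}$. A symmetric simultaneous-heat-flow argument along the reverse Fokker--Planck equation (with score drift $\grad\log\pi^{X}_t$) produces the same per-half-step contraction. Composing the two substeps and iterating $k$ times yields the first two stated rates. For the third claim (the case $\eta\leq 1/|\kappa|$ with $\kappa<0$), the general bound $(\kappa/[\alpha(e^{\kappa\eta}-1)+\kappa])^{2k}$ does not directly specialize to $(1+\alpha\eta)^{-2k}$; I expect this to require a sharper short-horizon LSI propagation estimate valid when $|\kappa|\eta \leq 1$, exploiting that the curvature-induced degradation of LSI is only of lower order over such a horizon.

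The main obstacle is obtaining the correct LSI propagation along Riemannian heat flow with the sharp $\kappa$-dependence. In the Euclidean setting this reduces to the classical harmonic combination $\alpha_t = \alpha/(1+\alpha t)$ coming from Gaussian convolution; on a manifold the absence of translation invariance forces a $\Gamma_2$-calculus approach under a CD$(\kappa,\infty)$ condition, and extracting the correct effective time $(e^{\kappa t}-1)/\kappa$ takes care. A secondary technical point is the rigorous construction of the reverse-time diffusion on $M$: the score $\grad\log\pi^{X}_t$ must be well-defined away from the cut locus, and the reverse SDE must correctly interchange $\pi^{X}$ and $\pi^{Y}$ for the symmetric contraction in the RHK step to go through.
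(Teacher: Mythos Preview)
Your approach matches the paper's: simultaneous heat-flow analysis with the Riemannian de Bruijn identity for both the forward (MBI) and time-reversed (RHK) half-steps, together with LSI propagation along the heat semigroup via Bakry--\'Emery gradient commutation under the Ricci lower bound $\kappa$. Two small corrections are in order. First, the propagated LSI constant should be $\alpha_t = \alpha e^{\kappa t}/(1+\alpha T_\kappa(t))$ rather than $\alpha/(1+\alpha T_\kappa(t))$: the extra $e^{\kappa t}$ factor arises from the commutation estimate $\sqrt{\Gamma(P_t f)}\le e^{-\kappa t/2}P_t\sqrt{\Gamma(f)}$ when it is fed through the LSI bound, and it is exactly what makes $\int_0^\eta \alpha_t\,dt = \log(1+\alpha T_\kappa(\eta))$ hold; with your stated $\alpha_t$ this integral does not evaluate to the claimed contraction. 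Second, the last claim for $\eta\le 1/|\kappa|$ needs no new LSI estimate: for $\kappa<0$ the per-half-step factor already obtained equals $\bigl(1+\alpha(1-e^{-|\kappa|\eta})/|\kappa|\bigr)^{-1}$, and the elementary inequality $1-e^{-x}\ge x/2$ for $x\in[0,1]$ compares this directly with $(1+\alpha\eta)^{-1}$ up to a constant (indeed the paper's own argument only establishes a $\Theta$-relation here, so the displayed bound should be read in that sense).
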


Note that the resulting contraction rate depends on the curvature. If the curvature is non-negative, then we can recover the rate in Euclidean space. But in the case of negative curvature, the rate becomes more complicated, and in order to get the contraction rate as in Euclidean space, we need the step size to be bounded above by some curvature-dependent constant. 

The above result provides a high-accuracy guarantee for the Riemannian Proximal Sampler in KL-divergence. To see that, consider the case when the Ricci curvature is non-negative. Note that to achieve $\varepsilon$ accuracy in KL divergence, we need $\frac{H_{\pi^{X}} (\rho_{0}^{X})}{(1 + \eta\alpha)^{2k} } = \varepsilon$. Taking $\log$ on both sides, we get 
    $k = \mathcal{O}(\frac{\log (H_{\pi^{X}} (\rho_{0}^{X})/\varepsilon)}{\log (1 + \eta\alpha)}) $.
    For small step size $\eta$, we have $\frac{1}{\log(1 + \eta \alpha)} = \mathcal{O}(\frac{1}{\eta \alpha})$. 
    Hence $k = \mathcal{O}(\frac{1}{\eta \alpha} \log \frac{H_{\pi^{X}} (\rho_{0}^{X})}{\varepsilon}) = \tilde{\mathcal{O}}(\frac{1}{\eta}\log \frac{1}{\varepsilon})$. As $\eta$ does not depend on $\varepsilon$, we see that we need $\tilde{\mathcal{O}}(\log \frac{1}{\varepsilon})$ number of iterations.

There are several challenges in obtaining the aforementioned result for the Riemannian Proximal Sampler. In Euclidean space, when a probability distribution $\pi^{X}$ satisfies $\alpha$-$\mathsf{LSI}$, 
its propagation along heat flow $\pi^{X} * \mathcal{N}(0, tI_{d})$ satisfies $\alpha_{t}$-$\mathsf{LSI}$, 
with $\alpha_{t} = \frac{\alpha}{1 + \alpha t}$. 
This fact is very important and leveraged in~\cite{chen2022improved} for proving their convergence rates. A quantitative generalization of such a fact for Riemannian manifolds is not immediate and we establish the required results in Appendix \ref{LSI_Heat}, following \cite{collet2008logarithmic}, under the required Ricci curvature assumptions.


\subsection{Rates with Inexact Oracles}\label{Sec_Inexact_oracle}
Recall that Algorithm \ref{Manifold_Proximal_Sampler_Ideal} is an idealized algorithm, 
where we assumed the availability of the MBI and RHK oracles. Note that given $x \in M$, exact MBI oracle generate samples $y \sim \pi_{\eta}^{Y|X}(\cdot|x)$. 
And given $y \in M$, exact RHK generate samples $x \sim \pi_{\eta}^{X|Y}(\cdot|y)$. In practice, exactly implementing these oracles could be computationally expensive or even impossible. For the Euclidean case, we emphasize that, as the heat kernel has an explicit closed form density (which is the Gaussian), prior works, for example,~\cite {fan2023improved}, only consider inexact Restricted Gaussian Oracles and control the propagated error along iterations.

In this section, we derive rates of convergence in the setting where both the MBI and RHK oracles are implemented inexactly. Specifically, we assume we are able to approximately implement the MBI oracle by generating $y \sim \hat{\pi}_{\eta}^{Y|X}(\cdot|x)$, 
and approximately implement the RKH oracle by generating $x \sim \hat{\pi}_{\eta}^{X|Y}(\cdot|y)$, see Assumption \ref{Assumption_Oracle_TV_quality} below.

\begin{assumption}\label{Assumption_Oracle_TV_quality}
    Denote the output of exact RHK oracle as $\pi_{\eta}^{X|Y}(\cdot|y)$ and inexact RHK oracle as $\hat{\pi}_{\eta}^{X|Y}(\cdot|y)$. Similarly, denote the output of exact MBI oracle as $\pi_{\eta}^{Y|X}(\cdot|x)$ and inexact MBI oracle as $\hat{\pi}_{\eta}^{Y|X}(\cdot|x)$. 
    Let $\zeta_{\mathsf{RHK}}$ and $\zeta_{\mathsf{MBI}}$ be the desired accuracy. We assume that, for inverse step size $\eta^{-1} = \tilde{\mathcal{O}}(\log \frac{1}{\zeta}) $, the RHK and MBI oracle implementations can achieve respectively $\|\hat{\pi}_{\eta}^{X|Y}(\cdot|y) - \pi_{\eta}^{X|Y}(\cdot|y)\|_{TV} \le \zeta_{\mathsf{RHK}}, \forall y$,
    and $\|\hat{\pi}_{\eta}^{Y|X}(\cdot|x) - \pi_{\eta}^{Y|X}(\cdot|x)\|_{TV} \le \zeta_{\mathsf{MBI}}, \forall x$. We then let $\zeta\coloneqq\max\{\zeta_{\mathsf{RHK}},\zeta_{\mathsf{MBI}} \}$.
\end{assumption}

The need for assuming the step size satisfies $\eta^{-1} = \tilde{\mathcal{O}}(\log \frac{1}{\zeta})$ for the approximation quality is as follows. Recall from the discussion below Theorem~\ref{Main_Theorem} that the complexity of Riemannian Proximal Sampler depends on the step size as $\mathcal{O}(\frac{1}{\eta})$. Thus if $\eta$ became too small, for example $\eta^{-1} = \mathcal{O}\left(\frac{1}{\varepsilon}\right)$, then the overall complexity would be $\Poly(\frac{1}{\varepsilon})$, which is not a high-accuracy guarantee.

We also briefly explain the intuition in assuming total variation distance error bound in oracle quality, 
and postpone the detailed discussion to Section \ref{Section_Oracle}.
To guarantee a high quality oracle, we need a high quality approximation of heat kernel. 
As mentioned previously, a popular method is through truncation of infinite series. 
Theoretically, the $L_{2}$ truncation error can be bounded for compact manifold \citep{azangulov2022stationary}, 
which says that the difference between the heat kernel and the approximation of heat kernel are close. 
This naturally imply an error bound in total variation distance, 
which motivates us to consider the propagated error in total variation distance. 

We first start with a result quantifying the error propagated along iterations, under the availability of inexact oracles. The proof of the following result is provided in Appendix \ref{Sec_Proof_Inexact_Theorem}.
\begin{lemma}\label{Lemma_Propagation_Error_TV}
    Let $\rho_{k}^{X}$ denote the law of $X$ through exact oracle implementation of Algorithm \ref{Manifold_Proximal_Sampler_Ideal}, 
    and $\tilde{\rho}_{k}^{X}$ denote the law of $x$ through inexact oracle implementation of Algorithm \ref{Manifold_Proximal_Sampler_Ideal}.
    Under Assumption \ref{Assumption_Oracle_TV_quality}, we have $\|\rho_{k}^{X}(x) - \tilde{\rho}_{k}^{X}(x)\|_{TV} \le k (\zeta_{\mathsf{RHK}} + \zeta_{\mathsf{MBI}})$.
\end{lemma}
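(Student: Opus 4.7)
The plan is to prove this by a standard TV error-propagation argument based on two facts: the data-processing inequality for total variation (TV is non-increasing under application of a Markov kernel) and the fact that when we push two distributions through slightly different kernels, the TV error gained at each step is uniformly bounded by the conditional TV error of the kernels. The proof structure will be a one-step bound followed by induction over $k$.

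First, I would encode one iteration of the sampler as a Markov kernel on $M$: let $P$ denote the exact composition $x \mapsto y \sim \pi^{Y|X}(\cdot|x) \mapsto x' \sim \pi^{X|Y}(\cdot|y)$, and let $\widehat P$ denote the inexact composition using $\hat\pi^{Y|X}$ followed by $\hat\pi^{X|Y}$. Then $\rho_{k+1}^X = \rho_k^X P$ and $\tilde\rho_{k+1}^X = \tilde\rho_k^X \widehat P$. By the triangle inequality,
\begin{equation*}
\|\rho_{k+1}^X - \tilde\rho_{k+1}^X\|_{TV} \le \|\rho_k^X P - \tilde\rho_k^X P\|_{TV} + \|\tilde\rho_k^X P - \tilde\rho_k^X \widehat P\|_{TV},
\end{equation*}
and the first term is at most $\|\rho_k^X - \tilde\rho_k^X\|_{TV}$ by the data-processing inequality. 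Hence it suffices to prove the per-step bound
\begin{equation*}
\|\tilde\rho_k^X P - \tilde\rho_k^X \widehat P\|_{TV} \le \zeta_{\mathsf{MBI}} + \zeta_{\mathsf{RHK}}.
\end{equation*}

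For this one-step bound I would interpolate through the ``exact MBI, inexact RHK'' kernel. Writing $\mu = \tilde\rho_k^X$, I insert $\mu \otimes \pi^{Y|X}$ composed with $\hat\pi^{X|Y}$ as an intermediate distribution on $M$ (after marginalizing out $Y$) and again apply the triangle inequality. The ``RHK error'' piece becomes $\|\mu\pi^{Y|X}\pi^{X|Y} - \mu\pi^{Y|X}\hat\pi^{X|Y}\|_{TV}$, which by joint/marginal TV is at most $\int \bigl\|\pi^{X|Y}(\cdot|y) - \hat\pi^{X|Y}(\cdot|y)\bigr\|_{TV} d(\mu\pi^{Y|X})(y) \le \zeta_{\mathsf{RHK}}$ by Assumption \ref{Assumption_Oracle_TV_quality}. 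The ``MBI error'' piece becomes $\|\mu\pi^{Y|X}\hat\pi^{X|Y} - \mu\hat\pi^{Y|X}\hat\pi^{X|Y}\|_{TV}$, which by data-processing is at most $\|\mu\pi^{Y|X} - \mu\hat\pi^{Y|X}\|_{TV} \le \zeta_{\mathsf{MBI}}$, again by Assumption \ref{Assumption_Oracle_TV_quality} after averaging over $x \sim \mu$.

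Combining gives the recursion $\|\rho_{k+1}^X - \tilde\rho_{k+1}^X\|_{TV} \le \|\rho_k^X - \tilde\rho_k^X\|_{TV} + (\zeta_{\mathsf{MBI}} + \zeta_{\mathsf{RHK}})$. Since both algorithms start from the same initial distribution $\rho_0^X = \tilde\rho_0^X$, a trivial induction on $k$ yields the claimed bound $\|\rho_k^X - \tilde\rho_k^X\|_{TV} \le k(\zeta_{\mathsf{MBI}} + \zeta_{\mathsf{RHK}})$. There is no serious obstacle here: the only subtlety is that the inexact oracles may not yield a genuinely Markovian update in the usual sense, so I want to be careful to phrase everything in terms of conditional TV bounds that hold uniformly in the conditioning variable, which is exactly what Assumption \ref{Assumption_Oracle_TV_quality} supplies.
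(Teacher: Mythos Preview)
Your proposal is correct and follows essentially the same approach as the paper: a one-step recursion obtained by combining the data-processing inequality for TV with the uniform conditional TV bounds from Assumption~\ref{Assumption_Oracle_TV_quality}, followed by induction. The only cosmetic difference is the order in which the triangle inequality is applied: the paper first splits at the $Y$-level (inserting $\hat\rho_{k-1}^{Y}$, the law obtained by applying the \emph{exact} MBI to the inexact state $\tilde\rho_{k-1}^{X}$) and invokes Lemma~\ref{Lemma_TV_heat} for the heat-flow data-processing step, whereas you first absorb the propagation term via $\|\rho_k^X P - \tilde\rho_k^X P\|_{TV}\le \|\rho_k^X - \tilde\rho_k^X\|_{TV}$ and then bound the per-step kernel discrepancy $\|\tilde\rho_k^X P - \tilde\rho_k^X \widehat P\|_{TV}$.
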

Based on this result, we next obtain the following result analogues to Theorem~\ref{Main_Theorem}; the proof is provided in Appendix \ref{Sec_Proof_Inexact_Theorem}.
\begin{theorem}\label{TV_Inexact_BM_Inexact_RHK}
  Similar to Theorem~\ref{Main_Theorem}, let $M$ be a Riemannian manifold without boundary and let $\pi^{X}$ satisfies LSI with constant $\alpha$. Further let Assumption \ref{Assumption_Oracle_TV_quality} hold.
    For any initial distribution $\rho_{0}^{X}$, to reach $\tilde{\mathcal{O}}(\varepsilon)$ 
    total variation distance with oracle accuracy $\zeta = \zeta_{\mathsf{RHK}} = \zeta_{\mathsf{MBI}} = \frac{\varepsilon}{\log^{2} \frac{1}{\varepsilon}}$
    and step size $\frac{1}{\eta} = \tilde{\mathcal{O}}(\log \frac{1}{\varepsilon})$, 
    we need $k = \tilde{\mathcal{O}}(\log^{2} \frac{1}{\varepsilon})$ iterations.
\end{theorem}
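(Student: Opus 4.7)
The plan is to combine the ideal-oracle contraction rate from Theorem \ref{Main_Theorem} with the propagation-error bound from Lemma \ref{Lemma_Propagation_Error_TV} via a triangle inequality, and then balance the parameters $\eta$, $k$, and $\zeta$ in a way that is consistent with the admissibility constraint in Assumption \ref{Assumption_Oracle_TV_quality}. Concretely, I would write
\begin{equation*}
\|\tilde{\rho}_k^X - \pi^X\|_{TV}
\;\le\;
\|\tilde{\rho}_k^X - \rho_k^X\|_{TV} + \|\rho_k^X - \pi^X\|_{TV},
\end{equation*}
and control each term separately. The first term is the ``inexactness error,'' handled directly by Lemma \ref{Lemma_Propagation_Error_TV}, giving the bound $k(\zeta_{\mathsf{RHK}} + \zeta_{\mathsf{MBI}}) \le 2k\zeta$. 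The second term is the ``idealized-sampler error,'' to be controlled by Theorem \ref{Main_Theorem}.

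For the second term, I would apply Pinsker's inequality together with Theorem \ref{Main_Theorem}: in the non-negative Ricci case (and in the negative case provided $\eta \le 1/|\kappa|$, which will hold since we plan to take $\eta$ small),
\begin{equation*}
\|\rho_k^X - \pi^X\|_{TV}^2 \;\le\; \tfrac{1}{2} H_{\pi^X}(\rho_k^X) \;\le\; \frac{H_{\pi^X}(\rho_0^X)}{2(1+\eta\alpha)^{2k}}.
\end{equation*}
Forcing this TV contribution to be at most $\varepsilon$ gives, after taking logarithms and using $\log(1+\eta\alpha) = \Theta(\eta\alpha)$ for small $\eta$, the iteration count $k = \tilde{\mathcal{O}}\!\left(\tfrac{1}{\eta}\log\tfrac{1}{\varepsilon}\right)$, exactly mirroring the derivation immediately following Theorem \ref{Main_Theorem}.

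The main obstacle — and the only place where something subtle happens — is selecting a step size $\eta$ that is simultaneously (i) admissible under Assumption \ref{Assumption_Oracle_TV_quality}, which forces $\eta^{-1} = \tilde{\mathcal{O}}(\log(1/\zeta))$, and (ii) small enough to yield a usable contraction factor while keeping the two TV contributions in balance. With the choice $\zeta = \varepsilon/\log^2(1/\varepsilon)$, Assumption \ref{Assumption_Oracle_TV_quality} permits $\eta^{-1} = \tilde{\mathcal{O}}(\log(1/\zeta)) = \tilde{\mathcal{O}}(\log(1/\varepsilon))$ since the $\log\log$ factors are absorbed by $\tilde{\mathcal{O}}(\cdot)$. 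Substituting this into the expression for $k$ from the previous paragraph yields
\begin{equation*}
k \;=\; \tilde{\mathcal{O}}\!\left(\tfrac{1}{\eta}\log\tfrac{1}{\varepsilon}\right) \;=\; \tilde{\mathcal{O}}(\log^2(1/\varepsilon)).
\end{equation*}
It then remains to check that the inexactness contribution is also $\tilde{\mathcal{O}}(\varepsilon)$:
\begin{equation*}
2k\zeta \;=\; \tilde{\mathcal{O}}\!\left(\log^2(1/\varepsilon)\right)\cdot\frac{\varepsilon}{\log^2(1/\varepsilon)} \;=\; \tilde{\mathcal{O}}(\varepsilon).
\end{equation*}

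Adding the two contributions via the triangle inequality gives $\|\tilde{\rho}_k^X - \pi^X\|_{TV} = \tilde{\mathcal{O}}(\varepsilon)$, completing the argument. I expect the only real care needed is in verifying that the negative-curvature branch of Theorem \ref{Main_Theorem} is also valid for the chosen $\eta$ (i.e., $\eta \le 1/|\kappa|$ eventually holds since $\eta \to 0$ as $\varepsilon \to 0$), and in being explicit about which polylogarithmic factors get absorbed into $\tilde{\mathcal{O}}(\cdot)$ — everything else is a direct parameter substitution into the two preceding results.
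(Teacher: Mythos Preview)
Your proposal is correct and follows essentially the same approach as the paper's proof: triangle inequality to split into an inexactness term (bounded via Lemma~\ref{Lemma_Propagation_Error_TV}) and an idealized-sampler term (bounded via Pinsker plus Theorem~\ref{Main_Theorem}), then the same parameter choices $\zeta = \varepsilon/\log^2(1/\varepsilon)$ and $\eta^{-1} = \tilde{\mathcal{O}}(\log(1/\varepsilon))$ to balance the two contributions. Your additional remark about verifying $\eta \le 1/|\kappa|$ in the negative-curvature case is a nice touch that the paper's proof leaves implicit.
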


\section{Implementation of Inexact Oracles via Heat Kernel Trucation}\label{Section_Oracle}

Theorem~\ref{TV_Inexact_BM_Inexact_RHK} shows that as long we have sufficient accuracy of MBI and RHK oracles satisfying Assumption~\ref{Assumption_Oracle_TV_quality}, we can have a high-accuracy Riemannian sampling algorithm. In this section, we introduce an approximate implementation, based on heat kernel truncation (as introduced in~\ref{prelim}) and rejection sampling. Numerical simulations for this approach are provided in Appendix~\ref{hkimplem}.

First note that for rejection sampling method (in general) there are two key ingredients: a proposal distribution and an acceptance rate. 
Assume we want to generate samples from $\rho$ through rejection sampling.
We choose a suitable proposal distribution denoted as $\mu$, and a suitable scaling constant $K$ 
such that the acceptance rate $K\frac{\rho(x)}{\mu(x)} \le 1, \forall x$.
We generate a random proposal $x \sim \mu$ and $u \in [0, 1]$ being a uniform random number. 
Then we compute $K\frac{\rho(x)}{\mu(x)}$, and accept $x$ if $u \le K\frac{\rho(x)}{\mu(x)}$.

We also introduce the following definition of Riemannian Gaussian distribution, as defined next, which will be used as the proposal distribution in rejection sampling. A Riemannian Gaussian distribution centered at $x^{*}$ with variable $t$ is  $
    \mu(t, x^{*}, x) \propto \mu_{u}(t, x^{*}, x) := \exp\left(-\frac{d(x^{*}, x)^{2}}{2t}\right)$, where $\mu_{u}$ denote an unnormalized version of $\mu$. We use this as our proposal distribution to implement rejection sampling, as exact sampling from such a distribution is  well-studied for certain specific manifolds;
see, for example, \cite{said2017gaussian} for symmetric spaces and \cite{chakraborty2019statistics} for Stiefel manifolds. Furthermore, this notion of a Riemannian Gaussian distribution is also used in the study of differential privacy on Riemannian manifolds due to their practical feasibility~\citep{reimherr2021differential,jiang2023gaussian}. 


\subsection{Implementation of RHK}
 
We first recall the rejection sampling implementation of Restricted Gaussian Oracle (RGO) in the Euclidean setting. Note that, we have $\log \nu_{u}(\eta, x, y_{k}) = -\frac{1}{2\eta} \|x - y_{k}\|^{2}$, 
where $\nu_{u} = \exp(-\frac{1}{2\eta} \|x - y_{k}\|^{2})$ is an unnormalized heat kernel (or the Gaussian density) in Euclidean space. 
Then we have $\pi_{\eta}^{X|Y}(\cdot, y_{k}) \propto e^{-f(x) - \frac{1}{2\eta} \|x - y_{k}\|^{2}} $. 
Then, the RGO is implemented through rejection sampling. Specifically, we can first find the minimizer 
$ x^{*} \in \argmin_{x} f(x) + \frac{1}{2\eta} \|x - y_{k}\|^{2} $. 
Note that the minimizer represents the mode of $\pi_{\eta}^{X|Y}(\cdot, y_{k})$.
We can then sample a Gaussian proposal $x_{p} \sim \mathcal{N}(x^{*}, t I_{d})$ 
for suitable $t$ centered at the mode $x^{*}$ and perform rejection sampling.
For more details, see, for example, \cite{chewi2023log}.

On a Riemannian manifold with $\nu$ denoting the heat kernel, to sample from $\pi_{\eta}^{X|Y}(\cdot, y_{k}) \propto e^{-f(x)} \nu(\eta, x, y_{k})$ through rejection sampling, we need evaluations of $f(x) - \log \nu(\eta, x, y_{k}) $. But in general, we cannot evaluate the heat kernel exactly, hence we seek for certain heat kernel approximations. Hence, we use the truncated heat kernel $\nu_{l}$ to replace $\nu$, 
and perform rejection sampling, see Algorithm \ref{Inexact_Rejection_Sampling}.
In the rejection sampling algorithm, as mentioned previously, we use a Riemannian Gaussian distribution as the proposal for rejection sampling. 
We choose suitable step size $\eta$ and $t$ that depends on $\eta$ s.t. $g(x) - g(x^{*}) \ge \frac{1}{2t}d(x, x^{*})^{2}$. 
Such an inequality can guarantee that the acceptance rate (with Riemannian Gaussian distribution $\mu(t, x^{*}, x)$ as proposal) would not exceed one, i.e., $\frac{\exp(-g(x) + g(x^{*}))}{\mu_{u}(t, x^{*}, x)} \le 1, \forall x$. Then we see that the output of rejection sampling would follow $\hat{\pi}_{\eta}^{X|Y}(x|y_{k}) \propto \exp(f(x) - \log \nu_{l}(\eta, x, y_{k})) $. Similarly, to implement the MBI oracle, we also use rejection sampling to get a high-accuracy approximation. Specifically, Algorithm~\ref{Inexact_BM} generates inexact Brownian motion starting from $x$ with time $\eta$.

\begin{algorithm}[t]
    \begin{algorithmic}
    \STATE Find the minimizer of $g(x) := f(x) - \log \nu_{l}(\eta, x, y_{k})$, denote as $x^{*}$.
    \STATE Set suitable $t$ and constant $C_{\mathsf{RHK}}$ s.t. $V_{\mathsf{RHK}}(x) := \frac{\exp(-g(x) + g(x^{*}) + C_{\mathsf{RHK}})}{\exp(-\frac{1}{2t} d(x, x^{*})^{2})} \le 1, \forall x \in M$
    \FOR{$i=0, 1,2,...$}
    \STATE Generate proposal $x \sim \mu(t, x^{*}, \cdot)$.
    \STATE Generate $u$ uniformly on $[0, 1]$. 
    \STATE Return $x$ if $u \le V_{\mathsf{RHK}}(x)$
    \ENDFOR
    \end{algorithmic}
    \caption{RHK through Rejection Sampling}
    \label{Inexact_Rejection_Sampling} 
\end{algorithm}


\begin{algorithm}[t]
    \begin{algorithmic}
    \STATE Set suitable $t$ and $C_{\mathsf{MBI}}$
    so that $V_{\mathsf{MBI}}(y) := \frac{\exp(\log \nu_{l}(\eta, x, y) - \log \nu_{l}(\eta, x, x) + C_{\mathsf{MBI}})}{\exp(-\frac{d(x, y)^{2}}{2t})} \le 1, \forall y \in M$
    \FOR{$i=0, 1,2,...$}
    \STATE Generate proposal $y \sim \mu(t, x, \cdot)$.
    \STATE Generate $u$ uniformly on $[0, 1]$. 
    \STATE Return $y$ if $u \le V_{\mathsf{MBI}}(y)$
    \ENDFOR
    \end{algorithmic}
    \caption{MBI through Rejection Sampling}
    \label{Inexact_BM} 
\end{algorithm}

\subsection{Verification of Assumption \ref{Assumption_Oracle_TV_quality}}
We now show that Assumption \ref{Assumption_Oracle_TV_quality} is satisfied for the aforementioned inexact implementation of the Riemannian Proximal Sampler. To do so, we specifically consider the case when the manifold $M$ is compact and is a homogeneous space. Recall that $\nu_{l}$ denote the truncated heat kernel with truncation level $l$. Roughly speaking, a homogeneous space is a manifold that has certain symmetry, including Stiefel manifold, Grassmann manifold, hypersphere, and manifold of positive deﬁnite matrices. 

\begin{proposition}\label{Prop_Verify_Assumption}
    Let $M$ be a compact manifold. Assume further that $M$ is a homogeneous space. 
    With truncation implementation of inexact oracles, 
    in order for Assumption \ref{Assumption_Oracle_TV_quality} to be satisfied
    with $\zeta = \frac{\varepsilon}{\log^{2} \frac{1}{\varepsilon}}$,
    we need truncation level $l$ to be of order $\textrm{polylog}({1}/{\varepsilon})$.
\end{proposition}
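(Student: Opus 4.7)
The plan is to convert a pointwise truncation bound for the heat kernel on a compact homogeneous space into a total-variation bound between the exact and rejection-sampled conditionals, and then invert that bound for $l$. First I would exploit the spectral decomposition $\nu(\eta,x,y)=\sum_{i\ge 0} e^{-\lambda_i\eta/2}\phi_i(x)\phi_i(y)$, where $0=\lambda_0\le\lambda_1\le\cdots$ are the eigenvalues of $-\Delta$ and $\{\phi_i\}$ an orthonormal eigenbasis. The homogeneity of $M$ is essential here: the addition formula collapses the eigenspace contributions into zonal functions, which together with the uniform dimension estimates and Weyl's law $\lambda_i\asymp i^{2/d}$ (cited from Azangulov et al., 2022, for homogeneous spaces) yields an explicit uniform tail bound
\begin{equation*}
\sup_{x,y\in M}\bigl|\nu(\eta,x,y)-\nu_l(\eta,x,y)\bigr| \;\le\; C_1 \exp\!\bigl(-c_1\, l^{2/d}\,\eta\bigr),
\end{equation*}
for constants $C_1,c_1$ depending only on $M$ (and absorbing polynomial prefactors in $l$ into a slightly smaller $c_1$).

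Next I would translate this into TV distance. For the RHK step, write $\pi_\eta^{X|Y}(x|y)=u(x)/Z$ with $u(x)=e^{-f(x)}\nu(\eta,x,y)$ and the inexact target as $\hat u(x)/\hat Z$ with $\hat u(x)=e^{-f(x)}\nu_l(\eta,x,y)$. A standard normalization identity gives
\begin{equation*}
\bigl\|\pi_\eta^{X|Y}(\cdot|y)-\hat\pi_\eta^{X|Y}(\cdot|y)\bigr\|_{TV} \;\le\; \frac{2\,\|u-\hat u\|_{L^1(dV_g)}}{Z}.
\end{equation*}
Since $M$ is compact and $f$ is continuous, $\|f\|_\infty<\infty$ and $Z$ is bounded below by a positive constant independent of $y$; combining with Step 1 gives $\|\pi_\eta^{X|Y}-\hat\pi_\eta^{X|Y}\|_{TV}\le C_2 \exp(-c_1 l^{2/d}\eta)$, and the same estimate (with $f\equiv 0$ effectively) controls the MBI step. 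To meet Assumption \ref{Assumption_Oracle_TV_quality} with $\zeta=\varepsilon/\log^2(1/\varepsilon)$ and $\eta^{-1}=\tilde{\mathcal O}(\log(1/\zeta))$, it suffices to impose $c_1 l^{2/d}\eta\ge \log(C_2/\zeta)$, i.e.,
\begin{equation*}
l \;\ge\; \bigl(c_1^{-1}\,\eta^{-1}\,\log(C_2/\zeta)\bigr)^{d/2} \;=\; \widetilde{\mathcal O}\!\bigl(\log^{d}(1/\varepsilon)\bigr) \;=\; \text{polylog}(1/\varepsilon),
\end{equation*}
as claimed.

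The main obstacle is securing the uniform pointwise truncation bound in Step 1: on a generic compact manifold eigenfunctions need not enjoy clean sup-norm control, so the argument really does depend on the homogeneous-space structure and on Weyl's law to turn the geometric series of eigenmode contributions into a stretched-exponential tail. A secondary technical point is that rejection sampling with $\nu_l$ in place of $\nu$ requires nonnegativity of the truncated kernel on the support of the proposal; this must be arranged either by taking $l$ large enough that the leading strictly positive contribution dominates the oscillating tail (a consequence of the same uniform bound) or, if needed, by replacing $\nu_l$ with $\max\{\nu_l,0\}$ and absorbing the resulting $O(e^{-c_1 l^{2/d}\eta})$ discrepancy into $C_2$.
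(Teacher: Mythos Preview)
Your argument is correct, but it takes a different route from the paper's, and the difference is worth noting.

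The paper does not establish a uniform $L^\infty$ truncation bound for general compact homogeneous spaces. Instead it quotes an $L^2$ bound $\|\nu-\nu_l\|_{L^2(M\times M)}^2 \le C' l\,\eta^{-2}\exp(-\eta^2 l^{2/d}/C)$ from \citet{azangulov2022stationary}, passes to $L^1(M\times M)$ by Cauchy--Schwarz/Jensen (finite volume), and then uses homogeneity in the form of \emph{stationarity} of $\nu$ and $\nu_l$ to conclude that $\int_M|\nu(\eta,x,y)-\nu_l(\eta,x,y)|\,dV_g(x)$ is independent of $y$ and hence controlled by the product $L^1$ norm. From there the TV bounds for both oracles follow by the same normalization argument you wrote down. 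So in the paper, homogeneity enters only as stationarity; in your proof it enters through the addition formula to get pointwise control of the eigenspace kernels.

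Your $L^\infty$ route is sound---on a compact homogeneous space the reproducing kernel $K_\lambda(x,y)=\sum_{\lambda_i=\lambda}\phi_i(x)\phi_i(y)$ satisfies $|K_\lambda(x,y)|\le d_\lambda/\mathrm{Vol}(M)$ by Cauchy--Schwarz and the constant-diagonal property, and Weyl's law then gives the stretched-exponential tail you state. This is in fact the approach the paper takes \emph{only} for the sphere $\mathcal S^d$ in a separate subsection, where the $L^\infty$ bound is needed for the finer ``inexact rejection sampling'' analysis. Two small caveats: first, the result you attribute to \citet{azangulov2022stationary} is an $L^2$ bound, not the $L^\infty$ bound you claim, so you should either derive the $L^\infty$ bound yourself from the addition formula (as you sketch) or cite a different source; second, the paper's exponent carries $\eta^2$ rather than $\eta$, which changes the polylog degree but not the polylog conclusion. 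What your approach buys is a stronger intermediate estimate and a more self-contained argument; what the paper's approach buys is modularity---it only needs an off-the-shelf $L^2$ bound plus the single fact that the kernel is stationary.
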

\textbf{Sketch of proof:} We briefly mention the idea of proof. 
\citet[Proposition 21]{azangulov2022stationary} provided an $L_{2}$ bound on the truncation error, and by Jensen's inequality 
we get an $L_{1}$ bound as desired.
With truncation level $l$ to be of order $\Poly (\log \frac{1}{\varepsilon})$, 
we can achieve $\int_{M} |\nu(\eta, x, y) - \nu_{l}(\eta, x, y)| dV_{g}(x) = \tilde{\mathcal{O}}(\zeta)$. See Proposition \ref{Prop_truncation_1} and Proposition \ref{Prop_truncation_level} for a complete proof.

\begin{remark} 
In Appendix \ref{Subsection_inexact_rej}, we show that on hypersphere $\mathcal{S}^{d}$, when the acceptance rate $V$ in rejection sampling would possibly exceed $1$ in some unimportant region, Assumption \ref{Assumption_Oracle_TV_quality} still holds, via explicit computations.
\end{remark}

When $M$ is not a homogeneous space, to the best of our knowledge, it is unknown how to implement the truncation method. Exploring this direction to further extend the above result is an interesting direction for future work.


\section{Implementation via Varadhan's Asymptotics and Connection to Entropy-Regularized JKO Scheme}\label{Section_Proximal_point_approximation}

In this section, we consider yet another approximation scheme for implementing Algorithm \ref{Manifold_Proximal_Sampler_Ideal}, motivated by its connection with the proximal point method in optimization, where the latter is in the sense of optimization over Wasserstein space\footnote{If $M$ is a smooth compact Riemannian manifold then the Wasserstein space $\mathcal{P}_2(M)$ is the
space of Borel probability measures on $M$, equipped with the Wasserstein metric $W_2$.  We refer the reader to~\cite{villani2021topics} for background on Wasserstein spaces.}~\citep{jordan1998variational,wibisono2018sampling,chen2022improved}. Note that the proximal point method is usually called as the JKO scheme after the authors of~\cite{jordan1998variational}.

Specifically, we consider approximating the heat kernel through Varadhan's asymptotics. 
Let $\hat{\nu}(\eta, x, y) \propto_{y} \exp(-\frac{d(x, y)^2}{2\eta}) =: \hat{\nu}_{u}(\eta, x, y)$ be an inexact evaluation of heat kernel. 
According to Varadhan's asymptotics, $\lim_{\eta \to 0} \hat{\nu}(\eta, x, y) = \nu(\eta, x, y)$. 
Hence when $\eta$ is small, $\hat{\nu}$ is a good approximation of the heat kernel. 
Note that $\hat{\nu}(\eta, x, \cdot)$ in Varadhan's asymptotic is exactly the Riemannian Gaussian distribution $\mu(\eta, x, \cdot)$. Denote $\tilde{\pi}(x, y) = \exp(-f(x)-\frac{d(x, y)^2}{2\eta})$. 
With inexact MBI implemented through Riemannian Gaussian distribution and  inexact RHK implemented through rejection sampling (Algorithm~\ref{Inexact_Rejection_Sampling}) to generate $\tilde{\pi}^{X|Y}(x|y) \propto \exp(-f(x) - \frac{d(x, y)^{2}}{2\eta})$,
we obtain Algorithm \ref{Manifold_Proximal_Sampler_Gaussian}.



For the case when $M = \mathcal{S}^{d}$, we prove in Appendix \ref{Subsection_expected_rej} that to sample from $\tilde{\pi}^{X|Y}(x|y)$ through rejection sampling, with suitable parameters, the cost is $\mathcal{O}(1)$ in both dimension $d$ and step size $\eta$. Obtaining similar results for more general manifolds seems non-trivial. Numerical simulations for this approach are provided in Appendix~\ref{vardhanimplem}. Verifying Assumption~\ref{Assumption_Oracle_TV_quality} for this implementation is open.

\begin{algorithm}[t]
    \begin{algorithmic}
    \FOR{$k=0, 1,2,...$}
    \STATE From $x_{k}$, sample $y_{k} \sim \tilde{\pi}^{Y|X}(\cdot, x_{k})$ which is a Riemannian Gaussian distribution. 
    \STATE From $y_{k}$, sample $x_{k+1} \sim \tilde{\pi}^{X|Y}(\cdot, y_{k}) \propto e^{-f(x) - \frac{d(x, y_{k}^{2})}{2\eta}} $ using Algorithm~\ref{Inexact_Rejection_Sampling}.
    \ENDFOR
    \end{algorithmic}
    \caption{Inexact Manifold Proximal Sampler with Varadhan's Asymptotics}
    \label{Manifold_Proximal_Sampler_Gaussian} 
\end{algorithm}

\subsection{RHK as a proximal operator on Wasserstein space}
We first show that the inexact RHK output in Algorithm \ref{Manifold_Proximal_Sampler_Gaussian} can be viewed as a proximal operator on Wasserstein space, generalizing the Euclidean result in~\cite{chen2020fast} to the Riemannian setting. 
Recall that with a function $f$ and $d$ being a distance function, 
$\prox_{\eta f}(y) = \argmin_{x} f(x) + \frac{1}{2\eta} d(x, y)^{2}$.
The (approximated) joint distribution is $\tilde{\pi}(x, y) = \exp(-f(x) - \frac{d(x, y)^{2}}{2\eta})$.
By direct computation we have the following Lemma (proved in Appendix \ref{Proof_Theorem_Gaussian_JKO}). 

\begin{lemma}\label{Lemma_proximal_calculation}
    We have that 
    \begin{equation*}
        \tilde{\pi}^{X|Y = y} 
        = \argmin_{\rho \in \mathcal{P}_{2}(M)} H_{\tilde{\pi}^{X}}(\rho) + \frac{1}{2\eta} W_{2}^{2}(\rho, \delta_{y}) = \prox_{\eta H_{\tilde{\pi}^{X}}} (\delta_{y}),
\end{equation*}
which shows that the ineact RHK implementation is a proximal operator, i.e., $\tilde{\pi}^{X|Y = y} = \prox_{\eta H_{\tilde{\pi}^{X}}} (\delta_{y})$.
\end{lemma}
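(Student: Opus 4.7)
The plan is to reduce the entropy-plus-Wasserstein functional to a single KL divergence against $\tilde\pi^{X|Y=y}$, so that Gibbs' inequality immediately identifies the minimizer. The proof is essentially a one-line Lagrangian/Gibbs calculation once the Wasserstein term is simplified, so the novelty is conceptual (identifying the manifold proximal operator) rather than technical.

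\textbf{Step 1 (collapse the Wasserstein term).} Since one of the marginals is a Dirac, every coupling of $\rho$ with $\delta_y$ is $\rho\otimes\delta_y$, so there is no optimization over transport plans and
\[
W_2^2(\rho,\delta_y)\;=\;\int_M d(x,y)^2\,d\rho(x).
\]
This is a standard fact in the Wasserstein space $\mathcal{P}_2(M)$; I would cite it from \cite{villani2021topics}.

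\textbf{Step 2 (merge into a single KL).} Writing $d\tilde\pi^X\propto e^{-f(x)}\,dV_g$, I would expand
\[
H_{\tilde\pi^X}(\rho)+\frac{1}{2\eta}W_2^2(\rho,\delta_y)
=\int_M\rho(x)\log\rho(x)\,dV_g(x)+\int_M\!\Bigl(f(x)+\tfrac{d(x,y)^2}{2\eta}\Bigr)\rho(x)\,dV_g(x)+\mathrm{const},
\]
where the constant absorbs the normalizer of $\tilde\pi^X$. Let $Z_y:=\int_M e^{-f(x)-d(x,y)^2/(2\eta)}\,dV_g(x)$, which is finite and $\rho$-independent. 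Then $\tilde\pi^{X|Y=y}(x)=Z_y^{-1}e^{-f(x)-d(x,y)^2/(2\eta)}$, and the above display rewrites as
\[
H_{\tilde\pi^X}(\rho)+\frac{1}{2\eta}W_2^2(\rho,\delta_y)
=\int_M\rho(x)\log\frac{\rho(x)}{\tilde\pi^{X|Y=y}(x)}\,dV_g(x)-\log Z_y+\mathrm{const}
=H_{\tilde\pi^{X|Y=y}}(\rho)+C(y),
\]
where $C(y)$ depends only on $y$, not on $\rho$.

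\textbf{Step 3 (apply Gibbs).} By Gibbs' inequality, $H_{\tilde\pi^{X|Y=y}}(\rho)\ge 0$ with equality iff $\rho=\tilde\pi^{X|Y=y}$. Hence the argmin over $\mathcal{P}_2(M)$ of the left-hand side of the display above is attained uniquely at $\rho=\tilde\pi^{X|Y=y}$, which is precisely $\prox_{\eta H_{\tilde\pi^X}}(\delta_y)$ by definition of the proximal operator on Wasserstein space.

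\textbf{Where the subtlety lies.} The computation is formally identical to its Euclidean counterpart in \cite{chen2020fast}, so the main point is bookkeeping of the normalizing constants: one must verify that $Z_y<\infty$ (immediate since $d(\cdot,y)^2\ge 0$ and $e^{-f}$ integrates) and that the additive $y$-dependent constants can be grouped into $C(y)$ without affecting the argmin in $\rho$. Essentially no Riemannian-specific machinery enters beyond the identification $W_2^2(\rho,\delta_y)=\int d(x,y)^2 d\rho$, which is why the Euclidean proof transfers. I expect no serious obstacle; the computation is short and the whole proof should fit in a few lines of Appendix~\ref{Proof_Theorem_Gaussian_JKO}.
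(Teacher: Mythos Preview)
Your proposal is correct and follows essentially the same approach as the paper: both reduce $H_{\tilde\pi^X}(\rho)+\tfrac{1}{2\eta}W_2^2(\rho,\delta_y)$ to $H_{\tilde\pi^{X|Y=y}}(\rho)+C(y)$ via the identity $W_2^2(\rho,\delta_y)=\int d(x,y)^2\,d\rho$ and a multiply-and-divide by $\exp(-d(x,y)^2/(2\eta))$, then invoke nonnegativity of KL. The paper just runs the algebra starting from $H_{\pi^X}(\rho)$ and rearranging, whereas you start from the sum and collapse it, but the computation is the same line read in two directions.
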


\subsection{Connection to Entropy-Regularized JKO Scheme}\label{Section_Approximation_JKO}

Observe that in Algorithm \ref{Manifold_Proximal_Sampler_Gaussian}, the Riemannian Gaussian involves distance square, which naturally relates to Wasserstein distance. Now, recall that for a function $F$ in the Wasserstein space, its Wasserstein gradient flow can be approximated through the following discrete time JKO scheme~\citep{jordan1998variational}:
\begin{equation*}
    \rho_{k+1} = \argmin_{\rho \in \mathcal{P}(\mathbb{R}^{d})} F(\rho) + \frac{1}{2\eta} W_{2}^{2} (\rho, \rho_{k}).
\end{equation*}
It was proved that as $\eta \to 0$, the discrete time sequence $\{\rho_{k}\}$ converge to the Wasserstein gradient flow of $F$.
Later, \cite{peyre2015entropic} proposed an approximation scheme through entropic smoothing of Wasserstein distance:
\begin{equation*}
    \rho_{k+1} = \argmin_{\rho \in \mathcal{P}(\mathbb{R}^{d})} F(\rho) + \frac{1}{2\eta} W_{2, \varepsilon}^{2} (\rho, \rho_{k}),
\end{equation*}
where $W_{2, \varepsilon}$ is the entropy-regularized 2-Wasserstein distance defined by (here $H$ is the negative entropy)
\begin{equation*}
    W_{2, t}^{2}(\rho_{1}, \rho_{2}) = \inf_{\gamma \in \mathcal{C}(\rho_{1}, \rho_{2})} \int d(x, y)^{2} d\gamma(x, y) + t H(\gamma).
\end{equation*}

In Euclidean space,~\cite{chen2022improved} showed that the proximal sampler can be viewed as an entropy-regularized JKO scheme.
We extend such an interpretation to Riemannian manifolds. Specifically, we show that Algorithm \ref{Manifold_Proximal_Sampler_Gaussian} which is an approximation of the exact proximal sampler (Algorithm~\ref{Manifold_Proximal_Sampler_Ideal}), can be viewed as an entropy-regularized JKO as stated in Theorem~\ref{Theorem_Gaussian_JKO} (proved in Appendix \ref{Proof_Theorem_Gaussian_JKO}). Note that on a Riemannian manifold the negative entropy is $H(\gamma) := \int_{M \times M} \gamma \log(\gamma) dV_{g}(x) dV_{g}(y) $.
\begin{theorem}\label{Theorem_Gaussian_JKO}
    Recall that $\pi^{X} \propto e^{-f}$.
    Let $x_{k}, y_{k}, x_{k+1}$ be generated by Algorithm \ref{Manifold_Proximal_Sampler_Gaussian}. 
    Let $\tilde{\rho}_{k}^{X}$, $\tilde{\rho}_{k}^{Y}$ and $\tilde{\rho}_{k+1}^{X}$ be the distribution of $x_{k}, y_{k}, x_{k+1}$, respectively. 
    Then  
    \begin{align*}
            \tilde{\rho}_{k}^{Y} = \argmin_{\chi \in \mathcal{P}_{2}(M)} \frac{1}{2\eta} W_{2, 2\eta}^{2}(\tilde{\rho}_{k}^{X}, \chi) \quad\text{and}\quad
            \tilde{\rho}_{k+1}^{X} = \argmin_{\chi \in \mathcal{P}_{2}(M)} \int f d\chi + \frac{1}{2\eta} W_{2, 2\eta}^{2}(\tilde{\rho}_{k}^{Y}, \chi).
    \end{align*}
\end{theorem}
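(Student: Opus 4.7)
The plan is to treat each of the two identities as an entropic optimal transport problem in which one marginal is fixed and the other is free; the free marginal is then pinned down by the first-order (Lagrangian) conditions of the inner coupling problem. In both cases the joint minimization over $\chi$ and over the coupling collapses to a single minimization over a coupling $\gamma$ on $M \times M$ with one marginal constraint, and computing the optimizer $\gamma^*$ explicitly and reading off its free marginal yields the law prescribed by Algorithm \ref{Manifold_Proximal_Sampler_Gaussian}.

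For the first identity, since $\chi$ is unconstrained, unfolding the definition of $W_{2,2\eta}^2$ and optimizing jointly over $\chi$ and $\gamma \in \mathcal{C}(\tilde{\rho}_k^X, \chi)$ collapses the problem to
\[
\min_{\gamma:\, \gamma_1 = \tilde{\rho}_k^X}\left\{ \frac{1}{2\eta} \int d(x,y)^2\, d\gamma(x,y) + H(\gamma) \right\},
\]
where the $1/(2\eta)$ prefactor in the theorem cancels against the $2\eta$ coefficient inside $W_{2,2\eta}^2$. Introducing a multiplier $\lambda(x)$ for the $x$-marginal constraint, the stationarity condition yields $\log \gamma^*(x,y) = (\lambda(x)-1)/(2\eta) - d(x,y)^2/(2\eta)$, i.e. $\gamma^*(x,y) = h(x)\, e^{-d(x,y)^2/(2\eta)}$. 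Reimposing the marginal constraint fixes $h(x) = \tilde{\rho}_k^X(x)/Z(x)$ with $Z(x) = \int e^{-d(x,y)^2/(2\eta)}\, dV_g(y)$, so $\gamma^*(x,y) = \tilde{\rho}_k^X(x)\, \tilde{\pi}^{Y|X}(y\mid x)$, which is exactly the joint law of $(x_k, y_k)$ produced by Step 1. The $y$-marginal of $\gamma^*$ is therefore $\tilde{\rho}_k^Y$, proving the first identity.

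For the second identity, for any candidate $\chi$ and $\gamma \in \mathcal{C}(\tilde{\rho}_k^Y, \chi)$ one has $\int f\, d\chi = \int f(x)\, d\gamma(x,y)$, so the problem rewrites as
\[
\min_{\gamma:\, \gamma_2 = \tilde{\rho}_k^Y} \left\{ \int \left[ f(x) + \frac{d(x,y)^2}{2\eta} \right] d\gamma(x,y) + H(\gamma) \right\}.
\]
The analogous Lagrangian calculation, now with a multiplier $\mu(y)$ for the $y$-marginal, yields $\gamma^*(x,y) \propto k(y)\, e^{-f(x) - d(x,y)^2/(2\eta)}$; enforcing $\gamma_2 = \tilde{\rho}_k^Y$ gives $\gamma^*(x,y) = \tilde{\rho}_k^Y(y)\, \tilde{\pi}^{X|Y}(x \mid y)$, whose $x$-marginal is $\tilde{\rho}_{k+1}^X$ by the very definition of Step 2 of Algorithm \ref{Manifold_Proximal_Sampler_Gaussian}.

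The main technical obstacle is justifying this variational calculation rigorously in the manifold setting: existence and uniqueness of the optimizer in the coupling space, finiteness of the entropy along a minimizing sequence, and the validity of the first-order condition obtained by variation against $dV_g$. Compactness of $M$ (or at least sufficient integrability of the Riemannian Gaussian kernel with respect to $dV_g$) keeps the normalizers $Z(x)$ and the one for $\tilde{\pi}^{X|Y}$ finite and strictly positive, and strict convexity of $\gamma \mapsto H(\gamma)$ gives uniqueness; once the product form $\gamma^* \propto e^{a(x)+b(y)}\, e^{-\mathrm{cost}(x,y)}$ (the standard Sinkhorn factorization, now over $(M,g)$ rather than $\mathbb{R}^d$) is established, the identification with $\tilde{\rho}_k^Y$ and $\tilde{\rho}_{k+1}^X$ reduces to a direct comparison of densities.
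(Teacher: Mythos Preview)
Your proposal is correct and is essentially identical to the paper's proof: the paper proves two auxiliary lemmas (one for each step) by exactly the Lagrangian/variational argument you describe---introducing a multiplier for the single marginal constraint, solving the first-order condition to get $\gamma^*(x,y)\propto h(x)\,e^{-d(x,y)^2/(2\eta)}$ (respectively $\propto k(y)\,e^{-f(x)-d(x,y)^2/(2\eta)}$), and then reading off the free marginal---and then assembles them into the theorem in the same way you do. Your write-up simply inlines those lemmas, and your remarks about the rigor of the variational step in fact go slightly beyond what the paper explicitly justifies.
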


\section{Conclusion}

We introduced the \textit{Riemannian Proximal Sampler} for sampling from densities on Riemannian manifolds. By leveraging the Manifold Brownian Increments (MBI) and the Riemannian Heat-kernel (RHK) oracles, we established high-accuracy sampling guarantees, demonstrating a logarithmic dependence on the inverse accuracy parameter (i.e., \(\text{polylog}(1/\varepsilon)\)) in the Kullback-Leibler divergence (for exact oracles) and total variation metric (for inexact oracles). Additionally, we proposed practical implementations of these oracles using heat-kernel truncation and Varadhan’s asymptotics, providing a connection between our sampling method and the Riemannian Proximal Point Method. 

Future works include: (i) characterizing the precise dependency on other problem parameters apart from $\varepsilon$, (ii) improving oracle approximations for enhanced computational efficiency and (iii) extending these techniques to broader classes of manifolds (and other metric-measure spaces). 

\bibliographystyle{abbrvnat}

\bibliography{ref}

\appendix

\clearpage

\section{Simulation Results}\label{sec:sim}

\subsection{Brownian Motion Approximation via Geodesic random walk}\label{georw}

In our experiments, to compare against the Riemannian Langevin Algorithm, we used the geodesic random walk algorithm to simulate the MBI oracle following \cite{cheng2022efficient, de2022riemannian,schwarz2023efficient}; see Algorithm \ref{BM_GRW}. More efficient implementation is a topic of great interest in the literature; see, for example,~\citep{schwarz2023efficient}. 
\begin{algorithm}[H]
    \begin{algorithmic}
    \STATE Input $x \in M, t > 0$.
    \STATE Sample $\xi$ being a Euclidean Brownian increment with time $t$ in the tangent space $T_{x}M$.
    \STATE Output $y = \exp_{x}(\xi)$.
    \end{algorithmic}
    \caption{Approximation of Manifold Brownian Motion Using Geodesic Random Walk} 
    \label{BM_GRW} 
\end{algorithm}

While it is well-known that geodesic random walks converge asymptotically to the Brownian motion on the manifold, non-asymptotic rates of convergence in various metrics of interest is largely unknown. A basic non-asymptotic error bound for geodesic random walk is available in Wasserstein distance (see~\citet[Lemma 7]{cheng2022efficient}). Mixing time results are provided in~\cite{mangoubi2018rapid}. However, such a result is not immediately applicable to establish high-accuracy guarantees for the Riemannian proximal sampler, when the MBI oracle is implemented via geodesic random walk. An important and interesting future work is establishing rates of convergence for geodesic random walk in various metrics of interest so that those results could be leveraged to obtain high-accuracy guarantees for the Riemannian proximal sampler.

\subsection{Numerical Experiments for Algorithms \ref{Inexact_Rejection_Sampling} and~\ref{Inexact_BM}: von Mises-Fisher distribution on Hyperspheres}\label{hkimplem}
In this experiment, we test the performance of Algorithms \ref{Inexact_Rejection_Sampling} and~\ref{Inexact_BM} for sampling from the von Mises-Fisher distribution on hyperspheres and compare it with the Riemannian LMC method. 
In this case, we have $f(x) = - \kappa \mu^{T} x$.
Note that this $f(x)$ has a unique minimizer on $\mathcal{S}^{d}$.
This implies that LSI is satisfied, see \cite[Theorem 3.4]{li2023riemannian}. We demonstrate the performance of our Algorithm on $\mathcal{S}^{2} \subseteq \mathbb{R}^{3}$ with $\mu = (10, 0.1, 2)^{T}$ and $\kappa = 10$, and on $\mathcal{S}^{5}$ with $\mu = (5, 0.1, 2, 1, 1, 1)^{T}$ and $\kappa = 10$. For the purpose of numerical demonstration, we sample the Riemannian Gaussian distribution through rejection sampling. 

To evaluate the performance, we estimate $\mathbb{E}[d(x, x^{*})^{2}]$, where $x^{*}$ is the minimizer of $f$, representing the mode of the distribution, and plot it as a function of iterations. Note that the quantity $\mathbb{E}[d(x, x^{*})^{2}]$ is referred to as Fr{\'e}chet variance \cite{frechet1948elements,dubey2019frechet}. For this, we generate $10000$ samples (by generating samples independently via different runs) and compute $\frac{1}{10000}\sum_{i = 1}^{10000} d(x_{i}, x^{*})^{2}$. We use rejection sampling to generate unbiased samples and get an estimation of the true value. 
Due to the biased nature of the Riemannian LMC method, to achieve a high accuracy we need a small step size. Contrary to the Riemanian LMC method, the proximal sampler is unbiased, and it can achieve an accuracy while using a large step size and a smaller number of iterations; see Figures~\ref{fig:figures}-(a) and (b). 

\begin{figure}
\centering
\subfigure
[\texorpdfstring{$\mathcal{S}^{2}$}{S}, \texorpdfstring{$f(x)=\exp(\kappa \mu^{T}x)$}{f}]{
        \label{fig_experiment_a}
        \includegraphics[width=0.31\textwidth]{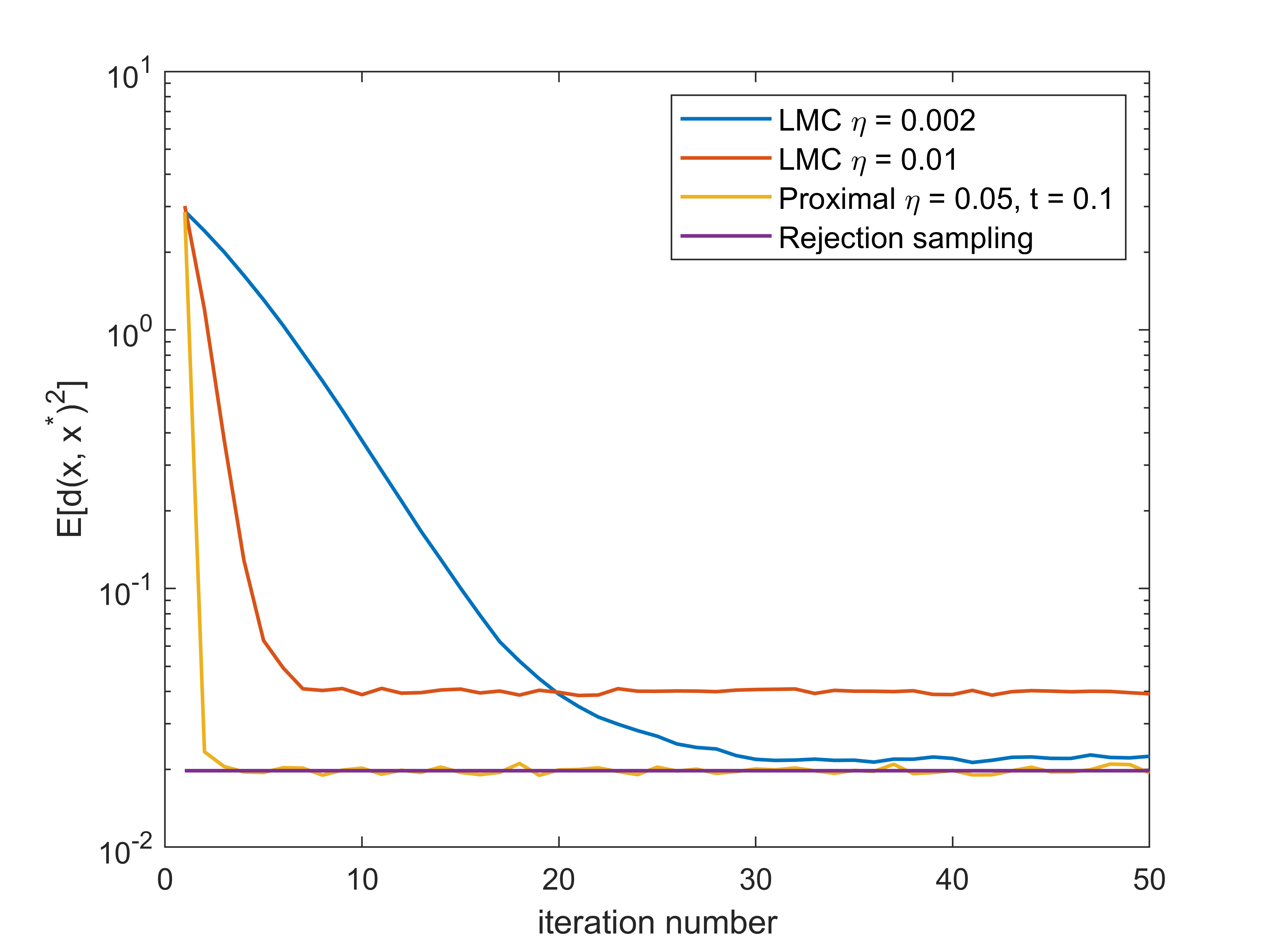}
    }
\subfigure
[\texorpdfstring{$\mathcal{S}^{5}$}{S}, \texorpdfstring{$f(x) = \exp(\kappa \mu^{T}x)$}{f}]{
        \label{fig_experiment_b}
        \includegraphics[width=0.31\textwidth]{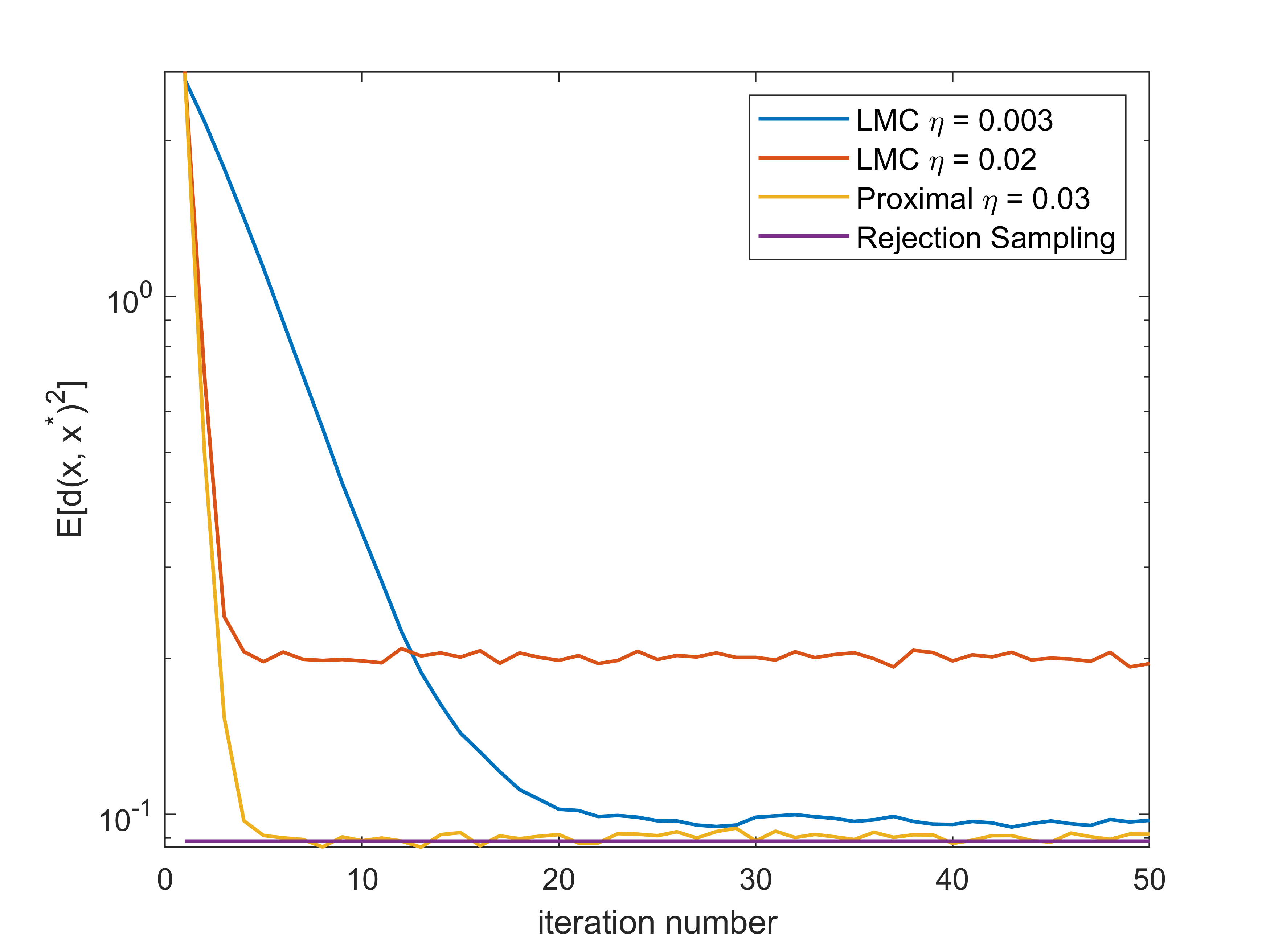}
    }
\subfigure
[\texorpdfstring{$P_{3}$}{P}, \texorpdfstring{$f(X)=\exp(\frac{-d(X, I)^{4}}{2\sigma^{2}})$}{f}, ]{
        \label{fig_experiment_c}
        \includegraphics[width=0.31\textwidth]{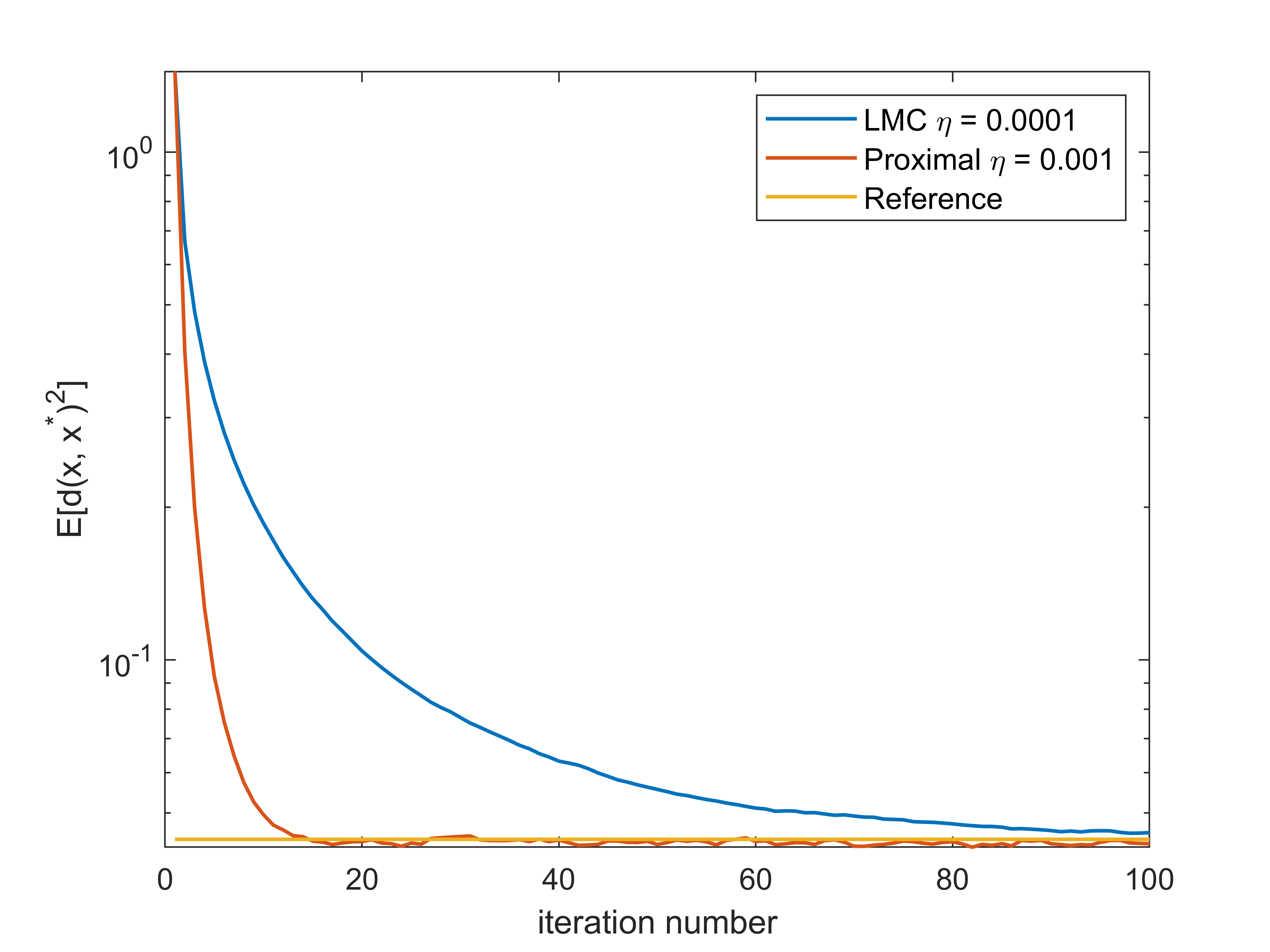}
    }   
\caption{Frech\'et variance (i.e., \texorpdfstring{$\mathbb{E}[d(x, x^{*})^{2}]$}{E} versus number of iterations. Left and Middle figure correspond to the implementation via Algorithm~\ref{Inexact_Rejection_Sampling} and~\ref{Inexact_BM}. Right figure corresponds to implementation via  Algorithm~\ref{Manifold_Proximal_Sampler_Gaussian}. }
\label{fig:figures}
\end{figure}

\subsection{Numerical Experiments for Algorithm~\ref{Manifold_Proximal_Sampler_Gaussian}: Manifold of Positive Definite Matrices}\label{vardhanimplem}

In this subsection we illustrate the performance of Algorithm~\ref{Manifold_Proximal_Sampler_Gaussian}  for sampling on the manifold of positive definite matrices.
Let $P_{m} = \{ X \in GL(m): X^{T} = X \quad\text{and} \quad y^{T} X y > 0, \forall y \in \mathbb{R}^{m} \}$
be the set of symmetric positive definite matrices. 
According to \citet[Section 6.2]{bharath2023sampling}, we can choose $g(U, V) = \tr(X^{-1}UX^{-1}V)$ 
and make $(P_{m}, g)$ a Riemannian manifold. 
It is a non-compact manifold with non-positive sectional curvature, geodesically complete 
and is a homogeneous space of general linear group $GL(m)$. Additional details are provided in Appendix~\ref{psddetails}.

We test the performance of Algorithm~\ref{Manifold_Proximal_Sampler_Gaussian} when the potential function $f(X) = \frac{1}{2\sigma^{2}} d(X, I_{m})^{4}$, $m = 3, \sigma = 0.03$, following~\cite{bharath2023sampling}. Note that $f$ is not gradient Lipschitz. In the Figure~\ref{fig:figures}-c, we estimate $\mathbb{E}[d(x, x^{*})^{2}]$ and plot it as function of iterations, 
where $x^{*} = I_{3}$ is the minimizer of $f$, representing the mode of the distribution. For a baseline comparison, we run Riemannian Langevin Monte Carlo for $200$ iterations with decreasing step size to get a reference value of $\mathbb{E}[d(x, x^{*})^{2}]$, which serves as the true $\mathbb{E}[d(x, x^{*})^{2}]$. Similar to the previous experiment, we generate $1000$ samples from independent run, and compute $\frac{1}{1000}\sum_{i = 1}^{1000} d(x_{i}, x^{*})^{2}$ for each method. For the Riemannian Langevin Monte Carlo method, we find that if we set step size to $0.001$ instead of $0.0001$, after a few iterations the algorithm diverges (potentially due to lack of gradient Lipschitz condition). But for the proximal sampler (which is an unbiased algorithm), even with a large step size as illustrated in the plots, 
the approximation scheme still works well and can achieve a higher accuracy than the Riemannian LMC algorithm. 


\section{Additional Preliminaries}\label{sec:addprelim}

\subsection{Divergence}

We will briefly discuss divergence for the manifold setting. 
More details can be found in \cite{lee2018introduction}.
Recall that in Euclidean space, for a vector field $F = (F_{1}, ..., F_{n})$ in $\mathbb{R}^{n}$, 
divergence of $F$ is defined as $\nabla \cdot F = \sum_{i = 1}^{n} \frac{\partial F_{i}}{\partial x_{i}}$. It has a natural generalization to the manifold setting using interior multiplication and exterior derivative. 

The Riemannian divergence is defined as the function such that $d(i_{X} (dV_{g})) = (\Div X)dV_{g}$, 
where $X$ is any smooth vector field on $M$, $i$ denotes interior multiplication and $d$ denotes exterior derivative. See for example \citet[Appendix B]{lee2018introduction} for more details. 
On a Riemannian manifold, recall the volumn form is 
$dV_{g} = \sqrt{\det(g_{ij})} dx^{1} \wedge ... \wedge dx^{n} $.
Let $Y = \sum_{i = 1}^{n} Y^{i} \frac{\partial}{\partial x_{i}}$. We can compute the interior multiplication as
\begin{align*}
        i_{Y} (dV_{g}) &= \sqrt{\det(g_{ij})}\sum_{j = 1}^{n} ((-1)^{j+1} dx^{j}(Y)) 
        dx^{1} \wedge ... \wedge d\hat{x}^{j} \wedge ... \wedge dx^{n} \\
        &= \sqrt{\det(g_{ij})}\sum_{j = 1}^{n} ((-1)^{j+1} Y^{j}) 
        dx^{1} \wedge ... \wedge d\hat{x}^{j} \wedge ... \wedge dx^{n}.
\end{align*}
We can then compute its exterior derivative as
\begin{align*}
        d(i_{Y} (dV_{g})) 
        &= \sum_{j = 1}^{n} ((-1)^{j+1} \frac{\partial (Y^{j} \sqrt{\det(g_{ij})})}{\partial x_{j}} dx^{j}) 
    dx^{1} \wedge ... \wedge d\hat{x}^{j} \wedge ... \wedge dx^{n} \\
    &= \frac{1}{\sqrt{\det(g_{ij})}}\sum_{j = 1}^{n} \frac{\partial (Y^{j} \sqrt{\det(g_{ij})})}{\partial x_{j}} 
    \sqrt{\det(g_{ij})} dx^{1} \wedge ... \wedge dx^{n} \\
    &= \frac{1}{\sqrt{\det(g_{ij})}}\sum_{j = 1}^{n} \frac{\partial (Y^{j} \sqrt{\det(g_{ij})})}{\partial x_{j}} dV_{g}.
\end{align*}
Hence we get $\Div(Y) = \frac{1}{\sqrt{\det(g_{ij})}}\sum_{j = 1}^{n} \frac{\partial (Y^{j} \sqrt{\det(g_{ij})})}{\partial x_{j}}$.
In Euclidean space, this reduces to $\Div(Y) = \sum_{j = 1}^{n} \frac{\partial Y^{j}}{\partial x_{j}}$.

For $u \in C^{\infty}(M)$ and $X \in \mathfrak{X}(M)$,
the divergence operator satisfies the following product rule 
\begin{align*}
    \Div (uX) = u \Div(X) + \langle \grad u, X\rangle_{g}.
\end{align*}
Furthermore, we have the ``integration by parts'' formula 
(with $\tilde{g}$ denote the induced Riemannian metric on $\partial M$)
\begin{align*}
    \int_{M} \langle \grad u, X \rangle_{g} dV_{g} 
    = \int_{\partial M} u \langle X, N \rangle_{g} dV_{\tilde{g}}
    - \int_{M} u \Div X dV_{g}.
\end{align*}
When $M$ does not have a boundary, $\partial M = \emptyset$. So we have 
\begin{align*}
    \int_{M} \langle \grad u, X \rangle_{g} dV_{g} = - \int_{M} u \Div X dV_{g}.
\end{align*}


\subsection{Normal coordinates}

\paragraph{Riemannian normal coordinates.}
Let $x \in M$. There exist a neighborhood $V$ of the origin in $T_{x}M$
and a neighborhood $U$ of $x$ in $M$ such that the exponential map $\exp_{x}: V \to U$ 
is a diffeomorphism. The set $U$ is called a normal neighborhood of $x$.
Given an orthonormal basis $(z_{i})$ of $T_{x}M$, there is a basis isomorphism from $T_{x}M$ to $\mathbb{R}^{d}$.
The exponential map can be combined with the basis isomorphism to get a smooth coordinate map $\varphi: U \to \mathbb{R}^{d}$. 
Such coordinates are called normal coordinates at $x$. 
Under normal coordinates, the coordinates of $x$ is $0 \in \mathbb{R}^{d}$. For more details see for example \citet[Chapter 5]{lee2018introduction}

\paragraph{Cut locus and injectivity radius.}
Consider $v \in T_{x}M$ and let $\gamma_{v}$ be the maximal geodesic starting at $x$ with initial velocity $v$. 
Denote $t_{cut}(x, v) = \sup \{t > 0: \text{ the restriction of } \gamma_{v} \text{ to } [0, t] \text{ is minimizing}\}$
The cut point of $x$ along $\gamma_{v}$ is $\gamma_{v}(t_{cut}(x, v))$ provided $t_{cut}(x, v) < \infty$.
The cut locus of $x$ is denoted as 
$\Cut(x) = \{q \in M: q \text{ is the cut point of } x \text{ along some geodesic.}\}$.
The injectivity radius at $x$ is the distance from $x$ to its cut locus if the
cut locus is nonempty, and infinite otherwise \cite[Proposition 10.36]{lee2018introduction}.
When $M$ is compact, the injectivity radius is positive \cite[Lemma 6.16]{lee2018introduction}.

\begin{theorem}
    {\cite[Theorem 10.34]{lee2018introduction}}
    Let $M$ be a complete, connected Riemannian manifold and $x \in M$.
    Then 
    \begin{enumerate}
        \item The cut locus of $x$ is a closed subset of $M$ of measure zero.
        \item The restriction of $\exp_{x}$ to $\overline{\ID(x)}$ is surjective.
        \item The restriction of $\exp_{x}$ to $\ID(x)$ is a diffeomorphism onto $M \backslash \Cut(x)$.
    \end{enumerate}
    Here $\ID(x) = \{v \in T_{x}M: |v| < t_{cut}(x, \frac{v}{|v|})\}$ is the injectivity domain of $x$.
\end{theorem}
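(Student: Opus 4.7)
The plan is to prove conclusions (3), (2), (1) in that order, since the diffeomorphism statement is the foundational piece from which the other two follow cleanly. First I would parameterize $T_x M$ radially by the unit sphere $S_x = \{v \in T_xM : |v|=1\}$ together with the scalar $t \geq 0$, and work throughout with the geodesic $\gamma_v(t) = \exp_x(tv)$. The whole argument leans on the classical \emph{conjugate-before-cut} lemma: a conjugate point cannot occur strictly before the cut point along a geodesic. Concretely, if $t_0 \leq t_{cut}(x,v)$ then $t_0 v$ is a regular point of $\exp_x$. This is where I would start, invoking the Jacobi field / second variation comparison to derive the contradiction that a critical point of $\exp_x$ at $t_0 v$ would, via a Jacobi-field variation, produce a strictly shorter broken curve from $x$ to $\gamma_v(t_0 + \varepsilon)$, contradicting minimality slightly past $t_0$.

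Granting this, conclusion (3) follows quickly. Regularity of $\exp_x$ on $\ID(x)$ makes $\exp_x|_{\ID(x)}$ a local diffeomorphism. For injectivity, suppose $\exp_x(v) = \exp_x(w) = y$ with distinct $v, w \in \ID(x)$; both geodesics realize $d(x,y) = |v| = |w|$, so the presence of two distinct minimizing geodesics to $y$ forces $y$ to be a cut point along each of $\gamma_v, \gamma_w$, contradicting $v, w \in \ID(x)$. Conclusion (2) then reduces to Hopf-Rinow: by completeness every $y \in M$ is joined to $x$ by a minimizing geodesic whose initial velocity, scaled by $d(x,y)$, lies in $\overline{\ID(x)}$ and maps to $y$. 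Combining these, $\exp_x(\ID(x)) \subseteq M \setminus \Cut(x)$ is immediate from the definition, while the reverse inclusion follows because any $y \notin \Cut(x)$ satisfies $d(x,y) < t_{cut}(x, v/|v|)$ for its unique minimizing direction $v$.

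For conclusion (1), closedness of $\Cut(x)$ is equivalent to openness of $M \setminus \Cut(x) = \exp_x(\ID(x))$, which follows from (3) once one knows $\ID(x)$ itself is open in $T_xM$. For the measure-zero statement, I would write $\Cut(x) \subseteq \exp_x(\partial\, \ID(x))$, observe that $\partial\, \ID(x)$ is contained in the graph $\{t_{cut}(x,v)\, v : v \in S_x\}$ of a single function on a $(d-1)$-dimensional sphere, hence has $d$-dimensional Lebesgue measure zero in $T_xM$, and conclude by the fact that smooth (hence locally Lipschitz) maps send null sets to null sets. The main obstacle I anticipate is precisely the lower semi-continuity of $v \mapsto t_{cut}(x, v)$ on $S_x$, equivalently openness of $\ID(x)$: this is the one step requiring a genuine compactness/subsequence argument. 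If $v_n \to v$ with $t_{cut}(x,v_n) \to T < t_{cut}(x,v)$, one extracts in the limit either a second minimizing geodesic to $\exp_x(Tv)$ or a conjugate point at $Tv$, and either alternative would force $T \geq t_{cut}(x,v)$, a contradiction. Once this semi-continuity and the conjugate-vs-cut comparison are in hand, everything else is bookkeeping.
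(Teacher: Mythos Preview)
The paper does not prove this theorem at all: it is quoted verbatim as \cite[Theorem 10.34]{lee2018introduction} in the preliminaries appendix and used only as a black box to justify computing integrals over $M$ in normal coordinates. There is therefore no ``paper's own proof'' to compare against.

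That said, your sketch is the standard textbook argument and is essentially correct in outline. A couple of small points worth tightening if you intend this as a self-contained proof: (i) the ``conjugate-before-cut'' step as you phrase it is slightly imprecise---what one actually shows is that the \emph{first} conjugate point cannot occur strictly before $t_{cut}$, and at $t_{cut}$ itself one has the dichotomy (conjugate point or second minimizing geodesic), which is what drives the lower-semicontinuity argument you describe; (ii) for the measure-zero claim, note that $t_{cut}(x,\cdot)$ is in fact continuous (not merely lower semicontinuous) where finite, so $\partial\, \ID(x)$ really is the graph of a continuous function on $S_x$, which is what makes the null-set argument clean.
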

Then for any $p \in M$, under normal coordinates, for all well-behaved $f$, we have 
\begin{align*}
    \int_{M} f dV_{g}
    = \int_{M \backslash \Cut(p)} f dV_{g}
    = \int_{\varphi(M \backslash \Cut(p)) \subseteq \mathbb{R}^{d}} f(\varphi^{-1}(x))\sqrt{\det(g)} dx.
\end{align*}

\subsection{Additional details for manifold of positive definite matrices}\label{psddetails}

We briefly mention some properties of $P_{m}$.
The inverse of the exponential map is globally defined and the cut locus of every point is empty.
For symmetric matrix $S \in \mathbb{R}^{m \times m}$, 
\begin{align*}
        \exp_{X}(tS) &= X^{1/2} \Exp (t X^{-1/2} S X^{-1/2}) X^{1/2}, \\
        \gamma(t) &= X_{1}^{1/2} \Exp (t \Log (X_{1}^{-1/2} X_{2} X_{1}^{-1/2}) ) X_{1}^{1/2} \text{ is a geodesic that connect } X_{1}, X_{2},\\
        d(X_{1}, X_{2}) &= \sqrt{\sum_{i = 1}^{m} (\log (r_{i}))^{2}} \text{ with } r_{i} \text{ being eigenvalues of } X_{1}^{-1} X_{2},\\
        \exp_{X_{1}}^{-1}(X_{2}) &= \gamma'(0) = X_{1}^{1/2} \Log (X_{1}^{-1/2} X_{2} X_{1}^{-1/2}) X_{1}^{1/2}.
\end{align*} 

We have the following fact. 
\begin{lemma}
    Let $\phi(x) = d(x, y)^{2}$ with $y \in M$ being fixed. 
    We have $\grad \phi(x) = -2 \exp_{x}^{-1}(y)$.
\end{lemma}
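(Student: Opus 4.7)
The plan is to compute the directional derivative of $\phi$ in an arbitrary tangent direction via the first variation of arc length/energy, and then identify the gradient from that formula. Since the manifold of positive definite matrices has empty cut locus (mentioned above the statement), the squared-distance function is smooth globally, so the characterization holds on all of $M$; for a general Riemannian manifold the same argument works whenever $y \notin \Cut(x)$.

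First I would fix $v \in T_x M$ and pick a smooth curve $\sigma \colon (-\epsilon,\epsilon) \to M$ with $\sigma(0) = x$ and $\sigma'(0) = v$. For each $s$ near $0$, let $\gamma_s \colon [0,1] \to M$ be the minimizing geodesic from $\sigma(s)$ to $y$, parameterized as $\gamma_s(t) = \exp_{\sigma(s)}\!\bigl(t\exp_{\sigma(s)}^{-1}(y)\bigr)$. Because $y$ is not in the cut locus of $\sigma(s)$ for $s$ close to $0$, this family of geodesics depends smoothly on $s$. Since geodesics have constant speed, we have
\begin{equation*}
d(\sigma(s),y)^{2} \;=\; \lvert \gamma_s'(0)\rvert^{2} \;=\; \int_0^1 \lvert \gamma_s'(t)\rvert^{2}\,dt \;=\; E(\gamma_s),
\end{equation*}
so $\phi(\sigma(s)) = E(\gamma_s)$.

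Next I would apply the first variation of energy formula to this geodesic variation. The variation field $V(t) = \partial_s|_{s=0}\gamma_s(t)$ satisfies the boundary conditions $V(0) = v$ and $V(1) = 0$ (since the endpoint $y$ is held fixed). Because $\gamma_0$ is a geodesic we have $D_t \gamma_0' \equiv 0$, so the standard first-variation identity collapses to
\begin{equation*}
\frac{d}{ds}\bigg|_{s=0} E(\gamma_s) \;=\; 2\langle \gamma_0'(t), V(t)\rangle\Big|_{t=0}^{t=1} \;-\; 2\int_0^1 \langle D_t\gamma_0', V\rangle\,dt \;=\; -2\langle \gamma_0'(0), v\rangle.
\end{equation*}
By construction $\gamma_0'(0) = \exp_x^{-1}(y)$, hence $\langle \grad \phi(x), v\rangle = -2\langle \exp_x^{-1}(y), v\rangle$ for every $v \in T_x M$, which yields $\grad \phi(x) = -2\exp_x^{-1}(y)$ as claimed.

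The only nontrivial point to justify is the smooth $s$-dependence of the family $\gamma_s$ and hence of the variation field $V$; this follows from the smoothness of $\exp^{-1}$ away from the cut locus, which is automatic on $P_m$ since the cut locus is empty there. No obstacle of substance remains; this is essentially the Gauss-lemma-style computation specialized to the squared-distance function, and the factor of $2$ simply propagates from $\phi = d(\cdot,y)^{2}$ rather than the more commonly stated $\tfrac{1}{2} d(\cdot,y)^{2}$.
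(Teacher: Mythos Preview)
Your proof is correct. The paper actually does not supply a proof of this lemma; it is simply stated as a fact in the appendix on positive definite matrices. Your argument via the first variation of the energy functional is the standard route to this identity, and your bookkeeping (constant-speed geodesics give $d(\sigma(s),y)^2 = E(\gamma_s)$, boundary terms $V(0)=v$, $V(1)=0$, the factor of $2$ from using $E$ without the conventional $\tfrac12$) is all handled correctly.
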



\section{Proof of Main Theorem}\label{Sec_Proof_Main_Theorem}

For a given $\phi$, 
define the $\phi$-divergence to be $\Phi_{\pi}(\rho) = \mathbb{E}_{\pi} [\phi(\frac{\rho}{\pi})]$.
Define the following dissipation functional
\begin{align*}
    D_{\pi}(\rho) := \mathbb{E}_{\rho} \left[ \langle \grad (\phi' \circ \frac{\rho}{\pi}) , 
        \grad \log \frac{\rho}{\pi} \rangle\right].
\end{align*}
We can now compute the time derivative of the $\phi$-divergence along certain flow.

Let $\mu_{t}^{X}$ be the law of the continuous-time 
Langevin diffusion with target distribution $\pi^{X} \propto e^{-f(x)}$.
That is, we have the following SDE, $dX_{t} = - \grad f(X_{t}) dt + \sqrt{2} dB_{t}$. Then, $\mu_{t}^{X}$ satisfies the following Fokker-Planck equation (see Lemma \ref{Lemma_SDE_F_P} for a proof).
\begin{align*}
        \frac{\partial}{\partial t} \mu_{t}^{X} 
        &= \Div (\mu_{t}^{X}  \grad f(X_{t}) + \grad \mu_{t}^{X} ) 
        = \Div ( \grad \mu_{t}^{X} - \mu_{t}^{X} \frac{ \grad \pi^{X} }{\pi^{X}} ) \\
        &= \Div ( \mu_{t}^{X} \grad \log \frac{\mu_{t}^{X}}{\pi^{X}}).
\end{align*}
We now show that $D_{\pi^{X}}(\mu_{t}^{X}) = - \partial_{t} \Phi_{\pi^{X}}(\mu_{t}^{X})$.
\begin{lemma}\label{lem:lem1}
    We have that 
    \begin{align*}
        D_{\pi^{X}}(\mu_{t}^{X}) 
        := \mathbb{E}_{\mu_{t}^{X}} [ 
            \langle \grad (\phi' \circ \frac{\mu_{t}^{X}}{\pi^{X}}) , 
            \grad \log \frac{\mu_{t}^{X}}{\pi^{X}} \rangle]
        = - \partial_{t} \Phi_{\pi^{X}}(\mu_{t}^{X}).
    \end{align*}
\end{lemma}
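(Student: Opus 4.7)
The plan is a direct computation via Fokker--Planck and integration by parts on the boundaryless manifold $M$.

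First, I would write the $\phi$-divergence as
\begin{equation*}
\Phi_{\pi^{X}}(\mu_{t}^{X}) \;=\; \int_{M} \phi\!\left(\frac{\mu_{t}^{X}}{\pi^{X}}\right) \pi^{X}\, dV_{g},
\end{equation*}
treating $\pi^{X}$ as time-independent. Differentiating under the integral sign (assumed justified by standard regularity of the Langevin semigroup) and using $\partial_{t}(\mu_{t}^{X}/\pi^{X}) = (\partial_{t}\mu_{t}^{X})/\pi^{X}$, I get
\begin{equation*}
\partial_{t} \Phi_{\pi^{X}}(\mu_{t}^{X}) \;=\; \int_{M} \phi'\!\left(\frac{\mu_{t}^{X}}{\pi^{X}}\right) \partial_{t}\mu_{t}^{X} \, dV_{g}.
\end{equation*}

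Next I would substitute the Fokker--Planck equation recalled just above the lemma, namely $\partial_{t}\mu_{t}^{X} = \Div\!\bigl(\mu_{t}^{X}\,\grad \log \tfrac{\mu_{t}^{X}}{\pi^{X}}\bigr)$, and apply the integration by parts identity stated in the preliminaries (Appendix on divergence). Since $\partial M = \emptyset$, the boundary term drops, giving
\begin{equation*}
\partial_{t} \Phi_{\pi^{X}}(\mu_{t}^{X}) \;=\; -\int_{M} \Bigl\langle \grad\bigl(\phi' \circ \tfrac{\mu_{t}^{X}}{\pi^{X}}\bigr),\; \grad \log \tfrac{\mu_{t}^{X}}{\pi^{X}} \Bigr\rangle_{g} \mu_{t}^{X}\, dV_{g} \;=\; -\,D_{\pi^{X}}(\mu_{t}^{X}),
\end{equation*}
which is the claim.

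The only real obstacle is a regularity/justification issue rather than a computational one: we need enough smoothness and decay (or compactness of $M$) to (i) differentiate $\Phi_{\pi^{X}}(\mu_{t}^{X})$ past the integral, and (ii) apply the manifold integration-by-parts formula to the vector field $X = \mu_{t}^{X} \grad \log (\mu_{t}^{X}/\pi^{X})$ against the scalar $\phi'(\mu_{t}^{X}/\pi^{X})$. I would dispatch this by noting that on a complete manifold without boundary, the heat flow / Langevin semigroup starting from a smooth positive density produces $C^{\infty}$ densities bounded away from zero on compacts, and on a compact $M$ the integration by parts holds with no boundary contribution; in the non-compact case one approximates by a sequence of cutoffs. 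All other steps (the chain rule $\partial_{t}\phi(\mu_{t}^{X}/\pi^{X}) = \phi'(\mu_{t}^{X}/\pi^{X})(\partial_{t}\mu_{t}^{X})/\pi^{X}$ and the cancellation of $\pi^{X}$) are routine.
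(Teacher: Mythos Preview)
Your proposal is correct and is essentially identical to the paper's proof: differentiate $\Phi_{\pi^{X}}(\mu_{t}^{X})=\int_M \pi^{X}\phi(\mu_t^X/\pi^X)\,dV_g$ in $t$, substitute the Fokker--Planck equation $\partial_t\mu_t^X=\Div(\mu_t^X\grad\log(\mu_t^X/\pi^X))$, and integrate by parts using $\partial M=\emptyset$. The paper takes exactly these three steps without the regularity discussion you added.
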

\begin{proof}[Proof of Lemma~\ref{lem:lem1}]
    By using the fact that $\frac{\partial}{\partial t} \mu_{t}^{X} = \Div ( \mu_{t}^{X} \grad \log \frac{\mu_{t}^{X}}{\pi^{X}})$,
    we have 
    \begin{align*}
            \frac{\partial}{\partial t} \Phi_{\pi^{X}}(\mu_{t}^{X})
        =& \frac{\partial}{\partial t} \int_{M} \pi^{X} \phi(\frac{\mu_{t}^{X}}{\pi^{X}}) dV_{g}(x) \\
        =& \int_{M} \phi'(\frac{\mu_{t}^{X}}{\pi^{X}}) \frac{\partial}{\partial t} \mu_{t}^{X} dV_{g}(x) 
        = \int_{M} \Big(\phi'(\frac{\mu_{t}^{X}}{\pi^{X}}) \Big) \Big(\Div ( \mu_{t}^{X} \grad \log \frac{\mu_{t}^{X}}{\pi^{X}})\Big) dV_{g}(x) \\
        =& - \int_{M} \mu_{t}^{X}  \langle \grad \phi' \circ \frac{\mu_{t}^{X}}{\pi^{X}}, \grad  \log \frac{\mu_{t}^{X}}{\pi^{X}} \rangle dV_{g}(x),
    \end{align*}
    where in the last equality we used integration by parts.
\end{proof}

To get more intuition on the notion of $\phi$-divergence and dissipation functional, 
consider $\phi(x) = x \log(x)$. 
We get KL divergence and fisher information:
\begin{align*}
        \Phi_{\pi}(\rho) &= \mathbb{E}_{\pi}[\frac{\rho}{\pi} \log(\frac{\rho}{\pi})] = \mathbb{E}_{\rho} [\log(\frac{\rho}{\pi})] = H_{\pi}(\rho), \\
        D_{\pi}(\rho) &= \mathbb{E}_{\rho} [ 
            \langle \grad (\phi' \circ \frac{\rho}{\pi}) , 
            \grad \log \frac{\rho}{\pi} \rangle]
        = \mathbb{E}_{\rho} [ 
            \|\grad \log \frac{\rho}{\pi}\|^{2}]
        = \mathbb{E}_{\pi}[\frac{\pi}{\rho}\|\grad \frac{\rho}{\pi}\|^{2}]
        = J_{\pi}(\rho).
\end{align*}

Our proof is now based on generalizing the proof in \cite{chen2022improved} to the Riemannian setting. 
In Section \ref{Section_Forward} we analyze the first step of proximal sampler by viewing it as simultaneous (forward) heat flow. In Section \ref{Section_Backward} we analyze the second step of proximal sampler by viewing it as simultaneous backward flow. Combining the two steps together, we prove convergence of proximal sampler under LSI in Section \ref{Section_Conv_LSI}.

\subsection{Forward step: Simultaneous Heat Flow}\label{Section_Forward}

We can first compute the time derivative of the $\phi$-divergence along simultaneous heat flow. 
\begin{lemma}\label{lem:lemma2}
    \label{Lemma_Forward_Riemannian} Define $Q_{t}$ to describe the forward heat flow. 
    Let $\rho^{X}Q_{t}$ and $\pi^{X} Q_{t}$ evolve according to the simultaneous heat flow, satisfying
    \begin{align*}
            \partial_{t} \rho^{X} Q_{t} = \frac{1}{2} \Delta (\rho^{X} Q_{t}) \quad\text{and}\quad
            \partial_{t} \pi^{X} Q_{t} = \frac{1}{2} \Delta (\pi^{X} Q_{t}).
    \end{align*}
    Then $\partial_{t} \Phi_{\pi^{X} Q_{t}} (\rho^{X}Q_{t}) = - \frac{1}{2} D_{\pi^{X} Q_{t}} (\rho^{X} Q_{t})$.
\end{lemma}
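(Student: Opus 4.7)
\textbf{Proof Proposal for Lemma \ref{Lemma_Forward_Riemannian}.}

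The plan is to differentiate $\Phi_{\pi^X Q_t}(\rho^X Q_t)$ in $t$ by the chain rule, substitute the heat equations for both flows, and then apply integration by parts (which incurs no boundary contribution since $\partial M = \emptyset$). The resulting identity will match the explicit form of $D_{\pi^X Q_t}(\rho^X Q_t)$ after rewriting everything in terms of the ratio $\psi := \rho^X Q_t / \pi^X Q_t$. This is the direct Riemannian analogue of the classical Euclidean calculation; the only place the geometry matters is the use of the Riemannian divergence theorem without boundary terms.

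Write $\rho_t = \rho^X Q_t$, $\pi_t = \pi^X Q_t$, and $\psi = \rho_t/\pi_t$. First I would expand
\[
\partial_t \Phi_{\pi_t}(\rho_t) = \int_M \phi(\psi)\,\partial_t \pi_t \,dV_g + \int_M \pi_t\,\phi'(\psi)\,\partial_t \psi \,dV_g,
\]
and use $\partial_t \psi = \partial_t \rho_t/\pi_t - \psi\,\partial_t \pi_t/\pi_t$ to rewrite the second integrand as $\phi'(\psi)\,\partial_t\rho_t - \phi'(\psi)\,\psi\,\partial_t \pi_t$. Substituting $\partial_t\rho_t = \tfrac12 \Delta \rho_t$ and $\partial_t \pi_t = \tfrac12 \Delta \pi_t$ then gives
\[
\partial_t \Phi_{\pi_t}(\rho_t) = \tfrac12\!\int_M \phi(\psi)\,\Delta \pi_t \,dV_g - \tfrac12\!\int_M \phi'(\psi)\,\psi\,\Delta \pi_t \,dV_g + \tfrac12\!\int_M \phi'(\psi)\,\Delta \rho_t \,dV_g.
\]

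Next I would integrate by parts in each term using $\int_M u\,\Delta v\,dV_g = -\int_M \langle \grad u,\grad v\rangle\,dV_g$ (valid because $\partial M = \emptyset$), together with $\grad \phi(\psi) = \phi'(\psi)\grad\psi$, $\grad(\phi'(\psi)\psi) = (\phi''(\psi)\psi + \phi'(\psi))\grad\psi$, and $\grad \rho_t = \pi_t\grad\psi + \psi\grad\pi_t$. Collecting terms, the contributions involving $\langle \grad\psi,\grad\pi_t\rangle$ cancel pairwise, and one is left with
\[
\partial_t \Phi_{\pi_t}(\rho_t) = -\tfrac12 \int_M \pi_t\,\phi''(\psi)\,|\grad\psi|^2 \,dV_g.
\]

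Finally, I would verify that this expression equals $-\tfrac12 D_{\pi_t}(\rho_t)$. Using $\grad(\phi'\circ \psi) = \phi''(\psi)\grad\psi$ and $\grad\log\psi = \grad\psi/\psi$, one computes
\[
D_{\pi_t}(\rho_t) = \int_M \rho_t\,\langle \grad(\phi'\circ\psi),\grad\log\psi\rangle\,dV_g = \int_M \rho_t\,\frac{\phi''(\psi)\,|\grad\psi|^2}{\psi}\,dV_g = \int_M \pi_t\,\phi''(\psi)\,|\grad\psi|^2 \,dV_g,
\]
which matches. The main obstacle here is purely bookkeeping: making sure the three integration-by-parts terms combine correctly after expanding $\grad\rho_t$ in terms of $\grad\psi$ and $\grad\pi_t$, and justifying the absence of boundary contributions (which follows from the assumption $\partial M = \emptyset$ together with standard decay/integrability of the heat-flow iterates).
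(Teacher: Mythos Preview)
Your proposal is correct and follows essentially the same approach as the paper: expand $\partial_t \Phi_{\pi_t}(\rho_t)$ by the chain rule, substitute the heat equations, integrate by parts using $\partial M = \emptyset$, and observe that the cross terms cancel, leaving exactly $-\tfrac{1}{2}D_{\pi_t}(\rho_t)$. The only cosmetic difference is that the paper rewrites $\tfrac{1}{2}\Delta u$ as $\tfrac{1}{2}\Div(u\,\grad\log u)$ before applying the divergence theorem, whereas you keep the Laplacian and use $\int f\,\Delta u = -\int\langle\grad f,\grad u\rangle$ directly; the resulting algebra and cancellations are the same.
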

\begin{proof}[Proof of Lemma~\ref{lem:lemma2}]
    Denote $\rho_{t}^{X} := \rho^{X}Q_{t}$ and $\pi_{t}^{X} := \pi^{X}Q_{t}$. Then, we have
    \begin{align*}
            2 \frac{\partial }{\partial t} \Phi_{\pi_{t}^{X}} (\rho_{t}^{X})
            &= 2 \frac{\partial }{\partial t} \int_{M} \pi_{t}^{X} \phi(\frac{\rho_{t}^{X}}{\pi_{t}^{X}}) dV_{g}(x) \\
            &= 2 \int_{M} \phi(\frac{\rho_{t}^{X}}{\pi_{t}^{X}})\frac{\partial}{\partial t} \pi_{t}^{X} 
            + \phi'(\frac{\rho_{t}^{X}}{\pi_{t}^{X}}) 
            \Big( \frac{\partial }{\partial t} \rho_{t}^{X} - (\frac{\partial}{\partial t} \pi_{t}^{X}) \frac{\rho_{t}^{X}}{\pi_{t}^{X}} \Big) dV_{g}(x).
    \end{align*}
Recall that by construction, 
    $$\partial_{t} \rho_{t}^{X} = \frac{1}{2} \Delta \rho_{t}^{X} = \frac{1}{2} \Div (\grad \rho_{t}^{X}) = \frac{1}{2} \Div (\rho_{t}^{X} \grad \log \rho_{t}^{X})$$
    and $\partial_{t} \pi_{t}^{X} = \frac{1}{2} \Delta \pi_{t}^{X} = \frac{1}{2} \Div (\pi_{t}^{X} \grad \log \pi_{t}^{X})$.
    Hence, we get 
    \begin{align*}
            2 \frac{\partial }{\partial t} \Phi_{\pi_{t}^{X}} (\rho_{t}^{X})
            =& 2 \int_{M} \phi(\frac{\rho_{t}^{X}}{\pi_{t}^{X}})\frac{\partial}{\partial t} \pi_{t}^{X} 
            + \phi'(\frac{\rho_{t}^{X}}{\pi_{t}^{X}}) 
            \Big( \frac{\partial }{\partial t} \rho_{t}^{X} - (\frac{\partial}{\partial t} \pi_{t}^{X}) \frac{\rho_{t}^{X}}{\pi_{t}^{X}} \Big) dV_{g}(x)\\
            =& \int_{M} \phi(\frac{\rho_{t}^{X}}{\pi_{t}^{X}})\Div (\pi_{t}^{X} \grad \log \pi_{t}^{X}) \\
            &+ \phi'(\frac{\rho_{t}^{X}}{\pi_{t}^{X}}) 
            \Big( \Div (\rho_{t}^{X} \grad \log \rho_{t}^{X}) - (\Div (\pi_{t}^{X} \grad \log \pi_{t}^{X})) \frac{\rho_{t}^{X}}{\pi_{t}^{X}} \Big) dV_{g}(x)\\
            =& \int_{M} 
            - \Big\langle \grad \phi(\frac{\rho_{t}^{X}}{\pi_{t}^{X}}), \pi_{t}^{X} \grad \log \pi_{t}^{X} \Big\rangle\\
            &- \Big\langle \grad \phi'(\frac{\rho_{t}^{X}}{\pi_{t}^{X}}) , \rho_{t}^{X} \grad \log \rho_{t}^{X} \Big\rangle 
            + \Big\langle \grad(\frac{\rho_{t}^{X}}{\pi_{t}^{X}} \phi'(\frac{\rho_{t}^{X}}{\pi_{t}^{X}})) , \pi_{t}^{X} \grad \log \pi_{t}^{X} \Big\rangle dV_{g}(x)\\
            =& \int_{M} 
            - \Big\langle \grad \phi(\frac{\rho_{t}^{X}}{\pi_{t}^{X}}), \pi_{t}^{X} \grad \log \pi_{t}^{X} \Big\rangle
            - \Big\langle \grad \phi'(\frac{\rho_{t}^{X}}{\pi_{t}^{X}}) , \rho_{t}^{X} \grad \log \rho_{t}^{X} \Big\rangle \\ & 
            + \Big\langle  \phi'(\frac{\rho_{t}^{X}}{\pi_{t}^{X}}) \grad \frac{\rho_{t}^{X}}{\pi_{t}^{X}}, \pi_{t}^{X} \grad \log \pi_{t}^{X} \Big\rangle 
            + \Big\langle \frac{\rho_{t}^{X}}{\pi_{t}^{X}}\grad \phi'(\frac{\rho_{t}^{X}}{\pi_{t}^{X}}) , \pi_{t}^{X} \grad \log \pi_{t}^{X} \Big\rangle dV_{g}(x).
    \end{align*}
  Now, notice that 
    \begin{align*}
        \Big\langle \grad \phi(\frac{\rho_{t}^{X}}{\pi_{t}^{X}}), \pi_{t}^{X} \grad \log \pi_{t}^{X} \Big\rangle
        = \Big\langle \phi'(\frac{\rho_{t}^{X}}{\pi_{t}^{X}}) \grad \frac{\rho_{t}^{X}}{\pi_{t}^{X}}, \pi_{t}^{X} \grad \log \pi_{t}^{X} \Big\rangle.
    \end{align*}
    So we get 
    \begin{align*}
            2 \frac{\partial }{\partial t} \Phi_{\pi_{t}^{X}} (\rho_{t}^{X})
            &= \int_{M} \Big\langle \frac{\rho_{t}^{X}}{\pi_{t}^{X}}\grad \phi'(\frac{\rho_{t}^{X}}{\pi_{t}^{X}}) , \pi_{t}^{X} \grad \log \pi_{t}^{X} \Big\rangle 
            - \Big\langle \grad \phi'(\frac{\rho_{t}^{X}}{\pi_{t}^{X}}) , \rho_{t}^{X} \grad \log \rho_{t}^{X} \Big\rangle dV_{g}(x)\\
            &= - \int_{M} \rho_{t}^{X} \Big\langle \grad \phi'(\frac{\rho_{t}^{X}}{\pi_{t}^{X}}) , \grad \log \frac{\rho_{t}^{X}}{\pi_{t}^{X}} \Big\rangle dV_{g}(x) \\
            &= - \mathbb{E}_{\rho_{t}^{X}} \Big\langle \grad (\phi' \circ \frac{\rho_{t}^{X}}{\pi_{t}^{X}}) , \grad \log \frac{\rho_{t}^{X}}{\pi_{t}^{X}} \Big\rangle 
            = - D_{\pi_{t}^{X}} (\rho_{t}^{X}).
    \end{align*}
\end{proof}

\subsection{Backward step: Simultaneous Backward Flow}\label{Section_Backward}
We leverage the following result.
\begin{theorem}[Theorem 3.1 in \cite{de2022riemannian}]
For a SDE $dX_{t} = b(X_{t}) dt + dB_{t}$, let $p_{t}$ denote the distribution of $X_{t}$. Denote $Y_{t} = X_{T - t}, t \in [0, T]$ to be the time-reversed diffusion. We have that $dY_{t} = (-b(Y_{t}) + \grad \log p_{T - t} (Y_{t}) ) dt + dB_{t}$.
\end{theorem}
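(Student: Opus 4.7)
\medskip

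\textbf{Proof proposal.} The plan is to follow the classical Anderson/Haussmann-Pardoux strategy for time reversal, adapted to the Riemannian setting where the infinitesimal generator of $B_t$ is $\tfrac{1}{2}\Delta$ (Laplace-Beltrami). The guiding principle is that an SDE on $M$ is determined (in law) by its Fokker-Planck equation together with its marginal family, so it suffices to identify a drift $\tilde b$ such that the candidate reversed SDE $dY_t = \tilde b(Y_t)\,dt + dB_t$ produces the correct marginals $q_t = p_{T-t}$.

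First I would write down the Fokker-Planck equation for the forward diffusion. Since $dX_t = b(X_t)\,dt + dB_t$, the density $p_t$ (with respect to $dV_g$) satisfies
\begin{equation*}
\partial_t p_t = -\Div(p_t\, b) + \tfrac{1}{2}\Delta p_t.
\end{equation*}
Next, for $Y_t := X_{T-t}$, the marginal density is $q_t = p_{T-t}$, and differentiating in $t$ gives $\partial_t q_t = -\partial_s p_s\big|_{s=T-t}$. Substituting the forward Fokker-Planck equation yields
\begin{equation*}
\partial_t q_t = \Div(q_t\, b) - \tfrac{1}{2}\Delta q_t.
\end{equation*}

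The second step is to impose that $q_t$ solves the Fokker-Planck equation of the candidate reversed SDE, namely $\partial_t q_t = -\Div(q_t\, \tilde b) + \tfrac{1}{2}\Delta q_t$. Equating the two expressions for $\partial_t q_t$ gives
\begin{equation*}
-\Div(q_t \tilde b) + \tfrac{1}{2}\Delta q_t \;=\; \Div(q_t b) - \tfrac{1}{2}\Delta q_t,
\end{equation*}
which, using $\Delta q_t = \Div(\grad q_t) = \Div(q_t\,\grad\log q_t)$, simplifies to $\Div\bigl(q_t(\tilde b + b - \grad\log q_t)\bigr) = 0$. The natural (and, up to gauge, unique) choice is $\tilde b = -b + \grad\log q_t = -b + \grad\log p_{T-t}$, matching the stated formula. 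I would then invoke a uniqueness-in-law argument (e.g.\ via the martingale problem associated with the Laplace-Beltrami generator) to conclude that this candidate SDE indeed has the same law as $Y_t$.

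The main obstacle is not the algebra -- which is essentially the same as in $\mathbb{R}^d$ -- but the analytic justifications on a manifold. Specifically: (i) verifying that $p_t$ is smooth and positive for $t>0$ so that $\grad\log p_{T-t}$ is well defined (this typically requires hypoelliptic regularity and integrability conditions on $b$ and on $p_0$); (ii) justifying the integration-by-parts / divergence manipulations globally on $M$, which, since $M$ has no boundary, reduces to integrability/decay assumptions but still needs care when $M$ is non-compact (this is exactly the setting of $P_m$ considered later in the paper); and (iii) establishing existence and uniqueness for the reversed SDE with the singular-looking drift $\grad\log p_{T-t}$, which is standard under mild growth hypotheses but genuinely requires invoking the martingale-problem framework for diffusions on manifolds (see e.g.\ Hsu, \emph{Stochastic Analysis on Manifolds}). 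Once these technical points are in place, the identification of $\tilde b$ above yields the theorem.
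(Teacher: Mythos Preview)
The paper does not prove this statement; it is quoted verbatim as Theorem~3.1 of \cite{de2022riemannian} and used as a black box to set up the backward channel $Q_t^{-}$. So there is no ``paper's own proof'' to compare against.

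That said, your sketch is the standard and correct route: write the forward Fokker--Planck equation $\partial_t p_t = -\Div(p_t b) + \tfrac12\Delta p_t$, time-reverse to get the PDE for $q_t = p_{T-t}$, and match it against the Fokker--Planck equation of a candidate SDE $dY_t = \tilde b(Y_t)\,dt + dB_t$ to read off $\tilde b = -b + \grad\log p_{T-t}$. This is exactly the Anderson/Haussmann--Pardoux argument, and the Riemannian version only changes the meaning of $\Div$, $\grad$, $\Delta$; the identity $\Delta q = \Div(q\,\grad\log q)$ you use holds on any Riemannian manifold. Your list of analytic caveats (positivity/smoothness of $p_t$, integrability for integration by parts on non-compact $M$, well-posedness of the reversed SDE via the martingale problem) is accurate and is precisely what the cited reference handles. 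Nothing is missing from your plan.
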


Note that the time reversal can be understood as $(Y_{T}, Y_{0})$ has the same distribution as $(X_{0}, X_{T})$. 

Recall that $\nu(t, x, y)$ is the density of manifold Brownian motion starting from $x$ with time $t$ and evaluated at $y$,
and that $\pi(x, y) = \pi^{X}(x) \nu(\eta, x, y)$. 
We denote $\pi^{Y} = \pi^{X} Q_{\eta}$ to be the $Y$-marginal of $\pi(x, y)$.
Let $\pi_{t} := \pi^{X} Q_{t}$.
Consider the forward process $dX_{t} = dB_{t}$ with $X_{0} \sim \pi^{X}$. 
We know that the time-reversed process satisfies $dY_{t} = \grad \log \pi_{\eta - t} (Y_{t}) dt + dB_{t}$.

Define $Q_{t}^{-}$ as follows. 
Given $\rho^{Y}$, set $\rho^{Y} Q_{t}^{-}$ to be the law at time $t$, of the solution of the time-reversed SDE (with $T = \eta$).
Thus if $Y_{0} \sim \rho^{Y}$, we get $X_{T} \sim \rho^{Y}$. 
By Bayes theorem $X_{0} \sim \int_{M} \pi^{X|Y}(x|y) d\rho^{Y}(y)$, hence $Y_{T} \sim \int_{M} \pi^{X|Y}(x|y) d\rho^{Y}(y)$.
For the channel $Q_{t}^{-}$, we have
\begin{enumerate}
    \item $Q_{0}^{-}$ is the identity channel.
    \item Given input $\rho^{Y}$, the output at time $\eta$ is $\rho^{Y} Q_{\eta}^{-}(x) = \int_{M} \pi^{X|Y}(x|y) d\rho^{Y}(y)$.
    \item $\pi^{Y} Q_{t}^{-} = \pi^{X}Q_{\eta - t }$.
\end{enumerate}
Thus we see that the RHK step of proximal sampler can be viewed as going along the time reversed process. We now have the following result.

\begin{lemma}\label{Lemma_Backward_Riemannian}
    For the time-reversed process, we have 
    \begin{align*}
        \partial_{t} \Phi_{\pi^{Y} Q_{t}^{-}}(\rho^{Y} Q_{t}^{-}) = - \frac{1}{2} D_{\pi^{Y} Q_{t}^{-}} (\rho^{Y} Q_{t}^{-}).
    \end{align*}
\end{lemma}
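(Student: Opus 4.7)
The plan is to mimic the calculation of Lemma~\ref{Lemma_Forward_Riemannian}, but using the Fokker--Planck equations associated with the time-reversed SDE rather than the heat equation. Write $\rho_t := \rho^Y Q_t^-$ and $\pi_t^Y := \pi^Y Q_t^- = \pi^X Q_{\eta-t}$, and set $r := \rho_t/\pi_t^Y$. The time-reversed SDE has drift $b_t(y) = \grad \log \pi_{\eta-t}(y) = \grad \log \pi_t^Y(y)$, so the Kolmogorov forward equation for $\rho_t$ reads
\[
\partial_t \rho_t \;=\; -\Div(\rho_t\,\grad \log \pi_t^Y) \;+\; \tfrac{1}{2}\Div(\rho_t\,\grad \log \rho_t).
\]
For the reference flow, because $\pi_t^Y \grad \log \pi_t^Y = \grad \pi_t^Y$, the drift contribution $\Div(\pi_t^Y b_t) = \Delta \pi_t^Y$ cancels half of the Laplacian, leaving $\partial_t \pi_t^Y = -\tfrac{1}{2}\Div(\pi_t^Y\,\grad \log \pi_t^Y)$, which is also what one reads off from $\pi_t^Y = \pi_{\eta-t}$ solving the \emph{reverse} heat equation.

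Next, I would differentiate $\Phi_{\pi_t^Y}(\rho_t) = \int_M \pi_t^Y \phi(r)\,dV_g$ under the integral sign, obtaining
\[
\partial_t \Phi_{\pi_t^Y}(\rho_t) \;=\; \int_M [\phi(r) - r\phi'(r)]\,\partial_t \pi_t^Y\,dV_g \;+\; \int_M \phi'(r)\,\partial_t \rho_t\,dV_g,
\]
and plug in the two Fokker--Planck equations above. Since $\partial M = \emptyset$, integration by parts (as recalled in the preliminaries) has no boundary term. Using $\grad[\phi(r) - r\phi'(r)] = -r\phi''(r)\grad r$ and $\grad \phi'(r) = \phi''(r)\grad r$, the three resulting integrals become
\[
-\tfrac{1}{2}\!\int r\phi''(r)\langle \grad r,\,\pi_t^Y\grad \log \pi_t^Y\rangle\,dV_g,\;\;
\int r\phi''(r)\langle \grad r,\,\pi_t^Y\grad \log \pi_t^Y\rangle\,dV_g,\;\;
-\tfrac{1}{2}\!\int \phi''(r)\langle \grad r,\,\rho_t\grad \log \rho_t\rangle\,dV_g.
\]

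The decisive step is to expand $\rho_t\,\grad \log \rho_t = \pi_t^Y\,\grad r + r\pi_t^Y\,\grad\log\pi_t^Y$, which follows from $\log \rho_t = \log r + \log \pi_t^Y$. Substituting this into the third integral makes the cross terms involving $\langle\grad r, \pi_t^Y\grad\log\pi_t^Y\rangle$ cancel exactly, leaving
\[
\partial_t \Phi_{\pi_t^Y}(\rho_t) \;=\; -\tfrac{1}{2}\int_M \phi''(r)\,\pi_t^Y\,|\grad r|^2\,dV_g \;=\; -\tfrac{1}{2}\,D_{\pi_t^Y}(\rho_t),
\]
where the last equality uses $\rho_t\,\grad \log r = \pi_t^Y\,\grad r$ to match the defining expression $D_{\pi_t^Y}(\rho_t) = \mathbb{E}_{\rho_t}\langle \grad(\phi'\circ r),\grad\log r\rangle$.

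The main obstacle is purely bookkeeping: one must carefully track the sign discrepancy between the forward and time-reversed Fokker--Planck equations (the drift term appears with the opposite sign from the forward heat-flow case) and verify that the three cross terms conspire to cancel. A secondary concern is justifying the integration by parts on a general complete manifold without boundary, but as in Lemma~\ref{Lemma_Forward_Riemannian} this is standard once $\pi_t^Y$ is smooth and positive (true because $\pi_t^Y$ is a strictly positive heat-smoothed density for $t<\eta$) and the usual decay/integrability assumptions on $\rho_t$ and its gradient are in place.
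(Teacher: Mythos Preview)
Your argument is correct and follows the same strategy as the paper: write down the Fokker--Planck equations for $\pi_t^Y$ and $\rho_t$ under the time-reversed SDE, differentiate $\int \pi_t^Y\phi(r)\,dV_g$, and integrate by parts. The only organizational difference is that the paper rewrites $\partial_t\rho_t^-=\Div(\rho_t^-\grad\log\frac{\rho_t^-}{\pi_t^-})-\tfrac12\Delta\rho_t^-$ and then reuses the forward-step computation of Lemma~\ref{Lemma_Forward_Riemannian} verbatim on the $-\tfrac12\Delta$ pieces (yielding $+D$), while the drift piece contributes $-2D$; you instead expand everything in terms of $\phi''(r)\grad r$ and track the three cross terms directly to see the cancellation, which is a slightly more self-contained bookkeeping but not a different idea.
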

\begin{proof}[Proof of Lemma~\ref{Lemma_Backward_Riemannian}]
Denote $\pi_{t}^{-} = \pi^{Y} Q_{t}^{-}$ and $\rho_{t}^{-} = \rho^{Y} Q_{t}^{-}$.
    The Fokker-Planck equation is 
    \begin{align*}
            \partial_{t} \pi_{t}^{-} &= - \Div (\pi_{t}^{-} \grad \log \pi_{t}^{-}) + \frac{1}{2} \Delta \pi_{t}^{-} = - \frac{1}{2} \Delta \pi_{t}^{-}, \\
            \partial_{t} \rho_{t}^{-} &= - \Div (\rho_{t}^{-} \grad \log \pi_{t}^{-}) + \frac{1}{2} \Delta \rho_{t}^{-}
            = \Div (\rho_{t}^{-} \grad \log \frac{\rho_{t}^{-}}{\pi_{t}^{-}} ) - \frac{1}{2} \Delta \rho_{t}^{-}.
    \end{align*}

    Hence 
    \begin{align*}
            2 \partial_{t} \Phi_{\pi_{t}^{-}}(\rho_{t}^{-})
            =& 2 \int_{M} \phi(\frac{\rho_{t}^{-}}{\pi_{t}^{-}})\frac{\partial}{\partial t} \pi_{t}^{-} 
            + \phi'(\frac{\rho_{t}^{-}}{\pi_{t}^{-}}) 
            \Big( \frac{\partial }{\partial t} \rho_{t}^{-} - (\frac{\partial}{\partial t} \pi_{t}^{-}) \frac{\rho_{t}^{-}}{\pi_{t}^{-}} \Big) dV_{g}(x)\\
            =& \int_{M} - \phi(\frac{\rho_{t}^{-}}{\pi_{t}^{-}}) \Delta \pi_{t}^{-} \\
            &+ \phi'(\frac{\rho_{t}^{-}}{\pi_{t}^{-}}) 
            \Big( 2\Div (\rho_{t}^{-} \grad \log \frac{\rho_{t}^{-}}{\pi_{t}^{-}} ) - \Delta \rho_{t}^{-} + \frac{\rho_{t}^{-}}{\pi_{t}^{-}} \Delta \pi_{t}^{-} \Big) dV_{g}(x)\\
            =& 2\int_{M} \phi'(\frac{\rho_{t}^{-}}{\pi_{t}^{-}}) 
            \Div (\rho_{t}^{-} \grad \log \frac{\rho_{t}^{-}}{\pi_{t}^{-}} ) dV_{g}(x)\\
            &+ \int_{M} - \phi(\frac{\rho_{t}^{-}}{\pi_{t}^{-}}) \Delta \pi_{t}^{-} 
             - \phi'(\frac{\rho_{t}^{-}}{\pi_{t}^{-}}) \Delta \rho_{t}^{-} + \phi'(\frac{\rho_{t}^{-}}{\pi_{t}^{-}})  \frac{\rho_{t}^{-}}{\pi_{t}^{-}} \Delta \pi_{t}^{-} dV_{g}(x)\\
            =& 2\int_{M} \phi'(\frac{\rho_{t}^{-}}{\pi_{t}^{-}}) 
            \Div (\rho_{t}^{-} \grad \log \frac{\rho_{t}^{-}}{\pi_{t}^{-}} )dV_{g}(x)
            + D_{\pi_{t}^{-}} (\rho_{t}^{-}) \\
            =& - 2 D_{\pi_{t}^{-}} (\rho_{t}^{-}) + D_{\pi_{t}^{-}} (\rho_{t}^{-}) = - D_{\pi_{t}^{-}} (\rho_{t}^{-}),
    \end{align*}
where we used the same steps as as in the proof Lemma \ref{Lemma_Forward_Riemannian} to obtain 
    \begin{align*}
        \int_{M} - \phi(\frac{\rho_{t}^{-}}{\pi_{t}^{-}}) \Delta \pi_{t}^{-} 
        - \phi'(\frac{\rho_{t}^{-}}{\pi_{t}^{-}}) \Delta \rho_{t}^{-} + \phi'(\frac{\rho_{t}^{-}}{\pi_{t}^{-}})  \frac{\rho_{t}^{-}}{\pi_{t}^{-}} \Delta \pi_{t}^{-} dV_{g}(x)
        = D_{\pi_{t}^{-}} (\rho_{t}^{-}),
    \end{align*}
    and used integration by parts, to obtain 
    \begin{align*}
            &2\int_{M} \phi'(\frac{\rho_{t}^{-}}{\pi_{t}^{-}}) 
            \Div (\rho_{t}^{-} \grad \log \frac{\rho_{t}^{-}}{\pi_{t}^{-}} ) dV_{g}(x) 
            = - 2 \int_{M} \langle \grad \phi'(\frac{\rho_{t}^{-}}{\pi_{t}^{-}}) , \rho_{t}^{-} \grad \log \frac{\rho_{t}^{-}}{\pi_{t}^{-}}\rangle dV_{g}(x) \\
            =& - 2 \mathbb{E}_{\rho_{t}^{-} } [\langle \grad \phi'(\frac{\rho_{t}^{-}}{\pi_{t}^{-}}) , \grad \log \frac{\rho_{t}^{-}}{\pi_{t}^{-}}\rangle ] 
            = - 2 D_{\pi_{t}^{-}} (\rho_{t}^{-}).
    \end{align*}
\end{proof}

\subsection{Convergence under LSI}\label{Section_Conv_LSI}
Now we prove the main theorem.

\begin{proof}[Proof of Theorem~\ref{Main_Theorem}]
We first prove the theorem assuming curvature is non-negative. For the general case, we only need to replace the LSI constant $\alpha_{t}, \alpha_{t}^{-}$. 
    \begin{enumerate}
        \item \textbf{The forward step.} 
        We know $\pi^{X} Q_{t}$ satisfies LSI with $\alpha_{t} := \frac{1}{t + \frac{1}{\alpha}}$.
        Using Lemma \ref{Lemma_Forward_Riemannian}, we have 
        \begin{align*}
            \partial_{t} H_{\pi^{X} Q_{t}} (\rho_{0}^{X} Q_{t}) 
            = - \frac{1}{2} J_{\pi^{X} Q_{t}} (\rho_{0}^{X} Q_{t}) 
            \le - \alpha_{t} H_{\pi^{X} Q_{t}} (\rho_{0}^{X} Q_{t}).
        \end{align*}
        This implies $ H_{\pi^{X} Q_{t}} (\rho_{0}^{X} Q_{t}) \le e^{-A_{t}} H_{\pi^{X}} (\rho_{0}^{X})$ 
        where, $A_{t} = \int_{0}^{t} \alpha_{s} ds = \log (1 + t\alpha)$.
        We also have $e^{-A_{t}} = (1 + t\alpha)^{-1}$. As a result, 
        \begin{align*}
            H_{\pi^{X} Q_{\eta}} (\rho_{0}^{X} Q_{\eta}) \le 
            \frac{H_{\pi^{X}} (\rho_{0}^{X})}{1 + \eta \alpha}.
        \end{align*}
        \item \textbf{The backward step.}
        Using Lemma \ref{Lemma_Backward_Riemannian}, we have 
        \begin{align*}
            \partial_{t} H_{\pi^{Y} Q_{t}^{-}} (\rho_{0}^{Y} Q_{t}^{-}) 
            = - \frac{1}{2} J_{\pi^{Y} Q_{t}^{-}} (\rho_{0}^{Y} Q_{t}^{-}) 
            \le - \alpha_{t}^{-} H_{\pi^{Y} Q_{t}^{-}} (\rho_{0}^{Y} Q_{t}^{-}). 
        \end{align*}
        Since $\pi^{Y} Q_{t}^{-} = \pi^{X}Q_{\eta - t }$,
        we know the LSI constant for $\pi^{Y} Q_{t}^{-}$ is $\alpha_{t}^{-} := \frac{1}{(\eta - t) + \frac{1}{\alpha}}$.
        Same as in step 1, we get $A_{\eta}^{-} = \int_{0}^{\eta} \alpha_{t}^{-} dt = \log (1 + \alpha \eta)$.
        As a result, 
        \begin{align*}
            H_{\pi^{Y} Q_{\eta}^{-}} (\rho_{0}^{Y} Q_{\eta}^{-})  \le 
            \frac{H_{\pi^{Y} } (\rho_{0}^{Y} ) }{1 + t\alpha }.
        \end{align*}
        \item \textbf{Putting together.} 
        We have $\pi^{Y} = \pi^{X} Q_{\eta}$, $\rho_{0}^{Y} = \rho_{0}^{X} Q_{\eta}$. 
        Denote $\rho_{1}^{X} = \rho_{0}^{Y} Q_{\eta}^{-}$, we get 
        \begin{align*}
            H_{\pi^{X}} (\rho_{1}^{X} ) = 
            H_{\pi^{Y} Q_{\eta}^{-}} (\rho_{0}^{Y} Q_{\eta}^{-}) \le 
            \frac{H_{\pi^{Y} } (\rho_{0}^{Y} ) }{(1 + t\alpha) }
            = \frac{H_{\pi^{X} Q_{\eta} } (\rho_{0}^{X} Q_{\eta}) }{(1 + t\alpha) }
            \le 
            \frac{H_{\pi^{X}} (\rho_{0}^{X})}{(1 + t\alpha)^{2} }.
        \end{align*}
        \item \textbf{Negative curvature.} 
        For negative curvature, we use 
        $\alpha_{t}$ as in Proposition \ref{LSI_Propagation_Heat}
        (the value to be integrated is $\frac{\kappa}{1-e^{-\kappa t} + \kappa d_0 e^{-\kappa t}}$
        where $\frac{1}{\alpha} := d_{0}$). We compute the integral
        \begin{align*}
            \int_{0}^{t} \frac{1}{((\frac{1}{\alpha} -\frac{1}{\kappa})e^{-x} + \frac{1}{\kappa})} dx 
            = \log (\alpha(e^{\kappa t} - 1) + \kappa) - \log (\kappa)
            = \log (\frac{\alpha(e^{\kappa t} - 1) + \kappa}{\kappa}).
        \end{align*}
        Hence we have $ H_{\pi^{X}} (\rho_{k}^{X} ) \le H_{\pi^{X}} (\rho_{0}^{X}) (\frac{\kappa}{\alpha(e^{\kappa \eta} - 1) + \kappa})^{2k}$.
        
        Observe that in general, for $x \in [0, 1]$ we have that $1 - \frac{x}{2} \ge e^{-x}$.
        Thus for $\eta < \frac{1}{|\kappa|}$, we have $|\kappa| \eta < 1$, hence $1 -  \frac{|\kappa|}{2}\eta \ge e^{-|\kappa| \eta}$.
        This implies $\frac{1 - e^{-|\kappa| \eta}}{|\kappa|} \ge \frac{1}{2}\eta$.
        On the other hand, we have $1 - x \le e^{-x}$, which implies $\frac{1 - e^{-|\kappa| \eta}}{|\kappa|} \le \eta$.
        So we have $\frac{\alpha(e^{\kappa \eta} - 1) + \kappa}{\kappa} = 1 + \alpha\frac{1 - e^{-|\kappa| \eta}}{|\kappa|} = \Theta(1 + \alpha \eta)$.
    \end{enumerate}

\end{proof}

\section{Proof of Theorem \ref{TV_Inexact_BM_Inexact_RHK}}\label{Sec_Proof_Inexact_Theorem}

Recall that $\rho_{k}^{X}(x)$, $\rho_{k}^{Y}(y)$ denote the distribution generated by Algorithm \ref{Manifold_Proximal_Sampler_Ideal}, 
assuming exact Brownian motion and exact RHK. This notation is applied for all $k$. 
For practical implementation, using inexact RHK and inexact Brownian motion through all the iterations, 
we denote the corresponding distribution by $\tilde{\rho}_{k}^{X}(x)$, $\tilde{\rho}_{k}^{Y}(y)$ respectively.

Note that at iteration $k-1$, we are at distribution $\tilde{\rho}_{k-1}^{X}(x)$.
Denote $\hat{\rho}_{k-1}^{Y}(y)$ to be the distribution obtained from $\tilde{\rho}_{k-1}^{X}(x)$ using exact Brownian motion. 
(Note that $\tilde{\rho}_{k-1}^{Y}(y)$ denote the distribution obtained from $\tilde{\rho}_{k-1}^{X}(x)$ using inexact Brownian motion).  

We now prove Lemma \ref{Lemma_Propagation_Error_TV}.

\begin{proof}[Proof of Lemma~\ref{Lemma_Propagation_Error_TV}]
    Using triangle inequality, we have 
    \begin{align*}
            &\|\rho_{k}^{X}(x) - \tilde{\rho}_{k}^{X}(x)\|_{TV}
            = \Big\|\int \rho_{k-1}^{Y}(y) \pi^{X|Y}(x|y) dy  - \int \tilde{\rho}_{k-1}^{Y}(y) \hat{\pi}^{X|Y}(x|y) dy\Big\|_{TV} \\
            \le& \Big\|\int \rho_{k-1}^{Y}(y) (\pi^{X|Y}(x|y) - \hat{\pi}^{X|Y}(x|y))  dy\Big\|_{TV} 
            + \Big\| \int (\tilde{\rho}_{k-1}^{Y}(y) - \rho_{k-1}^{Y}(y) ) \hat{\pi}^{X|Y}(x|y) dy\Big\|_{TV}.
    \end{align*}
    The first part can be bounded by $\zeta_{\mathsf{RHK}}$:
    \begin{align*}
            &\Big\|\int \rho_{k-1}^{Y}(y) (\pi^{X|Y}(x|y) - \hat{\pi}^{X|Y}(x|y))  dy\Big\|_{TV} 
            \le \int \rho_{k-1}^{Y}(y) \Big\|\pi^{X|Y}(x|y) - \hat{\pi}^{X|Y}(x|y) \Big\|_{TV}  dy \\
            \le & \zeta_{\mathsf{RHK}}.
    \end{align*} 
    For the second part, we have 
    \begin{align*}
            &\Big\| \int (\tilde{\rho}_{k-1}^{Y}(y) - \rho_{k-1}^{Y}(y) ) \hat{\pi}^{X|Y}(x|y) dy\Big\|_{TV} \\
            = & \frac{1}{2}\int \Big| \int (\tilde{\rho}_{k-1}^{Y}(y) - \rho_{k-1}^{Y}(y) ) \hat{\pi}^{X|Y}(x|y) dy \Big| dx
            \le \frac{1}{2}\int \Big|\tilde{\rho}_{k-1}^{Y}(y) - \rho_{k-1}^{Y}(y) \Big| \int \hat{\pi}^{X|Y}(x|y) dx dy \\
            = & \|\tilde{\rho}_{k-1}^{Y}(y) - \rho_{k-1}^{Y}(y)\|_{TV}
            \le \|\tilde{\rho}_{k-1}^{Y}(y) - \hat{\rho}_{k-1}^{Y}(y)\|_{TV} + \|\hat{\rho}_{k-1}^{Y}(y) - \rho_{k-1}^{Y}(y)\|_{TV} \\
            \le & \zeta_{\mathsf{MBI}} + \|\tilde{\rho}_{k-1}^{X}(x) - \rho_{k-1}^{X}(x)\|_{TV}.
    \end{align*}
    Here, the last inequality follows from Lemma \ref{Lemma_TV_heat}. Together, we have
    \begin{align*}
        \|\rho_{k}^{X}(x) - \tilde{\rho}_{k}^{X}(x)\|_{TV}
        \le \zeta_{\mathsf{RHK}} + \zeta_{\mathsf{MBI}} + \|\tilde{\rho}_{k-1}^{X}(x) - \rho_{k-1}^{X}(x)\|_{TV}.
    \end{align*}
    Iteratively applying this inequality and noting that $\|\tilde{\rho}_{0}^{X}(x) - \rho_{0}^{X}(x)\|_{TV} = 0$, 
    we obtain $\|\rho_{k}^{X}(x) - \tilde{\rho}_{k}^{X}(x)\|_{TV} \le k (\zeta_{\mathsf{RHK}} + \zeta_{\mathsf{MBI}})$.
\end{proof}

Recall that Pinsker's inequality states $\|\mu - \nu\|_{TV} \le \sqrt{\frac{1}{2}H_{\nu}(\mu)}$.

\begin{proof}[Proof of Theorem~\ref{TV_Inexact_BM_Inexact_RHK}]  Using Pinsker's inequality, we have 
    \begin{align*}
        \|\rho_{k}^{X} - \pi^{X}\|_{TV} 
        \le \sqrt{\frac{1}{2} H_{\pi^{X}}(\rho_{k}^{X})} 
        \le \sqrt{\frac{1}{2} \frac{H_{\pi^{X}} (\rho_{0}^{X})}{(1 + \eta\alpha)^{2k} } }
        \le \frac{1}{2}\varepsilon.
    \end{align*}
    We want to bound $\|\rho_{k}^{X} - \pi^{X}\|_{TV} \le \frac{1}{2}\varepsilon$.
    It suffices to have $\frac{H_{\pi^{X}} (\rho_{0}^{X})}{(1 + \eta\alpha)^{2k} } \le \frac{1}{2} \varepsilon^{2}$.
    Hence we need $\log (\frac{2H_{\pi^{X}} (\rho_{0}^{X})}{\varepsilon^{2}}) \le 2k\log(1 + \eta\alpha) $, i.e., 
    $k = \mathcal{O}\left(\frac{\log \frac{H_{\pi^{X}} (\rho_{0}^{X})}{\varepsilon^{2}}}{\log (1 + \eta\alpha)}\right) $.

    For small step size $\eta$, we have $\frac{1}{\log(1 + \eta \alpha)} = \mathcal{O}(\frac{1}{\eta \alpha})$. 
    Hence $k = \mathcal{O}\left(\frac{1}{\eta \alpha} \log \frac{H_{\pi^{X}} (\rho_{0}^{X})}{\varepsilon^{2}}) = \tilde{\mathcal{O}}(\frac{1}{\eta}\log \frac{1}{\varepsilon}\right)$.

    Recall that by assumption, $\frac{1}{\eta} = \tilde{\mathcal{O}}(\log \frac{1}{\zeta})$. 
    We pick $\zeta = \frac{\varepsilon}{\log^{2} \frac{1}{\varepsilon}}$
    and consequently $\frac{1}{\eta} = \tilde{\mathcal{O}}(\log \frac{\log^{2} \frac{1}{\varepsilon}}{\varepsilon}) = \tilde{\mathcal{O}}(\log \frac{1}{\varepsilon} + 2\log \log \frac{1}{\varepsilon}) = \tilde{\mathcal{O}}(\log \frac{1}{\varepsilon})$.
    It follows that 
    \begin{align*}
        k = \tilde{\mathcal{O}}(\frac{1}{\eta}\log \frac{1}{\varepsilon})
        = \tilde{\mathcal{O}}(\log^{2} \frac{1}{\varepsilon}). 
    \end{align*}
    The result then follows from triangle inequality:
    \begin{align*}
        \|\tilde{\rho}_{k}^{X} - \pi^{X}\|_{TV} \le \|\tilde{\rho}_{k}^{X} - \rho_{k}^{X}\|_{TV} + \|\rho_{k}^{X} - \pi^{X}\|_{TV} 
        \le k (\zeta_{\mathsf{RHK}} + \zeta_{\mathsf{MBI}}) + \frac{1}{2}\varepsilon = \tilde{\mathcal{O}}(\varepsilon)
    \end{align*}
    where $k \zeta = \tilde{\mathcal{O}}(\frac{\varepsilon}{\log^{2} \frac{1}{\varepsilon}} \log^{2} \frac{1}{\varepsilon}) = \tilde{\mathcal{O}}(\varepsilon)$.

\end{proof}

\section{Verification of Assumption \ref{Assumption_Oracle_TV_quality}}\label{Section_L_infty_Varadhan}

In this section, we consider implementing inexact oracles through the truncation method. Recall that we assume $M$ is a compact manifold, which is a homogeneous space. 


We use $\hat{\pi}^{Y|X}, \hat{\pi}^{X|Y}$ to denote the output of MBI oracle and RHK when rejection sampling is exact. More precisely, since we use the truncated series to approximate heat kernel, we have $\hat{\pi}^{Y|X} \propto \nu_{l}(\eta, x, y)$  and $ \hat{\pi}^{X|Y} \propto e^{-f(x)}\nu_{l}(\eta, x, y) $. When rejection sampling is not exact, i.e., there exists $z \in M$ s.t. $V(z) > 1$, we denote the output to be $\overline{\pi}^{Y|X}, \overline{\pi}^{X|Y}$ for inexact Brownian motion and inexact RHK, respectively. 

In subsection \ref{Subsection_exact_rej}, we prove Proposition \ref{Prop_Verify_Assumption}, i.e., 
$\|\hat{\pi}^{X|Y} - \pi^{X|Y}\|_{TV} = \tilde{\mathcal{O}}(\zeta)$ and $\|\hat{\pi}^{Y|X} - \pi^{Y|X}\|_{TV} = \tilde{\mathcal{O}}(\zeta)$ with $\zeta = \frac{\varepsilon}{\log^{2} \frac{1}{\varepsilon}}$. Then we know 
$\hat{\pi}^{Y|X}$ and $\hat{\pi}^{X|Y}$ satisfy Assumption \ref{Assumption_Oracle_TV_quality}.

In subsection \ref{Subsection_inexact_rej}, we consider a more general setting, where the acceptance rate is allowed to exceed $1$ at some unimportant region. 
We show that on $\mathcal{S}^{d}$, for certain choices of parameters, $\|\hat{\pi}^{X|Y} - \overline{\pi}^{X|Y}\|_{TV} = \tilde{\mathcal{O}}(\zeta)$ and $\|\hat{\pi}^{Y|X} - \overline{\pi}^{Y|X}\|_{TV} = \tilde{\mathcal{O}}(\zeta)$. This means that allowing the acceptance rate to exceed $1$ in unimportant regions would not cause a significant bias for rejection sampling. 
It then follows from triangle inequality that $\overline{\pi}^{Y|X}$ and $\overline{\pi}^{X|Y}$ satisfy Assumption \ref{Assumption_Oracle_TV_quality}.

\subsection{Exact rejection sampling}\label{Subsection_exact_rej}

We prove Proposition \ref{Prop_Verify_Assumption}, i.e., 
verify that Assumption \ref{Assumption_Oracle_TV_quality} is satisfied with $\zeta = \frac{\varepsilon}{\log^{2} \frac{1}{\varepsilon}}$ 
as required in Theorem \ref{TV_Inexact_BM_Inexact_RHK}.

\subsubsection{Analysis in total variation distance}

The first step is to bound the total variation distance, under the assumption that heat kernel evaluation is of high accuracy.
We consider the following characterization of total variation distance (see Lemma \ref{Prop_TV}):
\begin{align*}
        \|\rho_{1} - \rho_{2}\|_{TV}
        = \frac{1}{2} \int_{M} |\rho_{1}(x) - \rho_{2}(x)| dV_{g}(x).
\end{align*}

\begin{proposition}\label{Prop_truncation_1}
    Let $M$ be a compact manifold. Let $\zeta$ be the desired accuracy. 
    Assume for all $y \in M$ we have 
    $\int_{M} |\nu(\eta, x, y) - \nu_{l}(\eta, x, y)| dV_{g}(x) = \tilde{\mathcal{O}}(\zeta)$ 
    and for all $x \in M$ we have $\int_{M} |\nu(\eta, x, y) - \nu_{l}(\eta, x, y)| dV_{g}(y) = \tilde{\mathcal{O}}(\zeta)$.
    Then $\|\hat{\pi}^{X|Y} - \pi^{X|Y}\|_{TV} = \tilde{\mathcal{O}}(\zeta)$ 
    and $\|\hat{\pi}^{Y|X} - \pi^{Y|X}\| = \tilde{\mathcal{O}}(\zeta)$.
\end{proposition}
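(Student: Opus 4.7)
The plan is to use the standard characterization $\|\rho_1-\rho_2\|_{TV} = \tfrac{1}{2}\int_M |\rho_1 - \rho_2| dV_g$ together with the classical ``ratio trick'' for comparing two probability densities whose unnormalized forms are close in $L^1$. The hypothesis of the proposition already gives us exactly this $L^1$ closeness of the unnormalized heat kernels in each variable separately, so the work reduces to (i) controlling the normalizing constants, and (ii) propagating the $L^1$ bound through the division by these constants.

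First I would handle the MBI oracle, which is slightly cleaner because $\nu(\eta,x,\cdot)$ is itself a probability density while $\hat{\pi}^{Y|X}(\cdot|x) = \nu_l(\eta,x,\cdot)/Z_{\mathsf{MBI}}(x)$ with $Z_{\mathsf{MBI}}(x) = \int_M \nu_l(\eta,x,y)\,dV_g(y)$. Using $\int \nu(\eta,x,y)\,dV_g(y)=1$, the assumption immediately yields $|Z_{\mathsf{MBI}}(x)-1| \le \int_M |\nu_l - \nu|\,dV_g(y) = \tilde{\mathcal{O}}(\zeta)$ uniformly in $x$, so in particular $Z_{\mathsf{MBI}}(x)$ is bounded away from $0$ for small enough $\zeta$. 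Writing
\begin{equation*}
\frac{\nu_l}{Z_{\mathsf{MBI}}} - \nu \;=\; \frac{\nu_l - \nu}{Z_{\mathsf{MBI}}} \;+\; \frac{\nu\,(1 - Z_{\mathsf{MBI}})}{Z_{\mathsf{MBI}}},
\end{equation*}
applying the triangle inequality and integrating in $y$ gives a bound of $\tilde{\mathcal{O}}(\zeta)/Z_{\mathsf{MBI}}(x) + |1-Z_{\mathsf{MBI}}(x)|/Z_{\mathsf{MBI}}(x) = \tilde{\mathcal{O}}(\zeta)$, which is the desired TV estimate.

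For the RHK oracle I would apply the same ratio trick but now to $\pi^{X|Y}(x|y) = e^{-f(x)}\nu(\eta,x,y)/Z(y)$ and $\hat{\pi}^{X|Y}(x|y) = e^{-f(x)}\nu_l(\eta,x,y)/\hat{Z}(y)$. Compactness of $M$ and continuity of $f$ give $0 < e^{-\|f\|_\infty} \le e^{-f(x)} \le e^{\|f\|_\infty}$, so
\begin{equation*}
|Z(y) - \hat{Z}(y)| \;\le\; e^{\|f\|_\infty}\!\int_M |\nu(\eta,x,y) - \nu_l(\eta,x,y)|\,dV_g(x) \;=\; \tilde{\mathcal{O}}(\zeta),
\end{equation*}
using the assumption on integration in $x$. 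Since $Z(y) \ge e^{-\|f\|_\infty} \int \nu\,dV_g = e^{-\|f\|_\infty} > 0$, both $Z(y)$ and $\hat{Z}(y)$ are bounded below by a positive constant. The decomposition
\begin{equation*}
\frac{e^{-f}\nu_l}{\hat{Z}} - \frac{e^{-f}\nu}{Z} \;=\; \frac{e^{-f}(\nu_l-\nu)}{\hat{Z}} \;+\; \frac{e^{-f}\nu\,(Z-\hat{Z})}{\hat{Z}\,Z}
\end{equation*}
then yields the $L^1$ bound $\tilde{\mathcal{O}}(\zeta)$ after integrating in $x$ and using the two displayed estimates.

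The main obstacle is really just bookkeeping rather than substance: one must verify that the multiplicative constants (the lower bounds on $Z_{\mathsf{MBI}}, Z, \hat{Z}$ and the factor $\|e^{-f}\|_\infty$) remain uniform in the conditioning variable and do not introduce hidden dependence on the truncation level $l$. Compactness of $M$ and continuity of $f$ give this uniformity for free, which is why the hypothesis ``$M$ compact'' is used. No delicate curvature, oracle, or heat-kernel structure is needed at this step; the heat-kernel content was absorbed entirely into the $L^1$ hypothesis, which will be verified separately (e.g.\ through $L^2$-bounds plus Jensen as sketched in the paper) in the next proposition.
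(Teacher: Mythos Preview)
Your proposal is correct and follows the same overall strategy as the paper: compare unnormalized densities in $L^1$, control the normalizing constants, and push the bound through the standard ratio decomposition. The algebraic decompositions differ only cosmetically (the paper writes $\tfrac{Z_2 A - Z_1 B}{Z_1 Z_2}$ and bounds it by $\min\{Z_1,Z_2\}|A-B| + |Z_1-Z_2|\max\{A,B\}$, while you use the equivalent additive splitting).

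The one substantive difference is how the lower bound on $Z(y)=\int_M e^{-f(x)}\nu(\eta,x,y)\,dV_g(x)$ is obtained. You use heat-kernel symmetry and stochastic completeness on a compact manifold to get $\int_M \nu(\eta,x,y)\,dV_g(x)=1$ directly, yielding $Z(y)\ge e^{-\|f\|_\infty}$ with no dependence on $\eta$. The paper instead invokes the pointwise heat-kernel lower bound of Hsu (Theorem~5.3.4) in a separate lemma (Lemma~\ref{Lemma_Normalizing_constant_2}) and integrates it. Your route is more elementary, avoids the auxiliary lemma entirely, and produces a cleaner $\eta$-independent constant; the paper's route, on the other hand, does not rely on symmetry of the kernel and would still work if one only had one-sided density bounds. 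Either argument suffices here.
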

\begin{proof}[Proof of Proposition~\ref{Prop_truncation_1}]

    \textbf{Step 1.} Note that $A_{1} := \sup_{x \in M} e^{-f(x)}, A_{2} := \inf_{x \in M} e^{-f(x)}$ are positive constants independent of $t$.
    Denote $Z_{1} = \int_{M} e^{-f(x)} \nu_{l}(\eta, x, y) dV_{g}(x)$ 
    and $Z_{2} = \int_{M} e^{-f(x)}\nu(\eta, x, y) dV_{g}(x)$.
    We know 
    \begin{align*}
            |Z_{2} - Z_{1}| &= 
            |\int_{M} e^{-f(x)}\nu(\eta, x, y) - e^{-f(x)} \nu_{l}(\eta, x, y) dV_{g}(x)| \\
            &\le A_{1} \int_{M} |\nu(\eta, x, y) - \nu_{l}(\eta, x, y)| dV_{g}(x)
            = \tilde{\mathcal{O}}(\zeta).
    \end{align*}
    Hence, we have
    \begin{align*}
            \|\hat{\pi}^{X|Y} - \pi^{X|Y}\|_{TV}
        &\le \frac{1}{2} \int_{M} | \frac{e^{-f(x)} \nu_{l}(\eta, x, y)}{\int_{M} e^{-f(x)} \nu_{l}(\eta, x, y) dV_{g}(x)} 
        - \frac{e^{-f(x)}\nu(\eta, x, y)}{\int_{M} e^{-f(x)}\nu(\eta, x, y) dV_{g}(x)} | dV_{g}(x) \\
        &\le \frac{1}{2} \int_{M} \frac{|Z_{2}e^{-f(x)} \nu_{l}(\eta, x, y) - Z_{1}e^{-f(x)}\nu(\eta, x, y)|}{Z_{1}Z_{2}} dV_{g}(x)\\
        &\le \int_{M} \frac{\min\{Z_{1}, Z_{2}\} \cdot |e^{-f(x)} \nu_{l}(\eta, x, y) - e^{-f(x)}\nu(\eta, x, y)| }{2Z_{1}Z_{2}} dV_{g}(x) \\
        &\quad + \int_{M} \frac{|Z_{2} - Z_{1}| \cdot \max\{e^{-f(x)}\nu(\eta, x, y), e^{-f(x)} \nu_{l}(\eta, x, y)\}}{2Z_{1}Z_{2}} dV_{g}(x)\\
        &\le \tilde{\mathcal{O}}(\zeta) + \tilde{\mathcal{O}}(\int_{M} |Z_{2} - Z_{1}| \cdot \max\{\nu(\eta, x, y), \nu_{l}(\eta, x, y)\} dV_{g}(x)) 
        = \tilde{\mathcal{O}}(\zeta),
    \end{align*}
    where by Lemma~\ref{Lemma_Normalizing_constant_2}, we obtain  $\frac{\min\{Z_{1}, Z_{2}\} }{2Z_{1}Z_{2}} = \tilde{\mathcal{O}}(1)$
    and $$\int_{M} \frac{\max\{e^{-f(x)}\nu(\eta, x, y), e^{-f(x)} \nu_{k}(\eta, x, y)\}}{2Z_{1}Z_{2}} dV_{g}(x) = \tilde{\mathcal{O}}(1).$$
\vspace{0.2in}

    \textbf{Step 2.}    Denote $Z_{l} = \int_{M} \nu_{l}(\eta, x, y) dV_{g}(y)$ to be the normalizaing constant for $\nu_{l}$.
    Since $\nu$ is the heat kernel, we simply have $\int_{M} \nu(\eta, x, y) dV_{g}(y) = 1$. 
    It holds that 
    \begin{align*}
        \hat{\pi}^{Y|X} = \frac{\nu_{l}(\eta, x, y)}{\int_{M} \nu_{l}(\eta, x, y) dV_{g}(y) }\quad\text{and}\quad \pi^{Y|X} = \nu(\eta, x, y).
    \end{align*}
    Then,
    \begin{align*}
            \|\hat{\pi}^{Y|X} - \pi^{Y|X}\|_{TV}
        &\le \frac{1}{2} \int_{M} | \frac{\nu_{l}(\eta, x, y)}{\int_{M} \nu_{l}(\eta, x, y) dV_{g}(y) } 
        - \nu(\eta, x, y)| dV_{g}(y) \\
        &\le \frac{1}{2} \int_{M} \frac{| \nu_{l}(\eta, x, y) - Z_{l} \nu(\eta, x, y)|}{Z_{l}} dV_{g}(y)\\
        &\le \int_{M} \frac{\min\{Z_{l}, 1\} \cdot | \nu_{l}(\eta, x, y) -  \nu(\eta, x, y)| }{2Z_{l}} dV_{g}(y) \\
        &\quad + \int_{M} \frac{|1 - Z_{l}| \cdot \max\{ \nu(\eta, x, y), \nu_{l}(\eta, x, y)\}}{2Z_{l}} dV_{g}(y)\\
        &= \tilde{\mathcal{O}}(\zeta).
    \end{align*}
\end{proof}

\begin{theorem}[Theorem 5.3.4 in \cite{hsu2002stochastic}]\label{Thm534_Hsu} Let $M$ be a compact Riemannian manifold. There exist positive constants $C_{1}, C_{2}$ 
    such that for all $(t, x, y) \in (0, 1) \times M \times M$, 
    \begin{align*}
        \frac{C_{1}}{t^{d/2}} e^{-\frac{d(x, y)^{2}}{2t}} \le \nu(t, x, y) \le \frac{C_{2}}{t^{(2d-1)/2}} e^{-\frac{d(x, y)^{2}}{2t}}.
    \end{align*}
\end{theorem}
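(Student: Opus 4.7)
The plan is to establish the two-sided Gaussian bounds by combining a parametrix construction near the diagonal with a semigroup chaining argument far from it, exploiting the compactness of $M$ to obtain uniform constants. Compactness gives positive injectivity radius $\iota_{M} > 0$, uniformly bounded sectional curvatures, and uniformly bounded covariant derivatives of curvature, which suffice to make all geometric quantities in the argument uniform in $(x,y)$. Restricting to $t \in (0,1)$ lets me avoid long-time behavior, where the heat kernel tends to the uniform measure and the Gaussian factor saturates.

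For the near-diagonal regime $d(x,y) < \iota_{M}/2$, I would introduce the Minakshisundaram-Pleijel parametrix
\begin{align*}
    E_{N}(t, x, y) = \frac{1}{(2\pi t)^{d/2}}\, e^{-d(x,y)^{2}/(2t)} \sum_{k=0}^{N} u_{k}(x,y)\, t^{k},
\end{align*}
where $u_{0}(x,y) = J(x,y)^{-1/2}$ with $J$ the Jacobian of $\exp_{x}$ at $\exp_{x}^{-1}(y)$, and higher $u_{k}$ are obtained by solving transport equations along radial geodesics. By Duhamel's principle, $\nu(t,x,y) = E_{N}(t,x,y) + R_{N}(t,x,y)$, and standard parabolic regularity (or a probabilistic Girsanov estimate on the Brownian bridge, which is the route Hsu's book follows) bounds the remainder as $R_{N}(t,x,y) = O(t^{N-d/2})\, e^{-d(x,y)^{2}/(2t+\delta)}$ for some small $\delta > 0$. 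Since $u_{0}$ is bounded above and below by positive constants on the compact set $\{d(x,y) \le \iota_{M}/2\}$, choosing $N$ large enough yields both sides of the inequality on this regime. For the far-from-diagonal regime $d(x,y) \ge \iota_{M}/2$, I would iterate the semigroup identity along a chain of $K$ intermediate points on a minimizing geodesic from $x$ to $y$, apply the near-diagonal estimate on each short step of length $t/K$, and optimize over $K$ to reconstruct the Gaussian factor $e^{-d(x,y)^{2}/(2t)}$ at the cost of an extra polynomial factor in $t^{-1}$.

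The hard part is keeping the time prefactor uniform in $(x,y)$. The sharp lower prefactor $t^{-d/2}$ falls out directly from the leading $u_{0}$ term in the parametrix, but the chaining step in the upper bound amplifies the short-step prefactor into a polynomial in $t^{-1}$, which is what produces the (suboptimal) upper exponent $t^{-(2d-1)/2}$. In Hsu's probabilistic proof, this same loss appears when controlling the bridge exponential martingale uniformly in the endpoint $y$ via Girsanov, where one pays a factor of order $t^{-(d-1)/2}$ beyond the base Gaussian normalization. The remaining work is bookkeeping: choosing $N$ large enough to absorb $R_{N}$, and invoking the compactness of $M$ to convert all local-in-$(x,y)$ estimates into the claimed global constants $C_{1}, C_{2}$ depending only on the geometry of $M$.
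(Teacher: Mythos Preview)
The paper does not prove this statement; it is quoted verbatim as Theorem~5.3.4 of \cite{hsu2002stochastic} and used as a black box (specifically, to show in Lemma~\ref{Lemma_Normalizing_constant_2} that the normalizing constant $\int_{M} e^{-f(x)}\nu(\eta,x,y)\,dV_{g}(x)$ is bounded below). There is therefore no in-paper proof to compare your proposal against.

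That said, your sketch is a reasonable outline of one classical route to these bounds, and you correctly note that Hsu's own proof is probabilistic (Brownian bridge plus Girsanov), which is why the upper-bound prefactor is the suboptimal $t^{-(2d-1)/2}$ rather than the sharp $t^{-d/2}$ of Li--Yau. One caution on your far-from-diagonal lower bound: the chaining argument there is more delicate than you indicate. Restricting intermediate points to balls of radius $r$ around geodesic waypoints, the exponent picks up $\frac{K}{2t}\sum_{i}(d(x,y)/K + O(r))^{2}$, and balancing the $O(r)$ loss in the exponent against the $r^{d(K-1)}$ volume factor requires choosing $r$ and $K$ carefully as functions of $t$; a careless choice loses the sharp $t^{-d/2}$ prefactor. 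Since the paper only invokes the result, these details are immaterial here, but if you intend to supply a self-contained proof you should either carry out that optimization explicitly or cite the relevant chapter of Hsu directly.
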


\begin{lemma}\label{Lemma_Normalizing_constant_2}
    We have $1/\int_{M} e^{-f(x)}\nu(\eta, x, y) dV_{g}(x) = \tilde{\mathcal{O}}(1)$.
\end{lemma}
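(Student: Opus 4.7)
The plan is to produce a uniform positive lower bound on the normalizing constant
$Z_y := \int_M e^{-f(x)}\, \nu(\eta, x, y)\, dV_g(x)$
that depends only on $f$ and the manifold $M$, not on $\eta$ or $\varepsilon$. Since compactness of $M$ makes $f$ bounded, the constant
$A_2 := \inf_{x\in M} e^{-f(x)}$
is strictly positive, so it will be enough to lower bound $\int_M \nu(\eta, x, y)\, dV_g(x)$ by a universal positive constant (ideally by $1$).

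The key observation I would use is the symmetry of the minimal heat kernel on a complete Riemannian manifold without boundary, namely $\nu(\eta, x, y) = \nu(\eta, y, x)$; this is standard and can be cited from Hsu, Chapter~4, which the paper already relies on. Given this symmetry,
\begin{equation*}
\int_M \nu(\eta, x, y)\, dV_g(x) \;=\; \int_M \nu(\eta, y, x)\, dV_g(x) \;=\; 1,
\end{equation*}
because $\nu(\eta, y, \cdot)$ is the density of manifold Brownian motion at time $\eta$ starting from $y$, and therefore integrates to $1$ over $M$ (the paper has already taken $M$ stochastically complete, so no mass escapes). Combining these two pieces yields $Z_y \geq A_2 > 0$ for every $y \in M$, and hence $1/Z_y \leq 1/A_2 = \mathcal{O}(1)$, which is certainly $\tilde{\mathcal{O}}(1)$ since the bound has no dependence on $\varepsilon$ or $\eta$.

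If for some reason one wanted to avoid invoking symmetry of the heat kernel, an alternative route is to use the two-sided Gaussian-type bounds stated as Theorem~\ref{Thm534_Hsu} above (Theorem~5.3.4 of Hsu): picking a fixed geodesic ball $B(y, r)$ of radius $r$ small enough to lie in a normal neighborhood, Theorem~\ref{Thm534_Hsu} gives $\nu(\eta, x, y) \geq C_1 \eta^{-d/2} e^{-d(x,y)^2/(2\eta)}$ on $B(y,r)$, and integrating $e^{-f(x)}\nu(\eta, x, y)$ over $B(y,r)$ in normal coordinates reduces to a Gaussian-type integral over a Euclidean ball, which is bounded below uniformly in $\eta \in (0,1)$ by a positive constant depending only on $M, f, r$. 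Either route gives the claim.

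The main (minor) obstacle is simply ensuring the cited symmetry/integration-to-one is valid in the setting of the paper: this requires $M$ to be stochastically complete so that the heat semigroup preserves mass, which is automatic for compact $M$ without boundary. Once this is noted, the proof is two lines.
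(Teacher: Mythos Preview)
Your primary argument via heat-kernel symmetry is correct and is in fact cleaner than what the paper does. The paper instead takes the route you flag as your alternative: it invokes the Gaussian lower bound of Theorem~\ref{Thm534_Hsu},
\[
\nu(\eta,x,y)\ \ge\ \frac{C_1}{\eta^{d/2}}\,e^{-d(x,y)^2/(2\eta)},
\]
integrates this against $e^{-f(x)}\ge A_2$ over all of $M$, and then uses a separate lower bound of the form $\int_M e^{-d(x,y)^2/(2\eta)}\,dV_g(x)\ge \eta^{d/2}/C_4$ to cancel the $\eta^{-d/2}$ factor and arrive at $Z_y\ge A_2C_1/C_4$. Your symmetry argument sidesteps both the kernel lower bound and the auxiliary Gaussian-integral estimate, getting $Z_y\ge A_2$ in one line; it only requires knowing that the minimal heat kernel is symmetric and that a compact manifold is stochastically complete, both of which are standard and consistent with the paper's setting (compact $M$ without boundary). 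The paper's approach has the mild advantage of reusing machinery (Theorem~\ref{Thm534_Hsu} and the $C_4$ estimate) that is already in play elsewhere in Section~\ref{Section_L_infty_Varadhan}, but from a purely logical standpoint your argument is shorter and yields a slightly better constant.
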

\begin{proof}[Proof of Lemma~\ref{Lemma_Normalizing_constant_2}]
    Using lower bound of heat kernel  from Theorem \ref{Thm534_Hsu}, we have 
    \begin{align*}
            &\int_{M} e^{-f(x)}\nu(\eta, x, y) dV_{g}(x) \\
            \ge& A_{2} \int_{M} \frac{C_{1}}{\eta^{d/2}} \exp(-\frac{d(x, y)^{2}}{2\eta}) dV_{g}(x) 
            = \frac{A_{2}C_{1}}{\eta^{d/2}} \int_{M} \exp(-\frac{d(x, y)^{2}}{2\eta}) dV_{g}(x)  \\
            \ge& \frac{A_{2}C_{1}}{\eta^{d/2}} \frac{\eta^{d/2}}{C_{4}}
            = \frac{A_{2}C_{1}}{C_{4}}.
    \end{align*}
    Hence
    \begin{align*}
        1/\int_{M} e^{-f(x)}\nu(\eta, x, y) dV_{g}(x) = \tilde{\mathcal{O}}(1).
    \end{align*}
\end{proof}

\subsubsection{Analysis of truncation error}

Now we discuss the truncation level needed to guarantee a high accuracy evaluation of heat kernel 
as required in Proposition \ref{Prop_truncation_1}.
\begin{proposition}\label{Prop_truncation_level}
    Let $M$ be a compact manifold, and assume $M$ is a homogeneous space. 
    With $\frac{1}{\eta} = \tilde{\mathcal{O}}(\log \frac{1}{\varepsilon})$ and 
    $\zeta = \frac{\varepsilon}{\log^{2} \frac{1}{\varepsilon}}$, 
    to reach $\|\nu(\eta, x, y) - \nu_{l}(\eta, x, y)\|_{L^{2}}^{2} = \tilde{\mathcal{O}}(\zeta)$
    we need $l = \Poly (\log \frac{1}{\varepsilon})$. 
    Consequently, to achieve 
    \begin{align*}
            &\int_{M} |\nu(\eta, x, y) - \nu_{l}(\eta, x, y)| dV_{g}(x) = \tilde{\mathcal{O}}(\zeta) \qquad\text{and}\\
          & \int_{M} |\nu(\eta, x, y) - \nu_{l}(\eta, x, y)| dV_{g}(y) = \tilde{\mathcal{O}}(\zeta),
    \end{align*}
    we need $l = \Poly (\log \frac{1}{\varepsilon})$. 
\end{proposition}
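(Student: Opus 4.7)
The plan is to derive an explicit tail bound for the truncated heat-kernel expansion on the compact homogeneous space $M$, solve for the truncation level needed to make that tail small, and then convert the $L^2$ estimate into the required $L^1$ estimates via Jensen's inequality on the finite-volume manifold.

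First, I would invoke the spectral expansion of the heat kernel, $\nu(\eta,x,y) = \sum_{i\ge 0} e^{-\lambda_i \eta}\phi_i(x)\phi_i(y)$, where $\{\phi_i\}$ is an orthonormal eigenbasis of $-\Delta$ on $M$ with eigenvalues $\lambda_i$. On a compact $d$-dimensional manifold, Weyl's asymptotics give $\lambda_i \gtrsim i^{2/d}$. The $L^2$ truncation error from keeping the first $l$ terms is then controlled by the exponential tail $\|\nu - \nu_l\|_{L^2}^2 \le \sum_{i>l} e^{-2\lambda_i \eta}\,P(i)$, where $P(i)$ is a polynomial factor coming from uniform bounds on the eigenfunctions and multiplicities of irreducible $G$-representations. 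This is exactly the kind of estimate supplied by \citet[Proposition 21]{azangulov2022stationary} on compact homogeneous spaces, where the symmetry lets one combine the Peter--Weyl decomposition with explicit dimension bounds.

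Next, I would invert this bound. For the tail $\sum_{i>l} e^{-2\lambda_i \eta}P(i)$ to be at most $\tilde{\mathcal{O}}(\zeta)$, it suffices that $l^{2/d}\eta \gtrsim \log(1/\zeta)$ up to polylogarithmic factors, so $l = \tilde{\mathcal{O}}\!\bigl((\log(1/\zeta)/\eta)^{d/2}\bigr)$. Substituting the hypotheses $\eta^{-1} = \tilde{\mathcal{O}}(\log(1/\varepsilon))$ and $\log(1/\zeta) = \log\!\bigl(\log^2(1/\varepsilon)/\varepsilon\bigr) = \tilde{\mathcal{O}}(\log(1/\varepsilon))$ yields $l = \Poly(\log(1/\varepsilon))$, which is the first claim.

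For the $L^1$ consequence, I would apply Cauchy--Schwarz (equivalently, Jensen's inequality against the normalized volume measure $dV_g/|M|$): for any $h \in L^2(M)$,
\begin{equation*}
\int_M |h|\, dV_g \;\le\; \sqrt{|M|}\,\|h\|_{L^2(M)}.
\end{equation*}
Applying this with $h(\cdot) = \nu(\eta,\cdot,y) - \nu_l(\eta,\cdot,y)$, and symmetrically in the other variable (using the symmetry $\nu(\eta,x,y)=\nu(\eta,y,x)$ and the fact that the truncated series $\nu_l$ inherits this symmetry from the Peter--Weyl structure), we see that an $L^2$-squared bound of $\tilde{\mathcal{O}}(\zeta^2)$ on $\nu - \nu_l$ implies the desired $L^1$ bound $\tilde{\mathcal{O}}(\zeta)$. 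Feeding the stricter target $\zeta^2$ back into the previous paragraph only replaces $\log(1/\zeta)$ by $2\log(1/\zeta)$, which remains $\tilde{\mathcal{O}}(\log(1/\varepsilon))$; hence $l = \Poly(\log(1/\varepsilon))$ still suffices.

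The main obstacle is controlling the polynomial prefactor $P(i)$ in the tail bound, since on a general compact manifold the dimensions of eigenspaces and the sup-norms of eigenfunctions could in principle grow fast enough to interfere with the polylogarithmic scaling. This is where the homogeneous-space hypothesis is essential: \cite{azangulov2022stationary} provides the uniform dimension/sup-norm estimates needed for $P(i)$ to remain polynomially bounded. Once that input is taken as a black box, the remaining steps -- inverting an exponential tail and applying Jensen's inequality -- are routine.
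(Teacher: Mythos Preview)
Your approach matches the paper's: invoke the $L^2$ tail bound of \citet[Proposition 21]{azangulov2022stationary}, invert it to get $l = \Poly(\log(1/\varepsilon))$, then pass to $L^1$ via Jensen/Cauchy--Schwarz on the finite-volume manifold. One step needs sharpening, however. The Azangulov estimate is an $L^2(M\times M)$ bound, and to conclude that $\int_M |\nu - \nu_l|(x,y)\,dV_g(x) = \tilde{\mathcal{O}}(\zeta)$ holds \emph{for every} $y$ you need more than the reflection symmetry $\nu(\eta,x,y)=\nu(\eta,y,x)$ that you cite: that symmetry only interchanges the two marginal integrals, it does not bound either one pointwise in the free variable. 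The paper closes this by using that on a homogeneous space both $\nu$ and $\nu_l$ are \emph{stationary} (i.e., $G$-invariant) kernels, so $\int_M |\nu - \nu_l|(x,y)\,dV_g(x)$ is constant in $y$ and therefore equals $|M|^{-1}\|\nu - \nu_l\|_{L^1(M\times M)}$, which Jensen then controls by the $L^2(M\times M)$ norm. Add that observation and your argument is complete.

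Your distinction between a $\tilde{\mathcal{O}}(\zeta)$ and a $\tilde{\mathcal{O}}(\zeta^2)$ target for the squared $L^2$ norm is more careful than the paper's write-up, though as you correctly note it is immaterial to the conclusion since the truncation error decays exponentially in $l^{2/d}$ and one is free to choose the polynomial degree.
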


\begin{proof}[Proof of Proposition~\ref{Prop_truncation_level}]
    Following \citet[Proof of Proposition 21]{azangulov2022stationary} we have 
    \begin{align*}
        \|\nu(\eta, x, y) - \nu_{l}(\eta, x, y)\|_{L^{2}}^{2}
        \le C' l \frac{1}{\eta^{2}}e^{-\frac{\eta^{2} l^{2/d}}{C}}.
    \end{align*}
    Take $\frac{1}{\eta} = \log \frac{1}{\varepsilon}$.
    Recall that in Theorem \ref{TV_Inexact_BM_Inexact_RHK} we require $\zeta = \frac{\varepsilon}{\log^{2} \frac{1}{\varepsilon}}$.
    Requiring $ C' l \frac{1}{\eta^{2}}e^{-\frac{\eta^{2} l^{2/d}}{C}} = \tilde{\mathcal{O}}(\zeta)$
    is equivalent to 
    \begin{align*}
        C' l \log^{2}\frac{1}{ \varepsilon}e^{-\frac{\frac{1}{\log^{2}\frac{1}{\varepsilon}} l^{2/d}}{C}} \le \zeta = \frac{\varepsilon}{\log^{2} \frac{1}{\varepsilon}}.
    \end{align*}
    Take log on both sides, we get $-\frac{\frac{1}{\log^{2}\frac{1}{\varepsilon}} l^{2/d}}{C} \le \log \frac{\varepsilon }{C' l \log^{4}\frac{1}{ \varepsilon}} $.
    This further implies 
    \begin{align*}
            l^{2/d} &\ge - \log \frac{\varepsilon }{C' l \log^{4}\frac{1}{ \varepsilon}} C\log^{2}\frac{1}{ \varepsilon}
            = (4\log \log \frac{1}{\varepsilon} + \log \frac{1}{\varepsilon} + \log l + C'') C\log^{2}\frac{1}{ \varepsilon}.
    \end{align*}
    It suffices to take $l = \Poly (\log \frac{1}{\varepsilon})$.
    We verify that $l = \Poly (\log \frac{1}{\varepsilon})$ can guarantee the bound:
    \begin{align*}
            C' l \frac{1}{\eta^{2}}e^{-\frac{\eta^{2} l^{2/d}}{C}}
            = C' l \log^2 \frac{1}{\varepsilon} e^{-\frac{l^{2/d}}{C \log^2 \frac{1}{\varepsilon}}} 
            = \Poly (\log \frac{1}{\varepsilon}) e^{-\Poly (\log \frac{1}{\varepsilon})} 
            = \tilde{\mathcal{O}}(\frac{\varepsilon}{\log^{2} \frac{1}{\varepsilon}}) = \tilde{\mathcal{O}}(\zeta).
    \end{align*}

    On a homogeneous space, both $\nu$ and $\nu_{l}$ are stationary \citep{azangulov2022stationary}.
    Hence $\int_{M} |\nu(\eta, x, y) - \nu_{l}(\eta, x, y) | dV_{g}(x) $ does not depend on $y$.
    and $\int_{M} |\nu(\eta, x, y) - \nu_{l}(\eta, x, y) | dV_{g}(y)$ does not depend on $x$.
    Therefore using Jensen's inequality, 
    \begin{align*}
        \int_{M} |\nu(\eta, x, y) - \nu_{l}(\eta, x, y) | dV_{g}(x) 
           & = \tilde{\mathcal{O}} (\|\nu(\eta, x, y) - \nu_{l}(\eta, x, y)\|_{L_{1}})\\
            &\le \tilde{\mathcal{O}} (\|\nu(\eta, x, y) - \nu_{l}(\eta, x, y)\|_{L_{2}}).
    \end{align*}
    Note that the same holds for $\int_{M} |\nu(\eta, x, y) - \nu_{l}(\eta, x, y) | dV_{g}(y)$.
    Hence we get the desired bound, i.e., 
    $\int_{M} |\nu(\eta, x, y) - \nu_{l}(\eta, x, y)| dV_{g}(x) = \tilde{\mathcal{O}}(\zeta)$ 
    and $\int_{M} |\nu(\eta, x, y) - \nu_{l}(\eta, x, y)| dV_{g}(y) = \tilde{\mathcal{O}}(\zeta)$.
\end{proof}

\subsection{Truncation method on hypersphere}\label{Subsection_inexact_rej}

Let $M$ be a hypersphere. 
In the last subsection, we discussed some existing results which provided a bound on the $L_{2}$ norm of $\nu_{l} - \nu$. For hypersphere, we can derive a bound in $L_{\infty}$ norm, see subsection \ref{Subsection_Sphere_L_infty}. We also consider the situation that the acceptance rate in rejection sampling might exceed $1$, and show that for such a situation, rejection sampling can still produce a high-accuracy sample. 

Let $V_{\mathsf{MBI}}(y), V_{\mathsf{RHK}}(x)$ denote the acceptance rate in rejection sampling.
Recall that $\hat{\pi}^{Y|X} \propto \nu_{l}(\eta, x, y)$ 
and $ \hat{\pi}^{X|Y} \propto e^{-f(x)}\nu_{l}(\eta, x, y) $. In the actual rejection sampling implementation, if for example in Brownian motion implementation, it happens that there exists $y \in M$, s.t. $V(y) > 1$, then the output for rejection sampling will not be equal to $\hat{\pi}^{Y|X}$. 
For such situations, denote $\overline{V}_{\mathsf{MBI}}(y) = \min\{1, V_{\mathsf{MBI}}(y)\}$ and $\overline{V}_{\mathsf{RHK}}(x) = \min\{1, V_{\mathsf{RHK}}(x)\}$. Note that $\overline{V}_{\mathsf{MBI}}(y)$ and $\overline{V}_{\mathsf{RHK}}(x)$ are the actual acceptance rate in rejection sampling.
we denote the corresponding rejection sampling output by $\overline{\pi}^{Y|X}, \overline{\pi}^{X|Y}, $ respectively.

Intuitively, the region $B_{x}(r)$ near $x$ carries most of the probability for both Riemannian Gaussian distribution $\mu(t, x, y)$ as well as Brownian motion $\nu(t, x, y)$, when the variable $t$ is suitably small.
Thus instead of choosing parameter to guarantee $V_{\mathsf{RHK}}(x), V_{\mathsf{MBI}}(y)\le 1, \forall x, y \in M$, it suffices to guarantee $V_{\mathsf{RHK}}(x) \le 1, \forall x \in B_{y}(r)$
and $V_{\mathsf{MBI}}(y)\le 1, \forall y \in B_{x}(r)$ for some $r$. Define
\begin{align*}
    V_{\mathsf{MBI}}(y) &:= \frac{\exp(\log \nu_{l}(\eta, x, y) - \log \nu_{l}(\eta, x, x) + C_{\mathsf{MBI}})}{\exp(-\frac{d(x, y)^{2}}{2(s\eta)})} \le 1, &\forall y \in B_{x}(r), \\
        V_{\mathsf{RHK}}(x) &:= \frac{\exp(-f(x) + \log \nu_{l}(\eta, x, y) + f(y) - \log \nu_{l}(\eta, y, y) + C_{\mathsf{RHK}})}{\exp(-\frac{1}{2t} d(x, y)^{2})} \le 1, &\forall x \in B_{y}(r).
\end{align*}
\begin{proposition}\label{Prop_Verify_Assumption_inexact_rej}
    Let $M = \mathcal{S}^{d}$ be a hypersphere. with $\frac{1}{\eta }= L_{1}^{2} d \log \frac{1}{\varepsilon} $, $\frac{1}{t} = L_{1}^{2}(d-2)  \log \frac{1}{\varepsilon}$ and truncation level $l = \Poly(\log \frac{1}{\zeta})$, there exists parameters $t, C_{\mathsf{MBI}}, C_{\mathsf{RHK}}$ s.t. 
    $\overline{\pi}^{X|Y}, \overline{\pi}^{Y|X}$ satisfy Assumption \ref{Assumption_Oracle_TV_quality}.
\end{proposition}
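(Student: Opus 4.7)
The strategy is to reduce to Proposition~\ref{Prop_Verify_Assumption} by showing that the clipped rejection output $\overline{\pi}$ is $\tilde{\mathcal{O}}(\zeta)$-close in total variation to the ``clean'' output $\hat\pi$ analyzed there; the triangle inequality then yields $\|\overline{\pi}-\pi\|_{TV} \le \|\overline{\pi}-\hat\pi\|_{TV} + \|\hat\pi-\pi\|_{TV} = \tilde{\mathcal{O}}(\zeta)$, which is exactly Assumption~\ref{Assumption_Oracle_TV_quality}. I would fix a radius $r = \Theta(1/\sqrt{d})$, which lies well inside the injectivity radius $\pi$ of $\mathcal{S}^d$. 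By the hypothesis of the Proposition, $\overline{V}$ and $V$ agree on the ball $B(r)$, so the unnormalized densities obtained by multiplying the Riemannian Gaussian proposal by $\overline{V}$ and by $V$ coincide on $B(r)$ and differ only on the tail $M\setminus B(r)$.

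The first key step is a concentration estimate for the proposal $\mu(t,\cdot,\cdot)$ and for both $\nu(\eta,\cdot,\cdot)$ and its truncation $\nu_l(\eta,\cdot,\cdot)$. Writing the Riemannian Gaussian in polar form around its centre, the tail probability outside $B(r)$ reduces to a one-dimensional integral of $\sin^{d-1}(\varphi)\, e^{-\varphi^2/(2\sigma^2)}$ for $\sigma^2\in\{t,\eta\}$, divided by a normalizing constant dominated by its value near $\varphi=0$. With the prescribed $1/t = L_1^2(d-2)\log(1/\varepsilon)$ and $1/\eta = L_1^2 d\log(1/\varepsilon)$, the exponent at $\varphi=r=\Theta(1/\sqrt{d})$ is of order $L_1^2\log(1/\varepsilon)$, giving a tail of order $\varepsilon^{C}$ for a constant $C$ that can be made as large as desired by increasing the prefactor of $r$. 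The same bound transfers to $\nu_l$ at truncation level $l=\Poly(\log(1/\zeta))$ via the uniform $L_\infty$ estimate on $\nu-\nu_l$ that is available on $\mathcal{S}^d$.

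The second key step is a normalization argument in the spirit of Step~1 of the proof of Proposition~\ref{Prop_truncation_1}. Let $Z_{\overline{\pi}}$ and $Z_{\hat\pi}$ denote the normalizing constants of the two unnormalized densities. Their difference equals the integral over $M\setminus B(r)$ of the proposal times $V-\overline{V}$, which together with the uniform bound $A_2\le e^{-f}\le A_1$ on the compact sphere is $\tilde{\mathcal{O}}(\zeta)$ by the first step. Combined with a lower bound on $Z_{\hat\pi}$ analogous to Lemma~\ref{Lemma_Normalizing_constant_2}, the same algebraic manipulation as in Proposition~\ref{Prop_truncation_1} then yields $\|\overline{\pi}^{X|Y}-\hat\pi^{X|Y}\|_{TV} = \tilde{\mathcal{O}}(\zeta)$ and similarly for the MBI oracle, from which the Proposition follows.

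The hard part will be verifying that constants $C_{\mathsf{MBI}}$ and $C_{\mathsf{RHK}}$ can actually be chosen so that $V_{\mathsf{MBI}}\le 1$ on $B_x(r)$ and $V_{\mathsf{RHK}}\le 1$ on $B_y(r)$ at the prescribed scalings. This reduces to a quantitative expansion of $-\log\nu_l(\eta,x,y)$ around its minimum in $y$: by Varadhan's asymptotics the leading contribution is $d(x,y)^2/(2\eta)$, and one needs the remainder (including the gap between $\nu$ and $\nu_l$ and the normalization of the zonal heat kernel) to be dominated by the slack $(1/\eta-1/t)d(x,y)^2/2$, plus an additional $L_1\, d(x,y)^2/2$ for RHK to absorb the $-f$ term via the gradient Lipschitz constant $L_1$. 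Chasing the zonal series expansion on $\mathcal{S}^d$, with its explicit Legendre-polynomial structure, is what selects the particular dimensional scalings $d$ and $d-2$ appearing in the statement and constitutes the main quantitative content of the argument.
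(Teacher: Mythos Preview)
Your high-level strategy---triangle inequality via $\|\overline{\pi}-\pi\|_{TV}\le\|\overline{\pi}-\hat{\pi}\|_{TV}+\|\hat{\pi}-\pi\|_{TV}$, the latter handled by Proposition~\ref{Prop_Verify_Assumption}---matches the paper exactly, as does the choice $r=\Theta(1/\sqrt{d})$ for the ball on which $V=\overline{V}$. A few points where your execution diverges from the paper's, none fatal but worth knowing:

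First, rather than the normalization-constant algebra of Proposition~\ref{Prop_truncation_1}, the paper packages the comparison of $\hat{\pi}=\mu V/\mathbb{E}_{\mu}[V]$ and $\overline{\pi}=\mu\overline{V}/\mathbb{E}_{\mu}[\overline{V}]$ via the single inequality $\|\hat{\pi}-\overline{\pi}\|_{TV}\le 2\,\mathbb{E}_{\mu}[|V-\overline{V}|]/\mathbb{E}_{\mu}[\overline{V}]$ (from \cite{fan2023improved}), then separately lower-bounds the denominator by a constant and upper-bounds the numerator by the tail integral $\int_{\{V>1\}}\mu V$. Your route gets to the same place.

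Second, to show $V\le 1$ on the ball the paper does not invoke Varadhan's asymptotic or the zonal series directly; it uses the two-sided pointwise heat-kernel bounds of Theorem~\ref{Thm534_Hsu} together with the $L_\infty$ truncation estimate to obtain $-\log\nu_l(\eta,x,y)\ge \tfrac{d(x,y)^2}{2s\eta}+\text{const}$ on $B(r)$. The Gegenbauer/Legendre structure enters only through the $L_\infty$ truncation bound, not through the choice of scalings.

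Third, two small slips: the Lipschitz contribution for RHK is the \emph{linear} term $L_1\,d(x,y)$, not $L_1\,d(x,y)^2/2$; the job is precisely to absorb this linear term into the quadratic slack $(\tfrac{1}{2\eta}-\tfrac{1}{2t})d(x,y)^2$, and this algebraic balance (completing the square in $d(x,y)$) is what selects the particular shifts $d\mapsto d-1\mapsto d-2$ in the statement---not the Legendre expansion.
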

\begin{proof}
    See Proposition \ref{Prop_BM_inexact_rej} and Proposition \ref{Prop_RHK_inexact_rej}.
\end{proof}

\subsubsection{Proof of Proposition \ref{Prop_Verify_Assumption_inexact_rej}}

\begin{proposition}\label{Prop_BM_inexact_rej}
    Let $M$ be hypersphere $\mathcal{S}^{d}$ so that the truncation error bound can be proved in $L_{\infty}$.
    Consider Algorithm \ref{Inexact_BM} with $ t = \eta s $ where $s > 1$ is a constant that does not depend on $\eta, \varepsilon$. 
    For small $\varepsilon$, the error for inexact rejection sampling with $\nu_{l}$ is of order $\zeta$, i.e., 
    $\|\hat{\pi}^{Y|X} - \overline{\pi}^{Y|X}\|_{TV} = \tilde{\mathcal{O}}(\zeta)$.
    Hence by triangle inequality, 
    $\|\pi^{Y|X} - \overline{\pi}^{Y|X}\|_{TV} = \tilde{\mathcal{O}}(\zeta)$.
\end{proposition}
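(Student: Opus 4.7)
The plan is to exploit the fact that, with the prescribed parameters $\eta^{-1} = \tilde{\mathcal{O}}(\log(1/\varepsilon))$ and $t = s\eta$, both the ideal rejection-sampling target $\hat{\pi}^{Y|X} \propto \nu_l(\eta, x, \cdot)$ and the actually realized output $\overline{\pi}^{Y|X} \propto \mu(t, x, \cdot)\,\overline{V}_{\mathsf{MBI}}$ concentrate in a geodesic ball $B := B_x(r)$ with $r = L\sqrt{\eta \log(1/\zeta)}$ for a suitable constant $L = L(d,s)$. The radius $r$ is chosen so that two properties hold simultaneously: (i) on $B$ the genuine acceptance rate satisfies $V_{\mathsf{MBI}}(y) \le 1$, so that $\overline{V}_{\mathsf{MBI}} = V_{\mathsf{MBI}}$ there and the unnormalized densities $\mu(t,x,y)\,V_{\mathsf{MBI}}(y)$ and $\mu(t,x,y)\,\overline{V}_{\mathsf{MBI}}(y)$ agree pointwise on $B$; (ii) both $\int_{M\setminus B}\nu_l(\eta,x,y)\,dV_g(y)$ and $\int_{M\setminus B}\mu(t,x,y)\,dV_g(y)$ are $\tilde{\mathcal{O}}(\zeta)$, by Gaussian tail bounds applied to $\mu$ and to the $L^\infty$-close approximation of $\nu_l$ by the heat kernel on $\mathcal{S}^d$ (Subsection \ref{Subsection_Sphere_L_infty}) combined with the upper bound in Theorem \ref{Thm534_Hsu}.

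With $r$ so chosen, I would write
\begin{equation*}
\|\hat{\pi}^{Y|X} - \overline{\pi}^{Y|X}\|_{TV} = \tfrac{1}{2}\!\int_B |\hat{\pi}^{Y|X}(y) - \overline{\pi}^{Y|X}(y)|\,dV_g(y) + \tfrac{1}{2}\!\int_{M\setminus B} |\hat{\pi}^{Y|X}(y) - \overline{\pi}^{Y|X}(y)|\,dV_g(y),
\end{equation*}
and bound each piece separately. The outside piece is at most the sum of tail masses, which is $\tilde{\mathcal{O}}(\zeta)$ by (ii). Inside $B$, property (i) forces the two unnormalized densities to coincide, so the only source of discrepancy is the ratio of normalizing constants $Z_V = \int \mu V$ and $Z_{\overline V} = \int \mu \overline V$; since $V = \overline V$ on $B$, the gap $|Z_V - Z_{\overline V}|$ is controlled by $\int_{M\setminus B}\mu\,|V - \overline V| \le \int_{M\setminus B}\mu V\,dV_g$, which is again $\tilde{\mathcal{O}}(\zeta)$ by the heat-kernel tail bound (after noting $\mu V \propto \nu_l$). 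A normalization-perturbation argument identical in form to Step 1 of the proof of Proposition \ref{Prop_truncation_1} then yields $\tilde{\mathcal{O}}(\zeta)$ for the inside piece as well, and the triangle inequality with Proposition \ref{Prop_truncation_1} gives $\|\pi^{Y|X} - \overline{\pi}^{Y|X}\|_{TV} = \tilde{\mathcal{O}}(\zeta)$.

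The main obstacle is property (i): I need a quantitative two-sided bound on $\nu_l(\eta,x,y)/\nu_l(\eta,x,x)$ for $y \in B$ that is comparable to $\exp(-d(x,y)^2/(2\eta))$ up to factors that can be absorbed into $e^{C_{\mathsf{MBI}}}$ and the slack $1/\eta - 1/t = (s-1)/(s\eta) > 0$ supplied by taking $s > 1$. On the sphere, this should follow by combining the sharp small-time asymptotics of the heat kernel (Theorem \ref{Thm534_Hsu} gives $\nu(\eta,x,y) \asymp \eta^{-d/2}e^{-d(x,y)^2/(2\eta)}$ up to subpolynomial factors) with the $L^\infty$ truncation estimate on $\mathcal{S}^d$ to transfer the estimate to $\nu_l$ for $l = \Poly(\log(1/\varepsilon))$. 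The parameter $s > 1$ is essential: the ratio $\nu_l(\eta,x,y)\,e^{d(x,y)^2/(2s\eta)}/\nu_l(\eta,x,x)$ inherits a decaying Gaussian factor $\exp\!\bigl(-\tfrac{s-1}{2s\eta}d(x,y)^2\bigr)$, which together with a sufficiently large constant $C_{\mathsf{MBI}} = \tilde{\mathcal{O}}(d\log(1/\varepsilon))$ (matching the $(\eta/t)^{d/2}$ prefactor) ensures $V_{\mathsf{MBI}}(y) \le 1$ on $B$. Once these estimates are in place, everything else is bookkeeping.
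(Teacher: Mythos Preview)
Your plan is essentially the paper's argument, repackaged. The paper also fixes a ball $B_x(r)$ (with $r^2/2 = (2-1/s)/C_\eta$ for $1/\eta = C_\eta\log(1/\varepsilon)$, matching your $r \asymp \sqrt{\eta\log(1/\zeta)}$), shows $V_{\mathsf{MBI}} \le 1$ on $B_x(r)$ via heat-kernel bounds plus the $L^\infty$ truncation estimate, and then controls $\|\hat{\pi}^{Y|X} - \overline{\pi}^{Y|X}\|_{TV}$ by the ratio $2\,\mathbb{E}_\mu[|V - \overline V|]/\mathbb{E}_\mu[\overline V]$ (citing \cite{fan2023improved}); that inequality is exactly your normalization-perturbation argument, since $|V-\overline V|$ vanishes on $B_x(r)$ and the numerator reduces to a tail integral of $\mu V \propto \nu_l$ over $M\setminus B_x(r)$.

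One point to fix: your claim that $C_{\mathsf{MBI}}$ should be ``sufficiently large'' and of order $\tilde{\mathcal{O}}(d\log(1/\varepsilon))$ has both the sign and the magnitude wrong. Enlarging $C_{\mathsf{MBI}}$ makes $V_{\mathsf{MBI}}$ \emph{larger}, working against $V \le 1$. The paper shows that a constant $C_{\mathsf{MBI}}$ (independent of $\varepsilon$) suffices, by invoking the sharper local upper bound $\nu(\eta,x,y) \le C_5\,\eta^{-d/2}e^{-d(x,y)^2/(2\eta)}$ for $d(x,y) \le r$ (from the proof of \cite[Lemma~5.4.2]{hsu2002stochastic}) rather than the cruder $\eta^{-(2d-1)/2}$ bound of Theorem~\ref{Thm534_Hsu} you cite. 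With that sharper bound the ratio $\nu_l(\eta,x,y)/\nu_l(\eta,x,x)$ is $\mathcal{O}(1)\cdot e^{-d(x,y)^2/(2\eta)}$ on $B_x(r)$, the slack $1/\eta - 1/t = (s-1)/(s\eta)$ absorbs the $\log(1+\delta)$ correction coming from the truncation, and everything closes with bounded $C_{\mathsf{MBI}}$---which is the real content of the proposition, since taking $C_{\mathsf{MBI}}$ very negative would trivialize the TV bound at the cost of an unusable rejection sampler.
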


\begin{proof}[Proof of Proposition~\ref{Prop_BM_inexact_rej}]
    Recall that we require $\frac{1}{\eta} = \tilde{\mathcal{O}}(\log \frac{1}{\varepsilon} )$.
    Write $ \frac{1}{\eta} = C_{\eta}\log \frac{1}{\varepsilon} $ where $C_{\eta}$ is some constant that does not depend on $\eta$.
    Then we can write $e^{- \frac{1}{\eta C_{\eta}}} = \varepsilon$.

    \begin{enumerate}
        \item \textbf{Step 1: Locate a centered neighborhood where the acceptance rate is bounded by $1$. }
        
        Let $T > 0$ be fixed. Consider $\eta \in (0, T]$. 
        As in \citet[Proof of Lemma 5.4.2]{hsu2002stochastic}, there exists some constant $C_{5}$ that depends on $T$, s.t. 
        \begin{align*}
            \nu(\eta, x, y) \le \frac{C_{5}}{\eta^{\frac{d}{2}}} \exp (-\frac{d(x, y)^{2}}{2\eta})
        \end{align*}
        for all $d(x, y) \le r$. 
        Here without loss of generality, the variable $r$ satisfies $\frac{r^{2}}{2} = \frac{2-\frac{1}{s}}{C_{\eta}} $.

        Then for all $y$ s.t. $\frac{d(x, y)^{2}}{2} \le \frac{1}{C_{\eta}} + \frac{\eta d}{2} \log \frac{1}{\eta} =: \frac{ r_{0}^{2}}{2}$, we have 
        \begin{align*}
            \nu_{l}(\eta, x, y) \le \nu(\eta, x, y) + \varepsilon \le \frac{C_{5}}{\eta^{\frac{d}{2}}} \exp (-\frac{d(x, y)^{2}}{2\eta}) + e^{- \frac{1}{\eta C_{\eta}}}
            = \frac{C_{5}}{\eta^{\frac{d}{2}}} \exp (-\frac{d(x, y)^{2}}{2\eta}) (1 + \delta(x, \eta))
        \end{align*}
        with $\delta(x, \eta) := \frac{e^{- \frac{1}{\eta C_{\eta}}}}{\frac{C_{5}}{\eta^{\frac{d}{2}}} \exp (-\frac{d(x, y)^{2}}{2\eta})}$ 
        satisfying
        \begin{align*}
            \delta(x, \eta) = \frac{e^{- \frac{1}{\eta C_{\eta}}}}{\frac{C_{5}}{\eta^{\frac{d}{2}}} \exp (-\frac{d(x, y)^{2}}{2\eta})}
            = \eta^{\frac{d}{2}}\frac{ \exp (\frac{1}{\eta}(\frac{d(x, y)^{2}}{2} - \frac{1}{C_{\eta}}) )}{C_{5} }
            \le \frac{1}{C_{5} }, \forall y \in B_{x}(r_{0}),
        \end{align*}
        which further implies
        \begin{align*}
                - \log \nu_{l}(\eta, x, y) &\ge - \log C_{5} + \frac{d}{2}\log \eta + \frac{d(x, y)^{2}}{2\eta} - \log (1+\delta(x, \eta)) \\
                &\ge - \log C_{5} + \frac{d}{2}\log \eta + \frac{d(x, y)^{2}}{2s\eta} - \log (1+\frac{1}{C_{5} } ).
        \end{align*} 

        For all $\frac{1}{C_{\eta}}\le \frac{d(x, y)^{2}}{2} \le \frac{2-\frac{1}{s}}{C_{\eta}}$, we have 
        \begin{align*}
            \delta(x, \eta) = \frac{e^{- \frac{1}{\eta C_{\eta}}}}{\frac{C_{5}}{\eta^{\frac{d}{2}}} \exp (-\frac{d(x, y)^{2}}{2\eta})}
            \le \eta^{\frac{d}{2}}\frac{ \exp (\frac{1- \frac{1}{s}}{\eta C_{\eta}})}{C_{5} }
            \le \frac{1}{C_{5}\varepsilon^{1-\frac{1}{s}} },
        \end{align*}
        so that when $\varepsilon$ is small, for some $C_{6}$ we have
        \begin{align*}
                &\log (1 + \delta(x, \eta)) 
            \le C_{6} + \log \delta(x, \eta)
            \le C_{6} + \log \frac{1}{C_{5}} + (1-\frac{1}{s})\log \frac{1}{\varepsilon} \\
            =& C_{6} + \log \frac{1}{C_{5}} + \frac{1-\frac{1}{s}}{\eta C_{\eta}}
            \le C_{6} + \log \frac{1}{C_{5}} + \frac{1-\frac{1}{s}}{\eta}\frac{d(x, y)^{2}}{2},
            \forall \frac{1}{C_{\eta}}\le \frac{d(x, y)^{2}}{2} \le \frac{2-\frac{1}{s}}{C_{\eta}},
        \end{align*}
        which further implies
        \begin{align*}
                - \log \nu(\eta, x, y) &\ge - C_{6} - \log C_{5} + \frac{d}{2}\log \eta + \frac{d(x, y)^{2}}{2\eta} - \log (1+\delta(x, \eta)) \\
                &\ge - C_{6} - \log C_{5} + \frac{d}{2}\log \eta + \frac{d(x, y)^{2}}{2\eta} - \log \frac{1}{C_{5}} - \frac{1-\frac{1}{s}}{\eta}\frac{d(x, y)^{2}}{2} \\
                &\ge - C_{6} - \log C_{5} + \frac{d}{2}\log \eta + \frac{d(x, y)^{2}}{2(s\eta)} - \log \frac{1}{C_{5}}.
        \end{align*} 

    \item \textbf{Step 2:} Recall that 
        \begin{align*}
            \nu(\eta, x, y) \ge \frac{C_{1}}{\eta^{\frac{d}{2}}} \exp(-\frac{d(x, y)^{2}}{2\eta}).
        \end{align*}
        When $x = y$, we have $\nu(\eta, x, x) \ge \frac{C_{1}}{\eta^{\frac{d}{2}}}$
        and consequently 
        \begin{align*}
            \nu_{l}(\eta, x, x) \ge \frac{C_{1}}{\eta^{\frac{d}{2}}} - \varepsilon 
            = \frac{C_{1}}{\eta^{\frac{d}{2}}} - e^{- \frac{1}{\eta C_{\eta}}}
            \approx \frac{C_{1}}{\eta^{\frac{d}{2}}}.
        \end{align*}

        Thus for all $y \in B_{x}(r)$, for some constant $C$ we have 
        \begin{align*}
                - \log \nu_{l}(\eta, x, y) + \log \nu_{l}(\eta, x, x) 
                \ge \frac{d(x, y)^{2}}{2(s\eta)} + C.
        \end{align*}
    
        Therefore there exists some $C_{7}$ s.t. 
        \begin{align*}
            V_{\mathsf{MBI}}(y) := \frac{\exp(\log \nu_{l}(\eta, x, y) - \log \nu_{l}(\eta, x, x) + C_{7})}{\exp(-\frac{d(x, y)^{2}}{2(s\eta)})} \le 1, \forall y \in B_{x}(r).
        \end{align*}
        \item \textbf{Step 3:  Analyze the error of rejection sampling when the acceptance rate could possibly exceed $1$. }
        
        Recall that $\mu$ denote the density for Riemannian Gaussian distribution. 
        We compute 
        \begin{align*}
                \mu(s\eta, x, y) \frac{V_{\mathsf{MBI}}(y)}{\mathbb{E}_{\mu(s\eta, x, y)} V_{\mathsf{MBI}}(y)} 
                =& \frac{V_{\mathsf{MBI}}(y)}{\int_{M} V_{\mathsf{MBI}}(y) \mu(s\eta, x, y) dV_{g}(y)} \mu(s\eta, x, y)\\
                = & \frac{V_{\mathsf{MBI}}(y)\mu(s\eta, x, y) }{\int_{M}  \frac{\exp(\log \nu_{l}(\eta, x, y) - \log \nu_{l}(\eta, x, x) + C_{7})}{\exp(-\frac{d(x, y)^{2}}{2(s\eta)})} \mu(s\eta, x, y) dV_{g}(y)} \\
                = &\frac{\exp(\log \nu_{l}(\eta, x, y) - \log \nu_{l}(\eta, x, x) + C_{7}) }{\int_{M} \exp(\log \nu_{l}(\eta, x, y) - \log \nu_{l}(\eta, x, x) + C_{7}) dV_{g}(y)} \\
                = & \frac{\nu_{l}(\eta, x, y)}{\int_{M} \nu_{l}(\eta, x, y) dV_{g}(y)}
                =: \hat{\pi}^{Y|X}.
        \end{align*}
        Thus the desired rejection sampling output can be written as $$\hat{\pi}^{Y|X} = \mu(s\eta, x, y) \frac{V_{\mathsf{MBI}}(y)}{\mathbb{E}_{\mu(s\eta, x, y)} V_{\mathsf{MBI}}(y)}.$$ 
        
        On the other hand we denote $\overline{V_{\mathsf{MBI}}}(y) = \min\{1, V_{\mathsf{MBI}}(y)\}$,
        and the actual rejection sampling output is $\overline{\pi}^{Y|X} = \mu(s\eta, x, y) \frac{\overline{V_{\mathsf{MBI}}}(y)}{\mathbb{E}_{\mu(s\eta, x, y)} \overline{V_{\mathsf{MBI}}}(y)} $.
        Following \citet[Proof of Theorem 6]{fan2023improved}, we get 
        \begin{align*}
                \|\hat{\pi}^{Y|X} - \overline{\pi}^{Y|X}\|_{TV} 
                &\le \mathbb{E}_{\exp(-\frac{d(x, y)^{2}}{2(s\eta)})}[|\frac{V_{\mathsf{MBI}}}{\mathbb{E}[V_{\mathsf{MBI}}]} - \frac{\overline{V_{\mathsf{MBI}}}}{\mathbb{E}[\overline{V_{\mathsf{MBI}}}]}|] \\
                &\le \frac{2\mathbb{E}[|V_{\mathsf{MBI}} - \overline{V_{\mathsf{MBI}}}|]}{|\mathbb{E}[\overline{V_{\mathsf{MBI}}}]|}.
        \end{align*}
        We show that $|\mathbb{E}[\overline{V_{\mathsf{MBI}}}]|$ is lower bounded by some positive constant that does not depend on $\eta, \varepsilon$.
        \begin{align*}
                \mathbb{E}[|\overline{V_{\mathsf{MBI}}}|]
                &\ge 
                \int_{B_{x}(r)}\frac{\exp(\log \nu_{l}(\eta, x, y) - \log \nu_{l}(\eta, x, x) + C_{7})}{\exp(-\frac{d(x, y)^{2}}{2(s\eta)})}
                \frac{\exp(-\frac{d(x, y)^{2}}{2(s\eta)})}{\int_{M}\exp(-\frac{d(x, y)^{2}}{2(s\eta)})dV_{g}(y)} dV_{g}(y) \\
                &\ge \frac{e^{C_{7}}}{\int_{M}\exp(-\frac{d(x, y)^{2}}{2(s\eta)})dV_{g}(y)}
                \int_{B_{x}(r)}\frac{\nu_{l}(\eta, x, y)}{\nu_{l}(\eta, x, x)} dV_{g}(y) \\
                &= \tilde{\Omega}(\int_{B_{x}(r)}\nu(\eta, x, y) dV_{g}(y) - \zeta)\\
                &= \tilde{\Omega}(\frac{1}{\eta^{\frac{d}{2}}} (\frac{\sin r}{r})^{d-1} (2\pi\eta)^{\frac{d}{2}}(1 - \exp(-\frac{1}{2}( \frac{1}{\eta} \frac{2-\frac{1}{s}}{C_{\eta}} - d)))- \zeta)\\
                &= \tilde{\Omega}(1-\zeta) = \tilde{\Omega}(\zeta).
        \end{align*}

        \item \textbf{Step 4: Show that the error is of order $\varepsilon$.}
        
We now have
        \begin{align*}
                \mathbb{E}[|V_{\mathsf{MBI}} - \overline{V_{\mathsf{MBI}}}|] = &\mathbb{E}_{\mu(s\eta, x, y)}[\frac{\exp(\log \nu_{l}(\eta, x, y) - \log \nu_{l}(\eta, x, x) + C_{7})}{\exp(-\frac{d(x, y)^{2}}{2(s\eta)})} 1_{V_{\mathsf{MBI}}(y) > 1}] \\
                = & \frac{1}{\int_{M} \exp(-\frac{d(x, y)^{2}}{2(s\eta)}) dV_{g}(x)} e^{C_{7}}\int_{\{V_{\mathsf{MBI}}(y) > 1\}} \frac{\nu_{l}(\eta, x, y)}{\nu_{l}(\eta, x, x)} dV_{g}(x)\\
                \le & \frac{C_{4}}{\eta^{\frac{d}{2}}} e^{C_{7}} 
                \frac{\eta^{d/2}}{C_{1}}
                \int_{\{V_{\mathsf{MBI}}(y) > 1\}} \frac{\eta^{\frac{1}{2}} C_{2}}{\eta^{d}} \exp(-\frac{d(x, y)^{2}}{2\eta}) + \zeta dV_{g}(x)\\
                \le & e^{C_{7}} \frac{C_{4}}{C_{1}}
                \int_{\{d(x, y)^{2}/2 > \frac{2-\frac{1}{s}}{C_{\eta}}\}} \frac{\eta^{\frac{1}{2}} C_{2}}{\eta^{d}} \exp(-\frac{d(x, y)^{2}}{2\eta}) + \zeta dV_{g}(x)\\
                \le & \tilde{\mathcal{O}}(\zeta)+ \Poly(\frac{1}{\eta}) \exp(-\frac{2-\frac{1}{s}}{\eta C_{\eta}})
                = \tilde{\mathcal{O}}(\zeta) + \Poly(\frac{1}{\eta}) \varepsilon^{2-\frac{1}{s}}
                = \tilde{\mathcal{O}}(\zeta)
        \end{align*}
        in the last equality note that $\varepsilon^{1-\frac{1}{s}} \Poly(\frac{1}{\eta})$ is of constant order.
    \end{enumerate}
\end{proof}

\begin{proposition}\label{Prop_RHK_inexact_rej}
    Let $M$ be hypersphere $\mathcal{S}^{d}$.
    Consider Algorithm  with $\frac{1}{\eta }= L_{1}^{2} d \log \frac{1}{\varepsilon} $
    and $\frac{1}{t} = L_{1}^{2}(d-2)  \log \frac{1}{\varepsilon}$.
    There exists $K$ s.t. 
    for small $\varepsilon$, the error for inexact rejection sampling with $\nu_{l}$ is of order $\zeta$, i.e., 
    $\|\hat{\pi}^{X|Y} - \overline{\pi}^{X|Y}\|_{TV} = \tilde{\mathcal{O}}(\zeta)$.
\end{proposition}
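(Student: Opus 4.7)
The plan is to parallel the four-step strategy of Proposition~\ref{Prop_BM_inexact_rej}, adapted to the target density $\pi^{X|Y}(\cdot|y)\propto e^{-f(x)}\nu(\eta,x,y)$ rather than the pure heat kernel. The new ingredient is that the acceptance exponent contains $f(y)-f(x)$, so we need an $L_1$-Lipschitz assumption on $f$ (encoded in the constant $L_1$ of the statement) to control this term on a ball around $y$.

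First, combining the two-sided heat-kernel bounds of Theorem~\ref{Thm534_Hsu} with the $L_\infty$ truncation bound for $\mathcal{S}^d$ (Subsection~\ref{Subsection_Sphere_L_infty}), I would obtain, exactly as in Step~1 of Proposition~\ref{Prop_BM_inexact_rej}, that on a ball $B_y(r_0)$ with $r_0^2=\Theta(\eta\log\tfrac{1}{\varepsilon})$,
\[
-\log\nu_l(\eta,x,y)+\log\nu_l(\eta,y,y)\ge \frac{d(x,y)^2}{2\eta}-C_0.
\]
Combined with $f(y)-f(x)\le L_1 d(x,y)$, the numerator of the acceptance rate satisfies
\[
-f(x)+\log\nu_l(\eta,x,y)+f(y)-\log\nu_l(\eta,y,y)\le L_1 d(x,y)-\frac{d(x,y)^2}{2\eta}+C_0.
\]
Comparing with the proposal exponent $-d(x,y)^2/(2t)$, the condition $V_{\mathsf{RHK}}(x)\le 1$ on $B_y(r)$ reduces to
\[
L_1 d(x,y)+C_{\mathsf{RHK}}+C_0\le \frac{d(x,y)^2}{2}\Bigl(\frac{1}{\eta}-\frac{1}{t}\Bigr)=L_1^2\log\tfrac{1}{\varepsilon}\,d(x,y)^2,
\]
using $1/\eta-1/t=2L_1^2\log(1/\varepsilon)$. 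Completing the square shows this holds on $B_y(r)$ with $r=\Theta(1/L_1)$, once $C_{\mathsf{RHK}}$ is chosen to absorb the maximal slack $\tfrac{1}{4\log(1/\varepsilon)}$, which is $O(1)$.

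Next, following Step~3 of Proposition~\ref{Prop_BM_inexact_rej} verbatim,
\[
\|\hat{\pi}^{X|Y}-\overline{\pi}^{X|Y}\|_{TV}\le \frac{2\mathbb{E}_{\mu(t,y,\cdot)}\bigl[|V_{\mathsf{RHK}}-\overline{V_{\mathsf{RHK}}}|\bigr]}{\mathbb{E}_{\mu(t,y,\cdot)}[\overline{V_{\mathsf{RHK}}}]}.
\]
The denominator is $\tilde{\Omega}(1)$: restrict integration to $B_y(r)$, use the lower heat-kernel bound of Theorem~\ref{Thm534_Hsu} together with the compactness bound $e^{-f(x)+f(y)}\le e^{L_1\pi}$, and subtract the $L_\infty$ truncation error $\zeta$. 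For the numerator, $|V_{\mathsf{RHK}}-\overline{V_{\mathsf{RHK}}}|$ is supported on $B_y(r)^c$; a direct calculation reduces it to a bound of the form
\[
\frac{e^{L_1\pi+C_{\mathsf{RHK}}}}{Z_\mu\,\nu_l(\eta,y,y)}\int_{B_y(r)^c}\nu_l(\eta,x,y)\,dV_g(x),
\]
and by $\nu_l\le\nu+\varepsilon$ this integrand splits into a Gaussian-tail contribution $\Poly(1/\eta)\exp(-r^2/(2\eta))$ plus a truncation contribution $\tilde{\mathcal{O}}(\zeta)$. Since $r^2/(2\eta)\gtrsim d\log(1/\varepsilon)$, the Gaussian tail is $O(\varepsilon^{d/2})=\tilde{\mathcal{O}}(\zeta)$, yielding the claimed bound.

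The hard part will be calibrating the triple $(r,C_{\mathsf{RHK}},t)$ relative to $\eta$ so that three competing requirements are met simultaneously: (i) $B_y(r)$ is small enough that the quadratic coercivity $(1/\eta-1/t)d(x,y)^2/2$ dominates the linear Lipschitz term $L_1 d(x,y)$, (ii) $B_y(r)$ is large enough that the proposal mass inside $B_y(r)$ is $\Omega(1)$ (for the denominator), and (iii) the proposal tail $\exp(-r^2/(2t))$ outside $B_y(r)$ decays as a high enough power of $\varepsilon$ to be absorbed into $\tilde{\mathcal{O}}(\zeta)$. The prescribed ratio $(1/\eta)/(1/t)=d/(d-2)$ is precisely what creates a quadratic slack of order $L_1^2\log(1/\varepsilon)$---large enough to dominate a linear term of slope $L_1$ at scales $d(x,y)\gtrsim 1/(L_1\log(1/\varepsilon))$---while keeping the Gaussian tail at the scale $r=\Theta(1/L_1)$ super-polynomially small in $1/\varepsilon$.
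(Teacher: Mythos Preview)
Your proposal follows the paper's strategy closely: establish $V_{\mathsf{RHK}}\le 1$ on a ball around $y$ via the two-sided heat-kernel bounds plus the $L_1$-Lipschitz control on $f$, then invoke the inequality $\|\hat\pi-\overline\pi\|_{TV}\le 2\,\mathbb{E}[|V-\overline V|]/\mathbb{E}[\overline V]$ and bound numerator and denominator separately. Your use of the crude compactness bound $|f(x)-f(y)|\le L_1\pi$ in the numerator is a legitimate simplification of what the paper does; the paper instead introduces an auxiliary scale $T$ with $1/T=L_1^2(d-0.5)\log(1/\varepsilon)$ to obtain a ``sharper bound for distant points'' and squeeze the tail exponent just above $1$, but your version already lands in $\tilde{\mathcal O}(\zeta)$. (For the denominator you of course need the reverse inequality $e^{-f(x)+f(y)}\ge e^{-L_1\pi}$, not the one you wrote.)

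There is a quantitative slip in your radius bookkeeping. The inequality $-\log\nu_l(\eta,x,y)+\log\nu_l(\eta,y,y)\ge d(x,y)^2/(2\eta)-C_0$ with a \emph{fixed} $C_0$ only holds on the ball $B_y(r_0)$ with $r_0^2=\Theta(\eta\log(1/\varepsilon))=\Theta(1/(L_1^2 d))$; beyond $r_0$ the truncation error $\nu_l-\nu$ becomes comparable to $\nu$ and $C_0$ blows up (this is precisely why the paper introduces the slack parameter $s=d/(d-1)$ and works with $d(x,y)^2/(2s\eta)$ on the enlarged ball $B_y(r)$ with $r^2/2=(2-1/s)/C_\eta$). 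Your completing-the-square argument actually shows the quadratic inequality holds for \emph{all} $d(x,y)\ge 0$, so the binding constraint is $r_0$, not your stated $r=\Theta(1/L_1)$. Consequently $\{V>1\}\subseteq B_y(r_0)^c$, and the tail exponent is $r_0^2/(2\eta)=\Theta(\log(1/\varepsilon))$, giving $\Poly(\log(1/\varepsilon))\cdot\varepsilon$ rather than $\varepsilon^{d/2}$. This is still $\tilde{\mathcal O}(\zeta)$ (since $r_0^2/(2\eta)\ge \log(1/\varepsilon)$ by construction), so your conclusion survives, but the claimed radius $\Theta(1/L_1)$ and tail $\varepsilon^{d/2}$ are not justified as written.
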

\begin{proof}[Proof of Proposition~\ref{Prop_RHK_inexact_rej}]
    The proof is similar to the proof of Proposition~\ref{Prop_BM_inexact_rej}. For simplicity, we provide a sketch.
    \begin{enumerate}
        \item \textbf{Step 1:} We follow  exactly the same proof as in Proposition \ref{Prop_BM_inexact_rej}, with parameters chosen as 
        $s = \frac{d}{d-1}$, $\frac{1}{\eta }= L_{1}^{2} d \log \frac{1}{\varepsilon} $, $C_{\eta} = L_{1}^{2} d$, 
        $r^{2}/2 = \frac{2-\frac{d-1}{d}}{L_{1}^{2} d}$ and 
        $\frac{1}{t} = L_{1}^{2}(d-2) \log \frac{1}{\varepsilon}$. Note that $t =\frac{d}{d-2} \eta $.

        We know, for all $x \in B_{r}(y)$, we have (for some constant $C$)
        \begin{align*}
                - \log \nu_{l}(\eta, x, y) + \log \nu_{l}(\eta, y, y) 
                \ge \frac{d(x, y)^{2}}{2(s\eta)} - C.
        \end{align*}
        We want to find $t$ being the variable for proposal distribution so that $f(x) - \log \nu_{l}(\eta, x, y) - f(y) + \log \nu_{l}(\eta, y, y) \ge \frac{1}{2t} d(x, y)^{2}$ holds for all $x \in B_{r}(y)$, 
        hence we require
        \begin{align*}
                &\frac{d(x, y)^{2}}{2(\frac{d}{d-1}\eta)} - \frac{d(x, y)^{2}}{2t} - L_{1}d(x, y) - C \\
                =& \frac{d(x, y)^{2}}{2}(L_{1}^{2}\log \frac{1}{\varepsilon})- L_{1}d(x, y) - C
                \ge -\frac{1}{2 \log \frac{1}{\varepsilon}} - C,
        \end{align*}
        where in the last inequality we take $d(x, y) = \frac{1}{L_{1}\log \frac{1}{\varepsilon}}$. 
        Also note that when $\varepsilon$ is small, $|-\frac{1}{2 \log \frac{1}{\varepsilon}}|$ is small.
        Hence there exists constant $C$ s.t. for all $x \in B_{y}(r)$, 
        $f(x) - \log \nu_{l}(\eta, x, y) - f(y) + \log \nu_{l}(\eta, y, y) + C \ge \frac{1}{2t} d(x, y)^{2}$.
        
        Denote $$V_{\mathsf{RHK}}(x) = \frac{\exp(-f(x) + \log \nu_{l}(\eta, x, y) + f(y) - \log \nu_{l}(\eta, y, y) + C)}{\exp(-\frac{1}{2t} d(x, y)^{2})}$$ 
        and $\overline{V}_{\mathsf{RHK}}(x) = \min\{1, V_{\mathsf{RHK}}(x)\}$. 
        Recall that the desired rejection sampling output can be written as $\hat{\pi}^{X|Y} = \mu(t, x, y) \frac{V_{\mathsf{RHK}}(x)}{\mathbb{E}_{\mu(t, x, y)} V_{\mathsf{RHK}}(x)}$. 
        On the other hand the actual rejection sampling output is $\overline{\pi}^{X|Y} = \mu(t, x, y) \frac{\overline{V_{\mathsf{RHK}}}(x)}{\mathbb{E}_{\mu(t, x, y)} \overline{V_{\mathsf{RHK}}}(x)} $.
        Following \citet[Proof of Theorem 6]{fan2023improved}, we get 
        \begin{align*}
                \|\hat{\pi}^{X|Y} - \overline{\pi}^{X|Y}\|_{TV} 
                &\le \mathbb{E}_{\exp(-\frac{d(x, y)^{2}}{2t})}[|\frac{V_{\mathsf{RHK}}}{\mathbb{E}[V_{\mathsf{RHK}}]} - \frac{\overline{V_{\mathsf{RHK}}}}{\mathbb{E}[\overline{V_{\mathsf{RHK}}}]}|] \\
                &\le \frac{2\mathbb{E}[|V_{\mathsf{RHK}} - \overline{V_{\mathsf{RHK}}}|]}{|\mathbb{E}[\overline{V_{\mathsf{RHK}}}]|}.
        \end{align*}

        \item \textbf{Step 2: Verify that $\mathbb{E}[|\overline{V}|]$ is lower bounded by a constant.}

        We start with the following bound.
        \begin{align*}
                &-f(x) + f(y) + \log \nu(\eta, x, y) + C + \frac{1}{2}d(x, y)^{2}(L_{1}^{2}(d+1)\log \frac{1}{\varepsilon})\\
                \ge& -L_{1}d(x, y) + \log C_{1} - \frac{d}{2}\log \eta -\frac{d(x, y)^{2}}{2\eta} + \frac{1}{2}d(x, y)^{2}(L_{1}^{2}(d+1)\log \frac{1}{\varepsilon}) \\
                \ge& -L_{1}d(x, y) + \log C_{1} - \frac{d}{2}\log \eta + \frac{d(x, y)^{2}}{2}L_{1}^{2} \log \frac{1}{\varepsilon} \\
                \ge& -\frac{1}{2\log \frac{1}{\varepsilon}} + \log C_{1} - \frac{d}{2}\log \eta.
        \end{align*}
        Hence, we have
        \begin{align*}
                \mathbb{E}[|\overline{V}|]
                &\ge 
                \int_{B_{x}(r)}\frac{\exp(-f(x)+f(y)+\log \nu_{l}(\eta, x, y) - \log \nu_{l}(\eta, y, y) + C)}{\exp(-\frac{d(x, y)^{2}}{2t})}\\
                &\quad \frac{\exp(-\frac{d(x, y)^{2}}{2t})}{\int_{M}\exp(-\frac{d(x, y)^{2}}{2t})dV_{g}(y)} dV_{g}(y) \\
                &\ge \frac{e^{C}}{\int_{M}\exp(-\frac{d(x, y)^{2}}{2t})dV_{g}(y)}
                \int_{B_{x}(r)}\exp(-f(x)+f(y))\frac{\nu_{l}(\eta, x, y)}{\nu_{l}(\eta, y, y)} dV_{g}(y) \\
                &= \tilde{\Omega} (
                \int_{B_{x}(r)}\exp(-f(x)+f(y))\nu_{l}(\eta, x, y) dV_{g}(y)) \\
                &= \tilde{\Omega} (\int_{B_{x}(r)}\exp(-f(x)+f(y))\nu(\eta, x, y) dV_{g}(y) - \zeta)\\
                &= \tilde{\Omega} (\frac{1}{\eta^{\frac{d}{2}}}\int_{B_{x}(r)}\exp(-\frac{1}{2}d(x, y)^{2}(L_{1}^{2}(d+1)\log \frac{1}{\varepsilon})) dV_{g}(y) - \zeta)\\
                &= \tilde{\Omega} (\frac{1}{\eta^{\frac{d}{2}}}\int_{B_{x}(r)}\exp(-\frac{d(x, y)^{2}}{2\eta\frac{d}{d+1}}) dV_{g}(y) - \zeta)\\
                &= \tilde{\Omega} (1 - \exp(-\frac{1}{2}( \frac{d+1}{\eta d} r^{2}/2 - d)) - \zeta)
                = \tilde{\Omega} (1 - \exp(-\frac{1}{2}( \frac{d+1}{\eta d} \frac{2-\frac{d-1}{d}}{L_{1}^{2} d} - d)) - \zeta)\\
                &= \tilde{\Omega} (1).
        \end{align*}
    
        \item \textbf{Step 3: Verify that $\mathbb{E}[|\overline{V} - V|]$ is of order $\zeta$.} 
        
        We need a sharper bound for distant points. 
        With $\frac{1}{T} = L_{1}^{2}(d-0.5)\log\frac{1}{\varepsilon}$, we have
        \begin{align*}
                &-f(x) + f(y) + \log \nu(\eta, x, y) + C + \frac{1}{2T}d(x, y)^{2} \\
                \le& L_{1}d(x, y) + \log C_{2} - d\log \eta -\frac{d(x, y)^{2}}{2\eta} + \frac{1}{2}d(x, y)^{2}(L_{1}^{2}(d-0.5)\log \frac{1}{\varepsilon}) \\
                \le& L_{1}d(x, y) + \log C_{2} - d\log \eta - \frac{d(x, y)^{2}}{2} \frac{1}{2}L_{1}^{2} \log \frac{1}{\varepsilon} \\
                \le& \frac{1}{\log \frac{1}{\varepsilon}} + \log C_{2} - d\log \eta,
        \end{align*}
        where in the last inequality we set $d(x, y) = \frac{2}{L_{1} \log \frac{1}{\varepsilon}}$
        \begin{align*}
                \mathbb{E}[|V - \overline{V}|] = &\mathbb{E}_{\mu(t, x, y)}[\frac{\exp(-f(x) + f(y) + \log \nu_{l}(\eta, x, y) - \log \nu_{l}(\eta, y, y) + C)}{\exp(-\frac{d(x, y)^{2}}{2t})} 1_{V(y) > 1}] \\
                = & \tilde{\mathcal{O}}( \frac{1}{\int_{M} \exp(-\frac{d(x, y)^{2}}{2t}) dV_{g}(x)}\int_{\{V(y) > 1\}} 
                \exp(-f(x) + f(y))\frac{\nu_{l}(\eta, x, y)}{\nu_{l}(\eta, y, y)} dV_{g}(x))\\
                \le & \tilde{\mathcal{O}}( \frac{1}{(\frac{d}{d-2}\eta)^{\frac{d}{2}}} 
                \eta^{d/2}
                \int_{\{V(y) > 1\}} \exp(-f(x) + f(y) + \log \nu(\eta, x, y)) dV_{g}(x) + \zeta )\\
                \le & \tilde{\mathcal{O}}( \zeta  + \frac{1}{\eta^{d}}
                \int_{\{V(y) > 1\}} \exp(-\frac{d(x, y)^{2}}{2T}) dV_{g}(x) )\\
                \le & \tilde{\mathcal{O}}( \zeta + 
                \frac{1}{\eta^{d}} \int_{\{d(x, y)^{2}/2 > \frac{2-\frac{d-1}{d}}{L_{1}^{2} d} \}} \exp(-\frac{d(x, y)^{2}}{2}(L_{1}^{2}(d-0.5)\log\frac{1}{\varepsilon})) dV_{g}(x)) \\
                \le & \tilde{\mathcal{O}}(\zeta + \frac{1}{\eta^{d}} \exp(-(\frac{d+1}{d})(\frac{d-0.5}{d}\log\frac{1}{\varepsilon}))) \\
                \le & \tilde{\mathcal{O}}(\zeta  +  \frac{1}{\eta^{d}}\varepsilon^{\frac{d^{2}+0.5d-0.5}{d^{2}}})
                = \tilde{\mathcal{O}}(\zeta).
        \end{align*}
        Here we used the fact that $\frac{d^{2}+0.5d-0.5}{d^{2}} > 1$.
    \end{enumerate}
    
\end{proof}


\subsubsection{Heat kernel truncation: hypersphere}\label{Subsection_Sphere_L_infty}

In this subsection, we show that on hyperspheres $\mathcal{S}^{d}$, 
the truncation error bound $\|\nu - \nu_{L}\|_{\infty} = \tilde{\mathcal{O}}(\zeta)$ 
can be achieved with truncation level $L = \tilde{\mathcal{O}}(\Poly (\log \frac{1}{\varepsilon}))$. As proved in \cite{zhao2018exact}, the heat kernel on $\mathcal{S}^{d}$ 
can be written as the following uniformly convergent series 
(with $\varphi := \langle x, y \rangle_{\mathbb{R}^{d+1}}$)
\begin{align*}
    \nu(\eta, x, y) = \sum_{k = 0}^{\infty} \exp(-\frac{k(k+d-1)t}{2})\frac{2k+d - 1}{(d-1) A_{\mathcal{S}^{d}}} C_{k}^{(d-1)/2}(\cos (\varphi)),
\end{align*}
where $C_{l}^{\alpha}$ are the Gegenbauer polynomials. 
Define \begin{align*}
    M_{l} = \frac{\Gamma(\frac{l+d-1}{2})}{\Gamma(\frac{d-1}{2})\Gamma(\frac{l}{2}+1)}
    + \left|\frac{\Gamma(l+d-1)}{\Gamma(d-1)\Gamma(l+1)} - \frac{\Gamma(\frac{l+d-1}{2})}{\Gamma(\frac{d-1}{2})\Gamma(\frac{l}{2}+1)}\right|.
\end{align*}
Such $M_{l}$ is constructed to be an upper bound for Gegenbauer polynomials; see \citet[Proof of Theorem 1]{zhao2018exact}.  The following proposition is directly implied by \citet[Theorem 1]{zhao2018exact}, 
and we provide a proof for completeness. 

\begin{proposition}\label{Zeta_dependency_hypersphere_truncation}
    Let $M = \mathcal{S}^{d}$ be a hypersphere. 
    For truncation level $L = \tilde{\mathcal{O}}(\Poly (\log \frac{1}{\varepsilon}))$, 
    we can achieve $|\nu(\eta, x, y) - \nu_{L}(\eta, x, y)| = \tilde{\mathcal{O}}(\zeta), \forall x, y \in \mathcal{S}^{d}$.
\end{proposition}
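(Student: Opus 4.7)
The plan is to bound the tail of the Gegenbauer series directly, using the polynomial growth of the coefficient bound $M_k$ against the super-exponential decay of the heat-kernel eigenvalues $e^{-k(k+d-1)\eta/2}$. Writing the truncation error as
\begin{equation*}
    |\nu(\eta,x,y) - \nu_L(\eta,x,y)| = \Biggl|\sum_{k=L+1}^{\infty} e^{-\frac{k(k+d-1)\eta}{2}} \frac{2k+d-1}{(d-1)A_{\mathcal{S}^d}} C_k^{(d-1)/2}(\cos\varphi)\Biggr|,
\end{equation*}
I would first apply the triangle inequality and invoke the uniform bound $|C_k^{(d-1)/2}(\cos\varphi)| \le M_k$ that is built into the definition of $M_k$ (the bound is achieved at $\cos\varphi = \pm 1$; see~\citet[Proof of Theorem 1]{zhao2018exact}).

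Next, I would control $M_k$ by a polynomial in $k$. Applying Stirling's formula to the ratios of Gamma functions that define $M_k$ shows $M_k = \mathcal{O}(k^{d-2})$ as $k \to \infty$, so the summand is dominated by $C\, k^{d-1} e^{-k^2\eta/2}$ for some dimension-dependent constant $C$. Then I would bound the tail of this series by an integral:
\begin{equation*}
    \sum_{k=L+1}^{\infty} k^{d-1} e^{-k^2\eta/2} \;\le\; \int_{L}^{\infty} x^{d-1} e^{-x^2\eta/2}\,dx \;\le\; \frac{C_d}{\eta}\, L^{d-2}\, e^{-L^2\eta/2},
\end{equation*}
using a standard Mills-ratio-type estimate on the Gaussian tail (integration by parts once).

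Finally, I would choose the truncation level to make this at most $\zeta$. Recalling that $\frac{1}{\eta} = \tilde{\mathcal{O}}(\log\frac{1}{\varepsilon})$ and $\zeta = \frac{\varepsilon}{\log^2(1/\varepsilon)}$, taking the logarithm of the requirement $\frac{C_d}{\eta} L^{d-2} e^{-L^2\eta/2} \le \zeta$ yields
\begin{equation*}
    \frac{L^2\eta}{2} \;\ge\; \log\frac{1}{\zeta} + (d-2)\log L + \log\frac{1}{\eta} + O(1),
\end{equation*}
which is satisfied by some $L = \mathcal{O}(\log(1/\varepsilon))$, i.e., $L = \mathrm{Poly}(\log(1/\varepsilon))$, as claimed. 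The main obstacle is the bookkeeping around the growth of $M_k$ and ensuring that the polynomial prefactors $k^{d-1}$ and $1/\eta$ are absorbed correctly by the $\tilde{\mathcal{O}}(\cdot)$ notation; once $M_k = \mathcal{O}(k^{d-2})$ is established, the Gaussian-type decay in $k$ does the rest, since the exponent $L^2\eta$ grows like $L^2/\log(1/\varepsilon)$ and only needs to reach $2\log(1/\varepsilon)$ up to lower-order terms.
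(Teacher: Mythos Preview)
Your proposal is correct and reaches the same conclusion with essentially the same ingredients, but the mechanics of the tail bound differ from the paper's. The paper bounds the tail via a ratio test: it shows the ratio of consecutive terms satisfies $Q_l = \mathcal{O}(e^{-L\eta})$ for $l \ge L+1$ (using $M_{l+1}/M_l = \mathcal{O}(1)$), then sums the resulting geometric series and bounds the first term by $\exp(-L^2\eta)\cdot\mathrm{Poly}(L)$. You instead appeal directly to Stirling to get $M_k = \mathcal{O}(k^{d-2})$, compare the sum to an integral, and invoke a Mills-ratio bound for $\int_L^\infty x^{d-1}e^{-x^2\eta/2}\,dx$. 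Both routes exploit the same structural fact---polynomial coefficients against super-exponential decay---and yield the same $L = \mathcal{O}(\log(1/\varepsilon))$. Your integral comparison is arguably cleaner; the paper's ratio argument avoids Stirling but is a bit more circuitous. One small point to make explicit in your write-up: the integral comparison $\sum_{k>L} \le \int_L^\infty$ requires the summand $k^{d-1}e^{-k^2\eta/2}$ to be decreasing, which holds only for $k^2\eta > d-1$, i.e., $k \gtrsim \sqrt{(d-1)\log(1/\varepsilon)}$; this is harmless since your chosen $L$ exceeds that threshold, but it is worth a sentence.
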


\begin{proof}[Proof of Proposition~\ref{Zeta_dependency_hypersphere_truncation}]
    Throughout the proof, we denote $\varphi = \langle x, y \rangle_{\mathbb{R}^{d+1}}$.
    The parameters $M_{l}$ satisfies $|C_{l}^{\frac{d-2}{2}} (x)| \le M_{l}$ according to \citet[Proof of Theorem 1]{zhao2018exact}. 
    Hence, we have
    \begin{align*}
           & |\nu(\eta, x, y) - \nu_{L}(\eta, x, y)| \\
        =& |\sum_{l = L+1}^{\infty} \exp(-\frac{l(l+d-1)\eta}{2})\frac{2l+d - 1}{(d-1) A_{\mathcal{S}^{d}}} C_{l}^{(d-1)/2}(\cos (\varphi))| \\
        \le &\sum_{l = L+1}^{\infty} 
        \exp(-\frac{l(l+d-1)\eta}{2}) \frac{(2l+d - 1) M_{l}}{(d-1) A_{\mathcal{S}^{d}}}.
    \end{align*}

    Observe that for all $l \ge L+1$, since for large $L$ (that depends on dimension) 
    we have $ \frac{M_{l+1}}{M_{l}} = \mathcal{O}(1)$; see, also, \citet[Proof of Theorem 1]{zhao2018exact}. Hence,
    \begin{align*}
            Q_{l} := &\frac{\exp(-\frac{(l+1)(l+1+d-1)\eta}{2}) (2l+d + 1) M_{l+1}}{\exp(-\frac{l(l+d-1)\eta}{2}) (2l+d - 1) M_{l}} 
        = \frac{2l+d+1}{2l+d-1} \exp(-\frac{(d + 2l)\eta}{2}) \frac{M_{l+1}}{M_{l}} \\
        =& \mathcal{O}(\exp(-\frac{(2l + d)\eta}{2}) \frac{M_{l+1}}{M_{l}} ) \\
        =& \mathcal{O}(\exp(-\frac{(2L + 2 + d)\eta}{2}) )
        = \mathcal{O}(\exp(-L\eta)) = \mathcal{O}(\zeta).
    \end{align*}
    For the last line, note that with $L = \Poly (\log \frac{1}{\zeta})$ and $\eta = \frac{1}{C\log \frac{1}{\varepsilon}}$,
    we have that $L\eta = \Poly (\log \frac{1}{\zeta})$. 
    This implies $\exp(-L\eta) = \mathcal{O}(\exp (\log \zeta)) = \mathcal{O}(\zeta)$. 
    Now we compute the truncation error.
    \begin{align*}
            &|\nu(\eta, x, y) - \nu_{L}(\eta, x, y)|\\ 
            =& |\sum_{l = L+1}^{\infty} \exp(-\frac{l(l+d-1)\eta}{2})\frac{2l+d - 1}{(d-1) A_{\mathcal{S}^{d}}} C_{l}^{(d-1)/2}(\cos (\varphi))| \\
         \le & \frac{1}{(d-1) A_{\mathcal{S}^{d}}} \sum_{l = L+1}^{\infty} 
        \exp(-\frac{l(l+d-1)\eta}{2}) (2l+d - 1) M_{l} \\
         \le & \frac{1}{(d-1) A_{\mathcal{S}^{d}}} 
        \exp(-\frac{(L+1)(L+1+d-1)\eta}{2}) (2L+d + 1) \frac{(L+d-1)^{d-2}}{(d-2)!} \frac{1}{1-Q} \\
        = & \tilde{\mathcal{O}}(\exp(-\frac{(L+1)(L+d)\eta}{2}) (2L+d + 1) \frac{(L+d-1)^{d-2}}{(d-2)!}) \\
        =& \tilde{\mathcal{O}}(\exp(-L^2 \eta)\Poly (L) )
        = \tilde{\mathcal{O}}(\exp(-\Poly (\log \frac{1}{\zeta}))\Poly (\log \frac{1}{\zeta}) )
        = \tilde{\mathcal{O}}(\zeta).
    \end{align*}

\end{proof}

\section{Proofs for Entropy-regularized JKO scheme}\label{Proof_Theorem_Gaussian_JKO}

\begin{proof}[Proof of Lemma~\ref{Lemma_proximal_calculation}] Note that, we have
    \begin{align*}
        H_{\pi^{X}}(\rho) &= \int_{M} \rho(x) \log \frac{\rho(x)}{\pi^{X}} dV_{g}(x)
        = \int_{M} \rho(x) \log \frac{\rho(x)}{\exp(-f(x) - \frac{d(x, y)^{2}}{2\eta})\exp(\frac{d(x, y)^{2} }{2\eta}) C} dV_{g}(x) \\
        &= \int_{M} \rho(x) (\log \frac{\rho(x)}{C\exp(-f(x) - \frac{d(x, y)^{2}}{2\eta})} + \log \frac{1}{\exp(\frac{d(x, y)^{2}}{2\eta})} ) dV_{g}(x) \\
        &= \int_{M} \rho(x) \log \frac{\rho(x)}{C'(y) \tilde{\pi}^{X|Y}(x|y)} dV_{g}(x)
        - \int_{M} \rho(x) \frac{1}{2\eta} d(x, y)^{2} dV_{g}(x) \\
        &= H_{\tilde{\pi}^{X|Y = y}}(\rho) - \frac{1}{2\eta} \int_{M} d(x, y)^{2}d\rho + C(y),
\end{align*}
where $C= \frac{1}{\int_{M} e^{-f(x)} dV_{g}(x) }$,
$C'(y)$ and $C(y)$ are some constants that only depends on $y$. 
The above computation implies 
\begin{align*}
        \tilde{\pi}^{X|Y = y} &= \argmin_{\rho \in \mathcal{P}_{2}(M)} H_{\tilde{\pi}^{X|Y = y}}(\rho)
        = \argmin_{\rho \in \mathcal{P}_{2}(M)} H_{\tilde{\pi}^{X}}(\rho) + \frac{1}{2\eta} \int_{M} d(x, y)^{2}d\rho + C(y) \\
        &= \argmin_{\rho \in \mathcal{P}_{2}(M)} H_{\tilde{\pi}^{X}}(\rho) + \frac{1}{2\eta} W_{2}^{2}(\rho, \delta_{y}) = \prox_{\eta H_{\tilde{\pi}^{X}}} (\delta_{y}).
\end{align*}
\end{proof}

\begin{lemma}\label{lemtemp2}
    The minimization problem 
    \begin{align*}
        \min_{
            \gamma \in \mathcal{P}_{2}(M \times M),
            \gamma^{X} = \rho^{X}
        } \int_{M \times M}\frac{1}{2\eta} d(x, y)^{2} \gamma (x, y) dV_{g}(x) dV_{g}(y) 
        + H(\gamma),
    \end{align*}
    where the constraint means $\int_{M} \gamma(x, y) dV_{g}(y) = \rho^{X}(x)$, 
    has solution of the form 
    \begin{align*}
        \gamma(x, y) \propto \rho^{X}(x) \tilde{\pi}^{Y|X}(y|x).
    \end{align*}
\end{lemma}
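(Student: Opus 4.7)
The plan is to disintegrate $\gamma$ with respect to its $X$-marginal, reduce the problem to a family of one-dimensional (over measures) entropy-regularized problems indexed by $x$, and then recognize the pointwise optimizer as the Gibbs measure $\propto_y \exp(-d(x,y)^2/(2\eta))$, which is exactly $\tilde{\pi}^{Y|X}(\cdot|x)$ by the definition of $\tilde{\pi}(x,y) \propto \exp(-f(x)-d(x,y)^2/(2\eta))$.

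First, I would use the marginal constraint $\gamma^X = \rho^X$ to write $\gamma(x,y) = \rho^X(x)\,q(y|x)$ for some conditional density $q(\cdot|x)$ with $\int_M q(y|x)\,dV_g(y) = 1$ for $\rho^X$-a.e.\ $x$ (this is just disintegration of measures on a product manifold). Substituting into the objective and using $\log \gamma(x,y) = \log \rho^X(x) + \log q(y|x)$, the functional splits as
\begin{align*}
\int_{M\times M}\frac{d(x,y)^2}{2\eta}\gamma(x,y)\,dV_g\,dV_g + H(\gamma) = H(\rho^X) + \int_M \rho^X(x)\,\mathcal{J}_x[q(\cdot|x)]\,dV_g(x),
\end{align*}
where
\begin{align*}
\mathcal{J}_x[q] \;:=\; \int_M q(y|x)\left[\frac{d(x,y)^2}{2\eta} + \log q(y|x)\right]dV_g(y).
\end{align*}
Since $H(\rho^X)$ is a constant (it depends only on the fixed marginal $\rho^X$), minimizing the joint functional reduces to minimizing $\mathcal{J}_x[q]$ pointwise in $x$ over probability densities $q(\cdot|x)$ on $M$.

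Next, for each fixed $x$, the functional $\mathcal{J}_x[q]$ is a standard entropy-regularized linear problem: minimize $\int V_x(y) q(y)\,dV_g(y) + \int q(y)\log q(y)\,dV_g(y)$ over probability densities, where $V_x(y) = d(x,y)^2/(2\eta)$. Its unique minimizer is the Gibbs density
\begin{align*}
q^*(y|x) \;=\; \frac{\exp(-d(x,y)^2/(2\eta))}{\int_M \exp(-d(x,y')^2/(2\eta))\,dV_g(y')},
\end{align*}
which can be verified by a one-line Lagrange-multiplier calculation or, more cleanly, by rewriting $\mathcal{J}_x[q] = H(q \,\|\, q^*) - \log Z(x)$ and invoking nonnegativity of KL divergence with equality iff $q = q^*$. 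By the definition of $\tilde\pi$, this $q^*(y|x)$ coincides with $\tilde\pi^{Y|X}(y|x)$. Assembling the pieces yields the claimed form $\gamma^*(x,y) = \rho^X(x)\,\tilde\pi^{Y|X}(y|x)$.

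The only technical subtlety (and the main potential obstacle, though a mild one) is ensuring measurability and integrability of the disintegration and of the normalizing constant $Z(x) = \int \exp(-d(x,y)^2/(2\eta))\,dV_g(y)$: both follow from the joint continuity of $(x,y)\mapsto d(x,y)^2$ and standard disintegration theorems on Polish spaces, so no delicate compactness or regularity arguments are needed beyond what is already assumed on $M$.
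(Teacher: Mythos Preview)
Your proof is correct but takes a genuinely different route from the paper. The paper argues via a Lagrangian and calculus of variations: it introduces a multiplier $\beta(x)$ for the constraint $\int_M \gamma(x,y)\,dV_g(y)=\rho^X(x)$, computes the first variation of the Lagrangian, and sets it to zero to obtain $\gamma(x,y)=e^{\beta(x)-\frac{1}{2\eta}d(x,y)^2-1}$, then eliminates $\beta$ using the marginal constraint. You instead disintegrate $\gamma$ as $\rho^X(x)\,q(y|x)$, isolate the $x$-independent term $H(\rho^X)$, and minimize pointwise via the Gibbs variational principle $\mathcal{J}_x[q]=H(q\,\|\,q^*)-\log Z(x)$.

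Your approach has the advantage of directly certifying that the candidate is a \emph{global} minimizer (through nonnegativity of KL), whereas the paper's first-variation argument only identifies a stationary point and does not explicitly verify minimality. On the other hand, the Lagrangian route is mechanical and transfers verbatim to the companion Lemma~\ref{lemtemp3} (with the extra $f(x)$ term) without any new idea, which is presumably why the paper chose it. Both are short and valid; yours is slightly more self-contained.
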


\begin{proof}[Proof of Lemma~\ref{lemtemp2}]
    Since $\int_{M} \gamma(x, y) dV_{g}(y) = \rho^{X}(x)$, we have 
    \begin{align*}
        \int_{M} \Big( \int_{M} \gamma(x, y) dV_{g}(y) - \rho^{X}(x) \Big) \beta(x) dV_{g}(x) = 0, \forall \beta,
    \end{align*}
    we can construct the following Lagrangian
    \begin{align*}
        \int_{M \times M}\frac{1}{2\eta} d(x, y)^{2} \gamma (x, y) dV_{g}(x) dV_{g}(y)
        + H(\gamma) - \int_{M} \Big( \int_{M} \gamma(x, y) dV_{g}(y) - \rho^{X}(x) \Big) \beta(x) dV_{g}(x).
   \end{align*}
    Recall that $H(\gamma) = \int_{M \times M} \gamma \log(\gamma) dV_{g}(x) dV_{g}(y) $. We have,
    \begin{align*}
            \lim_{t \to 0} \frac{H(\gamma + t \varphi) - H(\gamma)}{t}
            &= \lim_{t \to 0} \frac{\int_{M \times M} (\gamma + t \varphi) \log(\gamma + t \varphi) - \gamma \log(\gamma) dV_{g}(x) dV_{g}(y) }{t} \\
            &= \lim_{t \to 0} \int_{M \times M} \varphi \log(\gamma + t \varphi) + \varphi dV_{g}(x) dV_{g}(y) \\
            &= \int_{M \times M} \varphi (\log(\gamma) + 1)  dV_{g}(x) dV_{g}(y).
    \end{align*}
For any function $f$, denote $I_{f}(\gamma) = \int_{M \times M} \gamma(x, y) f(x, y) dV_{g}(x) dV_{g}(y)$. We then have 
    \begin{align*}
        \lim_{t \to 0} \frac{I_{f}(\gamma + t \varphi) - I_{f}(\gamma)}{t}
        &= \lim_{t \to 0} \frac{\int_{M \times M} (\gamma + \varphi t)f - \gamma f dV_{g}(x) dV_{g}(y)}{t}\\
        &= \int_{M \times M} \varphi f dV_{g}(x) dV_{g}(y).
    \end{align*}
    Thus the variation of Lagrangian is given by
    \begin{align*}
        \int_{M \times M} \varphi \cdot \Big(\frac{1}{2\eta} d(x, y)^{2} + \log(\gamma) + 1 - \beta(x) \Big) dV_{g}(x) dV_{g}(y).
    \end{align*}
    We want the above to be zero for all $\varphi$. 
    Thus we need $\frac{1}{2\eta} d(x, y)^{2} + \log(\gamma) + 1 - \beta(x) = 0$ 
    which is equivalent to 
    \begin{align*}
        \gamma(x, y) = e^{\beta(x) - \frac{1}{2\eta} d(x, y)^{2} - 1}.
    \end{align*}
    This implies $\gamma(x, y) \propto e^{\beta(x) - \frac{1}{2\eta} d(x, y)^{2}}$
    Integrating with respect to the $y$ variable, we get 
    \begin{align*}
        \rho^{X}(x) = \int_{M} \gamma(x, y) dV_{g}(y) = \int_{M} e^{\beta(x) - \frac{1}{2\eta} d(x, y)^{2} - 1} dV_{g}(y) \propto e^{\beta(x) } \int_{M} e^{ - \frac{1}{2\eta} d(x, y)^{2}} dV_{g}(y).
    \end{align*}
    It then follows that 
    \begin{align*}
        \gamma(x, y) \propto \rho^{X}(x) \frac{e^{- \frac{1}{2\eta} d(x, y)^{2}}}{\int_{M} e^{ - \frac{1}{2\eta} d(x, y)^{2}} dV_{g}(y)}
        = \rho^{X}(x) \tilde{\pi}^{Y|X}(y|x).
    \end{align*}
\end{proof}

\begin{lemma}\label{lemtemp3}
    The minimization problem 
    \begin{align*}
        \min_{
            \gamma \in \mathcal{P}_{2}(M \times M),
            \gamma^{Y} = \rho^{Y}
        } \int_{M \times M} (f(x) + \frac{1}{2\eta} d(x, y)^{2}) \gamma (x, y) dV_{g}(x) dV_{g}(y) 
        + H(\gamma),
    \end{align*}
    where the constraint means $\int_{M} \gamma(x, y) dV_{g}(x) = \rho^{Y}(y)$, 
    has solution of the form 
    \begin{align*}
        \gamma(x, y) \propto \rho^{Y}(y) \tilde{\pi}^{X|Y}(x|y).
    \end{align*}
\end{lemma}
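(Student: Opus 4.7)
The plan is to mirror the proof of Lemma~\ref{lemtemp2}, which is the direct analog but with the marginal constraint on the $X$-variable and without the potential $f(x)$ in the cost. The only substantive changes are that (i) the Lagrange multiplier must depend on $y$ rather than $x$, since it enforces the $Y$-marginal constraint $\int_M \gamma(x,y)\,dV_g(x)=\rho^Y(y)$, and (ii) the extra linear term $\int f(x)\gamma(x,y)\,dV_g$ contributes an $f(x)$ into the Euler--Lagrange equation.

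Concretely, I would form the Lagrangian
\begin{equation*}
\mathcal{L}(\gamma,\beta) = \int_{M\times M}\!\!\Bigl(f(x)+\tfrac{1}{2\eta}d(x,y)^2\Bigr)\gamma\,dV_g(x)dV_g(y) + H(\gamma) - \int_M\!\!\Bigl(\int_M \gamma(x,y)\,dV_g(x)-\rho^Y(y)\Bigr)\beta(y)\,dV_g(y).
\end{equation*}
Reusing verbatim the two variational computations from the proof of Lemma~\ref{lemtemp2} (namely $\partial_\gamma H(\gamma)=\log\gamma+1$ and $\partial_\gamma I_h(\gamma)=h$ for any $h$), the first-order optimality condition becomes
\begin{equation*}
f(x)+\tfrac{1}{2\eta}d(x,y)^2 + \log\gamma(x,y) + 1 - \beta(y) = 0 \qquad \text{for all } (x,y),
\end{equation*}
which gives the pointwise form $\gamma(x,y)=\exp\bigl(\beta(y)-1-f(x)-\tfrac{1}{2\eta}d(x,y)^2\bigr)$, i.e.\ $\gamma(x,y)\propto e^{\beta(y)}\,e^{-f(x)-d(x,y)^2/(2\eta)}$.

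To finish, I would identify the $x$-dependent factor with $\tilde{\pi}^{X|Y}(x|y)$. By definition, $e^{-f(x)-d(x,y)^2/(2\eta)}=Z(y)\,\tilde{\pi}^{X|Y}(x|y)$ where $Z(y):=\int_M e^{-f(x')-d(x',y)^2/(2\eta)}\,dV_g(x')$ is the $y$-dependent normalizing constant. Absorbing $Z(y)$ into the unknown factor $e^{\beta(y)}$, we get $\gamma(x,y)=h(y)\,\tilde{\pi}^{X|Y}(x|y)$ for some function $h(y)\ge 0$. Imposing the marginal constraint $\int_M \gamma(x,y)\,dV_g(x)=\rho^Y(y)$ and using $\int_M \tilde{\pi}^{X|Y}(x|y)\,dV_g(x)=1$ forces $h(y)=\rho^Y(y)$, yielding the claimed form $\gamma(x,y)=\rho^Y(y)\,\tilde{\pi}^{X|Y}(x|y)$.

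The computation is routine, and there is no real obstacle; the only things to be careful about are making sure the Lagrange multiplier has the correct dependence (here on $y$, not $x$) and remembering that the $y$-dependent normalizer $Z(y)$ of $\tilde{\pi}^{X|Y}(\cdot|y)$ can be absorbed into $e^{\beta(y)}$ before solving for it via the marginal constraint. Strict convexity of $\gamma\mapsto H(\gamma)$ plus linearity of the cost and of the constraint ensures the critical point found is the unique minimizer, so no further verification is needed.
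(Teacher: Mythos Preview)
Your proposal is correct and follows essentially the same approach as the paper: form the Lagrangian with a multiplier $\beta(y)$, compute the first variation to obtain $\gamma(x,y)\propto e^{\beta(y)}e^{-f(x)-d(x,y)^2/(2\eta)}$, and then use the $Y$-marginal constraint to identify the $y$-dependent factor as $\rho^Y(y)$. The only cosmetic difference is that the paper integrates over $x$ to solve for $e^{\beta(y)}$ directly, whereas you first absorb the normalizer $Z(y)$ into $h(y)$ and then determine $h$; your added remark on strict convexity guaranteeing uniqueness is a nice touch that the paper omits.
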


\begin{proof}[Proof of Lemma~\ref{lemtemp3}]
The proof follows similarly to that of Lemma~\ref{lemtemp2}. Since $\int_{M} \gamma(x, y) dV_{g}(x) = \rho^{Y}(y)$, we have 
    \begin{align*}
        \int_{M} \Big( \int_{M} \gamma(x, y) dV_{g}(x) - \rho^{Y}(y) \Big) \beta(y) dV_{g}(y) = 0, \forall \beta.
    \end{align*}
    We first constructing the following Lagrangian:
    \begin{align*}
        \int_{M \times M} (f(x) + \frac{1}{2\eta} d(x, y)^{2}) \gamma (x, y) dV_{g}(x) dV_{g}(y)
        + H(\gamma) - \int_{M} \Big( \int_{M} \gamma(x, y) dV_{g}(x) - \rho^{Y}(y) \Big) \beta(y) dV_{g}(y).
    \end{align*}
Recall that $H(\gamma) = \int_{M \times M} \gamma \log(\gamma) dV_{g}(x) dV_{g}(y) $. Then, we have
    \begin{align*}
            \lim_{t \to 0} \frac{H(\gamma + t \varphi) - H(\gamma)}{t}
            &= \lim_{t \to 0} \frac{\int_{M \times M} (\gamma + t \varphi) \log(\gamma + t \varphi) - \gamma \log(\gamma) dV_{g}(x) dV_{g}(y) }{t} \\
            &= \lim_{t \to 0} \int_{M \times M} \varphi \log(\gamma + t \varphi) + \varphi dV_{g}(x) dV_{g}(y) \\
            &= \int_{M \times M} \varphi (\log(\gamma) + 1)  dV_{g}(x) dV_{g}(y).
    \end{align*}
For any function $f$, denote $I_{f}(\gamma) = \int_{M \times M} \gamma(x, y) f(x, y) dV_{g}(x) dV_{g}(y)$. We have 
    \begin{align*}
        \lim_{t \to 0} \frac{I_{f}(\gamma + t \varphi) - I_{f}(\gamma)}{t}
        &= \lim_{t \to 0} \frac{\int_{M \times M} (\gamma + \varphi t)f - \gamma f dV_{g}(x) dV_{g}(y)}{t}\\
        &= \int_{M \times M} \varphi f dV_{g}(x) dV_{g}(y).
    \end{align*}
    Thus the variation of Lagrangian is 
    \begin{align*}
        \int_{M \times M} \varphi \cdot \Big(f(x) + \frac{1}{2\eta} d(x, y)^{2} + \log(\gamma) + 1 - \beta(y) \Big) dV_{g}(x) dV_{g}(y).
    \end{align*}
    We want the above to be zero for all $\varphi$. 
    Thus we need $f(x) + \frac{1}{2\eta} d(x, y)^{2} + \log(\gamma) + 1 - \beta(y) = 0$ 
    which is equivalent to 
    \begin{align*}
        \gamma(x, y) = e^{-f(x) + \beta(y) - \frac{1}{2\eta} d(x, y)^{2} - 1}.
    \end{align*}
    This implies $\gamma(x, y) \propto e^{\beta(y) - f(x) - \frac{1}{2\eta} d(x, y)^{2}}$. Hence we can integrate with respect to the $x$ variable and get 
    \begin{align*}
        \rho^{Y}(y) = \int_{M}e^{-f(x) + \beta(y) - \frac{1}{2\eta} d(x, y)^{2} - 1} dV_{g}(x) \propto e^{\beta(y)} \int_{M}e^{-f(x) - \frac{1}{2\eta} d(x, y)^{2}} dV_{g}(x).
    \end{align*}
    Therefore, we obtain
    \begin{align*}
        \gamma(x, y) 
        \propto \rho^{Y}(y) \frac{e^{- f(x) - \frac{1}{2\eta} d(x, y)^{2}}}{\int_{M}e^{-f(x) - \frac{1}{2\eta} d(x, y)^{2}} dV_{g}(x)} 
        = \rho^{Y}(y) \tilde{\pi}^{X|Y}(x|y).
    \end{align*}

\end{proof}

\begin{proof}[Proof of Theorem~\ref{Theorem_Gaussian_JKO}]
    By definition we have 
    \begin{align*}
            &\argmin_{\chi \in \mathcal{P}_{2}(M)} \frac{1}{2\eta} W_{2, 2\eta}^{2}(\tilde{\rho}_{k}^{X}, \chi) \\
        =& \argmin_{\chi \in \mathcal{P}_{2}(M): \gamma \in C(\tilde{\rho}_{k}^{X}, \chi)} 
        \int_{M \times M} \frac{1}{2\eta} d(v, w)^{2} \gamma(v, w) dV_{g}(v) dV_{g}(w) 
        +  H(\gamma). 
    \end{align*}
    By Lemma~\ref{lemtemp3}, we know the solution of 
    \begin{align*}
        \min_{
            \gamma \in \mathcal{P}_{2}(M \times M),
            \gamma^{X} = \tilde{\rho}_{k}^{X}
        } \int_{M \times M}\frac{1}{2\eta} d(x, y)^{2} \gamma (x, y) dV_{g}(x) dV_{g}(y) 
        + H(\gamma)
    \end{align*}
    is $\gamma(x, y) \propto \tilde{\rho}_{k}^{X}(x) e^{- \frac{1}{2\eta} d(x, y)^{2}}$.
    Hence the $Y$-marginal of inexact proximal sampler satisfies
    \begin{align*}
        \chi(y) = \int_{M} \tilde{\rho}_{k}^{X}(x) e^{- \frac{1}{2\eta}d(x, y)^{2}} dV_{g}(x)
        = \int_{M} \tilde{\rho}_{k}^{X}(x) \tilde{\pi}^{Y|X}(y|x) dV_{g}(x) 
        = \tilde{\rho}_{k}(y).
    \end{align*}
Similarly,
    \begin{align*}
            &\argmin_{\chi \in \mathcal{P}_{2}(M)} 
            \frac{1}{2\eta} W_{2, 2\eta}^{2}(\tilde{\rho}_{k}^{Y}, \chi) + \int f d\chi \\
        =& \argmin_{\chi \in \mathcal{P}_{2}(M): \gamma \in C(\tilde{\rho}_{k}^{X}, \chi)}
        \int_{M \times M} (f(x) + \frac{1}{2\eta} d(v, w)^{2}) \gamma(v, w) dV_{g}(v) dV_{g}(w) 
        +  H(\gamma), 
    \end{align*}
    and its solution is $\chi(x) = \int_{M} \tilde{\rho}^{Y}(y) \tilde{\pi}^{X|Y}(x|y) dV_{g}(y) = \tilde{\rho}_{k+1}^{X}(x)$.
\end{proof}

\section{Auxiliary Results}

\subsection{Diffusion Process on Manifold}

It is well known that the law of the following SDE $dX_{t} = - b(X_{t}) dt + dB_{t}$ is related to the Fokker-Planck equation $\partial_{t} \rho_{t} = \Div (\rho_{t} b(X_{t}) + \frac{1}{2} \grad \rho_{t}) $. Here we provide a proof for completeness.
\begin{lemma}\label{Lemma_SDE_F_P}
    Let $B_{t}$ denote Brownian motion on a Riemannian manifold $M$.
    For SDE $dX_{t} = - b(X_{t}) dt + dB_{t}$, the corresponding Fokker-Planck equation is 
    \begin{align*}
        \partial_{t} \rho_{t} = \Div (\rho_{t} b(X_{t}) + \frac{1}{2} \grad \rho_{t}).
    \end{align*}
\end{lemma}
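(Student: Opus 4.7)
\textbf{Proof proposal for Lemma \ref{Lemma_SDE_F_P}.}
The plan is to characterize $\rho_t$ weakly via test functions, using Itô's formula on the manifold to identify the generator, and then transfer the differential operators onto $\rho_t$ via the manifold integration-by-parts identity recorded in the preliminaries (which is available here since $\partial M=\emptyset$). Concretely, fix an arbitrary $\phi\in C_c^\infty(M)$ and consider $t\mapsto \mathbb{E}[\phi(X_t)]=\int_M \phi\,\rho_t\,dV_g$.

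First I would apply Itô's formula for the Riemannian Brownian motion $B_t$ and the semimartingale $X_t$ solving $dX_t=-b(X_t)\,dt+dB_t$. This yields
\begin{equation*}
d\phi(X_t) = -\langle \grad\phi(X_t),b(X_t)\rangle\,dt + \tfrac{1}{2}\Delta\phi(X_t)\,dt + dM_t,
\end{equation*}
where $M_t$ is a local martingale arising from the Brownian part (the Laplace--Beltrami term is the correct second-order correction because $B_t$ is the Riemannian Brownian motion, whose generator is $\tfrac12\Delta$). Taking expectations and differentiating in $t$ gives
\begin{equation*}
\frac{d}{dt}\int_M \phi\,\rho_t\,dV_g \;=\; \int_M \bigl(-\langle \grad\phi,b\rangle + \tfrac12\Delta\phi\bigr)\rho_t\,dV_g.
\end{equation*}

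Next I would move the derivatives off $\phi$ using the integration-by-parts formula $\int_M \langle \grad u,X\rangle_g\,dV_g=-\int_M u\,\Div X\,dV_g$ on a boundaryless manifold, together with the product rule $\Div(\rho_t b)=\rho_t\Div b+\langle \grad\rho_t,b\rangle$ (applied backward). This gives
\begin{equation*}
\int_M \langle \grad\phi,b\rangle\rho_t\,dV_g = \int_M \langle \grad\phi,\rho_t b\rangle\,dV_g = -\int_M \phi\,\Div(\rho_t b)\,dV_g,
\end{equation*}
and, applied twice, $\int_M (\Delta\phi)\rho_t\,dV_g = \int_M \phi\,\Delta\rho_t\,dV_g = \int_M \phi\,\Div(\grad\rho_t)\,dV_g$. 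Substituting,
\begin{equation*}
\int_M \phi\,\partial_t\rho_t\,dV_g = \int_M \phi\,\Div\!\bigl(\rho_t b + \tfrac12\grad\rho_t\bigr)\,dV_g.
\end{equation*}
Since $\phi$ is arbitrary, the claimed Fokker--Planck equation follows.

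The main obstacle is the first step: justifying Itô's formula for $\phi(X_t)$ with $X_t$ a solution on $M$, which implicitly requires that $B_t$ is constructed intrinsically (e.g.\ via the orthonormal frame bundle/Eells--Elworthy--Malliavin) so that its generator acting on smooth functions is exactly $\tfrac12\Delta$; once that is invoked, the rest is just divergence-theorem bookkeeping. A secondary technical point is integrability/compact support of $\phi$ so that exchanging $\partial_t$ with the integral and discarding the martingale term is legitimate; the former can be handled by cutoff arguments if $M$ is non-compact, leveraging completeness of $M$ stated at the outset.
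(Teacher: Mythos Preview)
Your proposal is correct and follows essentially the same approach as the paper: identify the generator $L\phi=-\langle\grad\phi,b\rangle+\tfrac12\Delta\phi$, then use integration by parts on the boundaryless manifold to pass to the adjoint $L^*\rho=\Div(\rho b)+\tfrac12\Delta\rho$, which is the Fokker--Planck equation. The only cosmetic difference is that the paper cites the form of $L$ directly and then computes $L^*$ and invokes the Kolmogorov forward equation, whereas you derive $L$ from It\^o's formula and phrase the adjoint computation as a weak formulation against test functions; the underlying computation is identical.
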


\begin{proof}
    The infinitesimal generator of the SDE is $Lf = - \langle \grad f, b \rangle + \frac{1}{2} \Delta f$ \cite{cheng2022efficient}.
    We compute the adjoint of $L$ which is defined by $\int_{M} f L^{*} h dV_{g} = \int_{M} h L f dV_{g}$.
    By divergence theorem, we have 
    \begin{align*}
            \frac{1}{2} \int_{M} \Div (\grad f) h dV_{g}
            &= - \frac{1}{2} \int_{M} \langle \grad h, \grad f \rangle dV_{g}
            = \frac{1}{2} \int_{M} \Div (\grad h) f dV_{g} \\
            - \int_{M} \langle \grad f, b h \rangle dV_{g} &= \int_{M} \Div (bh) f dV_{g}.
    \end{align*}
    Hence 
    \begin{align*}
            \int_{M} h L f dV_{g}
            &= \int_{M} - h \langle \grad f, b \rangle + \frac{1}{2} h \Delta f dV_{g} \\
            &= \int_{M} f \Div (bh) + \frac{1}{2} f \Delta h dV_{g}
            = \int_{M} f (\Div (bh) + \frac{1}{2}  \Delta h) dV_{g}.
    \end{align*}
    Thus we obtained $L^{*}h = \Div (bh) + \frac{1}{2} \Delta h$.
    By Kolmogorov forward equation {\cite[Equation 1.5.2]{bakry2014analysis}}, we get 
    \begin{align*}
        \partial_{t} \rho_{t} 
        = L^{*} \rho_{t}
        = \Div (b(X_{t})\rho_{t}) + \frac{1}{2} \Delta \rho_{t}
        = \Div (b(X_{t})\rho_{t} + \frac{1}{2} \grad \rho_{t}).
    \end{align*}
\end{proof}

We briefly mention some properties of Markov semigroup.
The following results are from \citet[Section 1.2]{bakry2014analysis}.
\begin{definition}
    \begin{enumerate}
        \item Given a markov process, the assoicated markov semigroup $(P_{t})_{t \ge 0}$ is 
        defined as (for suitable $f$)
        \begin{align*}
            P_{t}f(x)= \mathbb{E}[f(X_{t}) | X_{0} = x], \forall t \ge 0.
        \end{align*}
        \item Let $\rho$ be the law of $X_{0}$, then $P_{t}^{*}\rho$ is the law of $X_{t}$. We have 
        \begin{align*}
            \int_{M} P_{t} f d\rho = \int_{M} f d(P_{t}^{*}\rho).
        \end{align*}
        \item Markov operators $(P_{t})_{t \ge 0}$ can be represented by 
        kernels corresponding to the transition probabilities of the associated 
        Markov process:
        \begin{align*}
            P_{t} f(x) = \int_{M} f(y) p_{t} (x, y) dV_{g}(y), \forall t \ge 0, x \in M.
        \end{align*}
    \end{enumerate}
\end{definition}

Thus by definition, we have
\begin{align*}
    \mathbb{E}[f(X_{t}) | X_{0} = x] = P_{t} f(x) = \int_{M} f(y) p_{t} (x, y) dV_{g}(y).
\end{align*}

\subsection{Log-Sobolev Inequality and Heat flow}\label{LSI_Heat}

In the sampling literature, the log-Sobolev inequality is usually written in the following form: 
\begin{align*}
        \int_{M} f^{2} \log f^{2} d\nu - \int_{M} f^{2} d\nu \log \int_{M} f^{2} d\nu
        &\le \frac{2}{\alpha} \int_{M} \|\grad f\|^{2} d\nu , \forall f  \\
        H_{\nu}(\rho) &\le \frac{1}{2\alpha} J_{\nu}(\rho), \forall \rho.
\end{align*}

In the Euclidean setting, we know if $\mu_{1}, \mu_{2}$ satisfy $\alpha_{1}, \alpha_{2}$-$\mathsf{LSI}$ respectively, 
then their convolution $\mu_{1} * \mu_{2}$ satisfies LSI with constant $ \frac{1}{\frac{1}{\alpha_{1}} + \frac{1}{\alpha_{2}}}$, {see \citet[Proposition 2.3.7]{chewi2023log}}.
In particular, if we take one of $\mu$ to be $\nu(t, x, y)$ (which is a Gaussian in the Euclidean setting), since the Gaussian density satisfies LSI, we have the following result. 
\begin{fact}
    Consider Euclidean space.
    Let $\mu$ be a probability measure that satisfies $\alpha$-$\mathsf{LSI}$. 
    Then its propogation along heat flow, denoted by $\mu_{t} = \mu * \nu_{t}$, 
    also satisfies LSI with constant $\frac{1}{\frac{1}{\alpha} + t} = \frac{\alpha}{1 + t\alpha}$. Here $\nu_{t}$ denote the probability measure corresponding to heat flow for time $t$.
\end{fact}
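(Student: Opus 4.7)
The plan is to derive the Fact as a one-line corollary of the convolution identity for LSI constants that is stated (and cited to \citet[Proposition 2.3.7]{chewi2023log}) in the paragraph immediately preceding the Fact. First, I would identify $\nu_{t}$ concretely: under the paper's convention that Brownian motion solves $dX_{t}=dB_{t}$ and the heat equation is $\partial_{t} u = \frac{1}{2}\Delta u$, the Euclidean heat kernel at time $t$ is the density of $\mathcal{N}(0, t I_{d})$, so the measure $\nu_{t}$ corresponding to heat flow for time $t$ is this centered Gaussian.

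Next I would invoke the classical fact that $\mathcal{N}(0, t I_{d})$ satisfies LSI with constant $1/t$. This can be obtained either from the Bakry--Émery curvature-dimension criterion applied to the quadratic potential $\frac{1}{2t}\|x\|^{2}$, whose Hessian $\frac{1}{t}I_{d}$ gives curvature bound $1/t$, or by a direct Gaussian computation via Hermite expansion. Combining this with $\mu_{1}=\mu$ (LSI constant $\alpha$) and $\mu_{2}=\nu_{t}$ (LSI constant $1/t$) in the cited convolution identity yields that $\mu_{t} = \mu * \nu_{t}$ satisfies LSI with constant
\[
\frac{1}{\frac{1}{\alpha}+\frac{1}{1/t}} \;=\; \frac{1}{\frac{1}{\alpha}+t} \;=\; \frac{\alpha}{1+t\alpha},
\]
which is exactly the asserted formula.

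There is no real obstacle here, since both ingredients are explicitly available on the page. The only point requiring care is bookkeeping the normalization of the heat semigroup: the stated formula matches the convention $\partial_{t} u = \frac{1}{2}\Delta u$ used throughout the paper, so $\nu_{t}$ has covariance $tI_{d}$ rather than $2tI_{d}$. Under the alternative convention $\partial_{t}u=\Delta u$ one would get $\alpha/(1+2t\alpha)$ instead, so consistency with the generator used in Section~\ref{prelim} must be checked but amounts only to verifying that the Laplacian appears with a factor of $1/2$, which it does.
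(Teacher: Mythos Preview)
Your proposal is correct and matches the paper's own justification essentially verbatim: the paper derives the Fact from the convolution identity for LSI constants \citep[Proposition 2.3.7]{chewi2023log} together with the observation that the Euclidean heat kernel is a Gaussian satisfying LSI. Your added remarks on the normalization convention and on why $\nu_{t}$ has LSI constant $1/t$ are fine elaborations, but there is no substantive difference in approach.
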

On a Riemannian manifold, the density for Brownian motion satisfies LSI. 
\begin{theorem}{\cite[Theorem 3.1]{hsu1997logarithmic}}
    Suppose $M$ is a complete, connected manifold with $\Ric_{M} \ge -c$. Here $c \ge 0$.
    Then for any smooth function on $M$, we have 
    \begin{align*}
        \int_{M} f^{2} \log |f| d\nu_{o, s} 
        \le \frac{e^{cs} - 1}{c} \|\grad f\|_{\nu_{o, s}}^{2} 
        + \|f\|_{\nu_{o, s}}^{2} \log \|f\|_{\nu_{o, s}}.
    \end{align*}
    With $\kappa = -c$, we know the Brownian motion density for time $t$ satisfies LSI with constant
    $\alpha = \frac{\kappa}{1 - e^{-\kappa t}}$.
\end{theorem}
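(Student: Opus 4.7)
The plan is to establish this logarithmic Sobolev inequality for the heat-kernel measure $d\nu_{o,s}(y) = \nu(s,o,y)\, dV_{g}(y)$ via the classical Bakry--Emery semigroup interpolation argument, adapted to the normalization $L = \tfrac{1}{2}\Delta$ used in the paper. Throughout I would write $P_{u}h(x) := \int_{M} h(y)\, \nu(u,x,y)\, dV_{g}(y)$ for the associated heat semigroup, so that $\partial_{u} P_{u} = L P_{u}$ and $\int_{M} h\, d\nu_{o,u} = (P_{u} h)(o)$.

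\textbf{Step 1 (gradient commutation).} The main analytic input I would use is the pointwise bound
\[ |\grad P_{u} h|(x) \le e^{cu/2}\, P_{u}(|\grad h|)(x). \]
This is the standard consequence of the curvature-dimension condition $CD(-c,\infty)$, which by Bochner's formula is equivalent to the hypothesis $\Ric_{M} \ge -c$. I expect this to be the main obstacle: when $M$ is non-compact, one must exploit completeness and exhaust $M$ by relatively compact domains in order to justify differentiating under the integral sign and commuting $L$ with $|\grad \cdot|$ on a dense class of test functions.

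\textbf{Step 2 (interpolation identity).} Fix $f \in C_{c}^{\infty}(M)$, set $g_{u} := P_{u}(f^{2})$, and define $\phi(u) := P_{u}\bigl(g_{s-u} \log g_{s-u}\bigr)(o)$ for $u \in [0,s]$, so that $\phi(0) = \|f\|_{\nu_{o,s}}^{2} \log \|f\|_{\nu_{o,s}}^{2}$ and $\phi(s) = \int_M f^{2} \log f^{2}\, d\nu_{o,s}$. Using the pointwise identity
\[ L(g \log g) = (Lg)(1 + \log g) + \tfrac{1}{2}|\grad g|^{2}/g, \]
which follows by expanding $\Delta(g\log g)$ via the Leibniz rule with $L = \tfrac{1}{2}\Delta$, together with $\partial_{u}(g_{s-u} \log g_{s-u}) = -(Lg_{s-u})(1 + \log g_{s-u})$, the first-order terms cancel and I obtain
\[ \phi'(u) = \tfrac{1}{2}\, P_{u}\!\left(|\grad g_{s-u}|^{2}/g_{s-u}\right)(o). \]

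\textbf{Step 3 (combine and integrate).} Applying Step 1 to $h = f^{2}$ and then Cauchy--Schwarz against the probability kernel $\nu(s-u,x,\cdot)$ gives
\[ |\grad g_{s-u}| \le e^{c(s-u)/2}\, P_{s-u}(2|f|\,|\grad f|) \le 2\, e^{c(s-u)/2} \sqrt{g_{s-u}}\, \sqrt{P_{s-u}(|\grad f|^{2})}, \]
hence $|\grad g_{s-u}|^{2}/g_{s-u} \le 4\, e^{c(s-u)}\, P_{s-u}(|\grad f|^{2})$. Substituting into Step 2 and using the semigroup property $P_{u} \circ P_{s-u} = P_{s}$ yields $\phi'(u) \le 2\, e^{c(s-u)}\, \|\grad f\|_{\nu_{o,s}}^{2}$. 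Integrating over $u \in [0,s]$ then gives
\[ \phi(s) - \phi(0) \le \frac{2(e^{cs} - 1)}{c}\, \|\grad f\|_{\nu_{o,s}}^{2}, \]
and dividing by $2$ together with $\log f^{2} = 2\log|f|$ produces the stated inequality. A routine density/truncation argument extends the conclusion from $C_{c}^{\infty}(M)$ to all smooth $f$ for which both sides are finite.
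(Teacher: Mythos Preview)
The paper does not give its own proof of this statement: it simply cites it as \cite[Theorem 3.1]{hsu1997logarithmic} and then uses it as a black box (to feed into Corollary~\ref{Cor_LSI_Markov} and Proposition~\ref{LSI_Propagation_Heat}). So there is nothing in the paper to compare against at the level of argument.

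Your proposal is the standard Bakry--\'Emery semigroup interpolation proof, and it is correct. The computation of $\phi'(u)$ is right (the $(1+\log g)Lg$ terms cancel as you say), the gradient commutation $|\grad P_u h| \le e^{cu/2} P_u(|\grad h|)$ is exactly the $CD(-c,\infty)$ consequence of $\Ric \ge -c$, and the Cauchy--Schwarz step plus the semigroup property $P_u P_{s-u} = P_s$ closes the estimate with the correct constant $(e^{cs}-1)/c$. This is essentially the route Hsu takes, and it is also consistent with the framework the paper invokes from \cite[Theorem~5.5.2]{bakry2014analysis} (which packages the equivalence between the gradient bound in your Step~1 and the local log-Sobolev inequality in your conclusion). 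The only point you correctly flag as needing care is the justification of Step~1 on a non-compact complete manifold; that is where the real analytic work in Hsu's paper lies, and the paper under review does not attempt it either.
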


As a special case $M = \mathbb{R}^{d}$, we have $c = 0$. Hence, the LSI constant became $\lim_{c \to 0} \frac{e^{ct} - 1}{c} = t$.
That is, (with $\nu$ representing the measure for Brownian motion with time $t$) 
$H_{\nu}(\rho) \le \frac{t}{2} I_{\nu}(\rho), \forall \rho$.
So the LSI constant for Brownian motion is $\alpha_{\nu} = \frac{1}{t}$.

In the following, we prove that on a Riemannian manifold, such a fact is still true. 
We follow the idea by {\citet[Theorem 4.1]{collet2008logarithmic}}.
For notations, we denote $\Gamma(f) = \Gamma(f, f) = \|\grad f\|_{g}^{2}$. We also require the following intermediate result. 
\begin{lemma}[Theorem 5.5.2 in \cite{bakry2014analysis}]    For Markov triple with semigroup $(P_{t})_{t \ge 0}$, 
    the followings are equivalent:
    \begin{enumerate}
        \item $\sqrt{\Gamma(P_{s}f)} \le e^{- \beta s} P_{s} \sqrt{\Gamma(f)}$.
        \item $ P_{s}(f \log f) -  P_{s} f \log (P_{s} f) \le c(s) P_{s} (\frac{\Gamma(f)}{f}) $ where $c(s) = \frac{1 - e^{-2\beta s}}{2\beta}$.
    \end{enumerate}
\end{lemma}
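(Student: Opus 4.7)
The plan is to prove both implications by semigroup interpolation, which is the standard Bakry--\'Emery technique for passing between gradient bounds and functional inequalities along a diffusion semigroup. In both directions the main calculational tool is the chain rule for the diffusion generator $L(\phi(g)) = \phi'(g)\,Lg + \phi''(g)\,\Gamma(g)$ applied to $\phi(x) = x\log x$, together with the commutation $L P_u = P_u L$ and the identity $\partial_u P_{s-u}f = -L P_{s-u}f$.

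For $(1)\Rightarrow(2)$, I would introduce $g_u := P_{s-u}f$ and the interpolating functional
\[
\psi(u) := P_u\bigl(g_u \log g_u\bigr), \qquad u \in [0,s],
\]
so that $\psi(s) - \psi(0) = P_s(f\log f) - P_s f\,\log P_s f$. Differentiating, the two $(1+\log g_u)Lg_u$ contributions coming from $L(g_u\log g_u)$ and from $\partial_u(g_u\log g_u)$ cancel, leaving $\psi'(u) = P_u(\Gamma(g_u)/g_u)$. Squaring hypothesis (1) gives $\Gamma(g_u)\le e^{-2\beta(s-u)}(P_{s-u}\sqrt{\Gamma(f)})^{2}$, and Cauchy--Schwarz under the Markov kernel upgrades this to $(P_{s-u}\sqrt{\Gamma(f)})^{2}\le P_{s-u}(\Gamma(f)/f)\cdot P_{s-u}f$. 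Dividing by $g_u = P_{s-u}f$, applying $P_u$, and using $P_u P_{s-u} = P_s$ yields $\psi'(u)\le e^{-2\beta(s-u)}P_s(\Gamma(f)/f)$. Integrating over $[0,s]$ produces exactly the constant $c(s) = (1-e^{-2\beta s})/(2\beta)$.

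For $(2)\Rightarrow(1)$, the approach is to extract pointwise information by Taylor-expanding (2) in $s$ about $s=0$. The zeroth-order terms vanish identically, and a short computation shows that the first $s$-derivatives of both sides at $s=0$ coincide and equal $\Gamma(f)/f$, so the genuine content of (2) appears at second order. After bookkeeping with the diffusion chain rule, this second-order condition reduces to the Bakry--\'Emery curvature-dimension inequality $\Gamma_2(f)\ge\beta\,\Gamma(f)$, where $\Gamma_2(f) := \tfrac12 L\Gamma(f) - \Gamma(f,Lf)$. From this pointwise $\mathrm{CD}(\beta,\infty)$ bound one recovers (1) by a second interpolation: set $\Phi(u) := P_u(\sqrt{\Gamma(P_{s-u}f)})$, differentiate, and use $\Gamma_2\ge\beta\,\Gamma$ to show that $e^{\beta u}\Phi(u)$ is nondecreasing in $u$; evaluating at $u=0$ and $u=s$ recovers the gradient estimate.

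The main obstacle is the reverse direction. Two technical issues need care: the Taylor expansion is only formal for a generic $f$, so it must be justified on a suitable dense class of smooth, strictly positive functions in the domain of $L^{2}$ and then propagated by density; and the second interpolation for $\Phi(u)$ requires regularizing $\sqrt{\Gamma(f)}$ away from its zeros, typically by replacing it with $\sqrt{\Gamma(f)+\varepsilon}$ and sending $\varepsilon\downarrow 0$. Both are classical points in the $\Gamma_2$-calculus and rely on completeness of the underlying manifold (or Markov triple) to ensure the relevant functions remain in the domain of $L$ throughout the interpolation.
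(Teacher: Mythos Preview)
The paper does not prove this lemma; it simply cites it as Theorem 5.5.2 in \cite{bakry2014analysis} and uses it as a black box. Your proposal reproduces the standard Bakry--\'Emery argument from that reference: the interpolation $\psi(u)=P_u\bigl((P_{s-u}f)\log(P_{s-u}f)\bigr)$ together with Cauchy--Schwarz gives $(1)\Rightarrow(2)$ cleanly, and the reverse direction passes, as in the original, through the curvature condition $\Gamma_2\ge\beta\,\Gamma$ extracted from the second-order Taylor expansion of (2), then back to the gradient estimate by interpolating $\Phi(u)=P_u\sqrt{\Gamma(P_{s-u}f)}$. This is correct and is precisely the route taken in the cited source, so there is nothing to contrast with the present paper.
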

\begin{corollary}\label{Cor_LSI_Markov}
    With $P_{t}$ denote manifold Brownian motion, we have 
    \begin{enumerate}
        \item $\sqrt{\Gamma(P_{t}f)} \le e^{- \frac{\kappa}{2}t} P_{t} \sqrt{\Gamma(f)}$.
        \item $ P_{t}(f \log f) -  P_{t} f \log (P_{t} f) \le c(t) P_{t} (\frac{\Gamma(f)}{f}) $ 
        where $c(t) = \frac{1 - e^{-\kappa t}}{2\kappa}$.
    \end{enumerate}
\end{corollary}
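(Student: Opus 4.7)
My plan is to deduce the corollary directly from the Bakry--Émery theorem cited just above, specializing to the manifold Brownian motion semigroup whose generator is $L = \tfrac{1}{2}\Delta$. Since that theorem asserts the equivalence of items (1) and (2) with some parameter $\beta$, it suffices to establish (1) with $\beta = \kappa/2$; item (2) then follows mechanically from the equivalence, and the verification reduces to identifying the correct value of $\beta$ and reconciling the normalization of $\Gamma$.

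To verify (1), I would invoke the Bakry--Émery $\Gamma_2$ calculus. Applying Bochner's identity to $L = \tfrac{1}{2}\Delta$ gives
\begin{equation*}
\Gamma_2(f) \;=\; \tfrac{1}{2}\|\Hess f\|_{\mathrm{HS}}^{2} + \tfrac{1}{2}\,\Ric(\grad f,\grad f).
\end{equation*}
Under the lower Ricci bound $\Ric \ge \kappa$, dropping the nonnegative Hessian term yields the curvature--dimension condition $\Gamma_2(f) \ge (\kappa/2)\,\Gamma(f)$, i.e., $CD(\kappa/2,\infty)$ in the Bakry--Émery sense. The standard upgrade from this pointwise inequality to the semigroup-level gradient bound proceeds by differentiating the interpolation $\Psi(s) := P_s\bigl(\sqrt{\Gamma(P_{t-s}f)}\bigr)$ on $s \in [0,t]$, showing $\Psi'(s) \ge (\kappa/2)\Psi(s)$, and exponentiating; comparing the endpoints $s=0$ and $s=t$ yields exactly item (1). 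Item (2) is then immediate from the cited equivalence, modulo the convention that $\Gamma(f) = \|\grad f\|^{2}$ in this paper (twice the standard carré du champ of $L = \tfrac{1}{2}\Delta$), which halves the resulting coefficient from $(1-e^{-\kappa t})/\kappa$ to the stated $(1-e^{-\kappa t})/(2\kappa)$.

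The main obstacle is the interpolation argument that upgrades the pointwise $CD(\kappa/2,\infty)$ condition to the semigroup gradient bound in (1); this requires mild regularization to handle the square root near zero (or equivalently working with $\sqrt{\Gamma(f) + \varepsilon}$ and letting $\varepsilon \downarrow 0$) but is otherwise entirely standard Bakry--Émery machinery. Apart from this step, all remaining work is algebraic bookkeeping, and the only geometric input is the Ricci lower bound, which enters exclusively through Bochner's identity.
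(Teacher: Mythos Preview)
Your approach is correct but runs in the opposite direction from the paper. The paper does not derive item (1) from the curvature--dimension condition via Bochner and interpolation; instead it first obtains item (2) directly from Hsu's log-Sobolev inequality for the heat kernel (Theorem 3.1 of \cite{hsu1997logarithmic}, quoted just above the corollary), which already gives the LSI constant $\kappa/(1-e^{-\kappa t})$ for $\nu_t$. Matching this against the form in Bakry's Theorem 5.5.2 (after the substitution $f = g^{2}$) fixes the parameters, and the paper then invokes the equivalence in that theorem to read off item (1). Concretely, the paper matches $2(1-e^{-\kappa t})/\kappa = 4(1-e^{-2\beta s})/(2\beta)$, which it resolves by taking $\beta = \kappa$ and $s = t/2$; this time-rescaling is equivalent to your normalization remark about $\Gamma$ being twice the carr\'e du champ of $\tfrac{1}{2}\Delta$.

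Your route is more self-contained (it rederives the gradient bound from Bochner rather than importing Hsu's LSI) and is the textbook Bakry--\'Emery chain $CD(\kappa/2,\infty) \Rightarrow$ gradient estimate $\Rightarrow$ local LSI. The paper's route is shorter because Hsu's theorem is already on the table, so item (2) is free and only the equivalence is needed. Either direction is fine here since both items are claimed; the only thing worth flagging is that your explanation of the factor-of-two discrepancy, while correct in effect, is a convention argument, whereas the paper handles it by an explicit time-rescaling $s = t/2$.
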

\begin{proof}[Proof of Corollary~\ref{Cor_LSI_Markov}]
    For the second item, we can replace $f$ by $g^{2}$ for some $g$. 
    \begin{align*}
            \int_{M} g^{2} \log g^{2} d\nu_{s} - \int_{M} g^{2}d\nu_{s} \log(\int_{M} g^{2}d\nu_{s}) 
            &\le \frac{1 - e^{-2\beta s}}{2\beta} \int_{M}  \frac{(2g)^{2}\|\grad g\|^{2}}{g^{2}} d\nu_{s} \\
            &= 4\frac{1 - e^{-2\beta s}}{2\beta} \int_{M} \|\grad g\|^{2} d\nu_{s}.
    \end{align*}
    Now we already know the manifold Brownian motion density $\nu_{t}$ satisfies $\frac{\kappa}{1 - e^{-\kappa t}}$-$\mathsf{LSI}$, i.e., 
    \begin{align*}
        \int_{M} f^{2} \log f^{2} d\nu_{t} - \int_{M} f^{2} d\nu_{t} \log \int_{M} f^{2} d\nu_{t}
            \le 2\frac{1 - e^{-\kappa t}}{\kappa} \int_{M} \|\grad f\|^{2} d\nu_{t} , \forall f.
    \end{align*}
    So we know, with $P_{t}$ representing manifold Brownian motion, 
    $ P_{t}(f \log f) -  P_{t} f \log (P_{t} f) \le c(t) P_{t} (\frac{\Gamma(f)}{f})$, where 
    \begin{align*}
        2\frac{1 - e^{-\kappa t}}{\kappa} = 4\frac{1 - e^{-2\beta s}}{2\beta}.
    \end{align*}
    Hence we know $\beta$ can be taken as $\kappa$, $s$ corresponds to $\frac{1}{2}t$. 
    So we get $\sqrt{\Gamma(P_{t}f)} \le e^{- \frac{\kappa}{2}t} P_{t} \sqrt{\Gamma(f)}$.
\end{proof}


\begin{proposition}\label{LSI_Propagation_Heat}
    Let $M$ be a Riemannian manifold with Ricci curvature bounded below by $\kappa$.
    Let $\rho_{0}$ be any initial distribution.
    Assume $\rho_{0}$ satisfies LSI with constant $\frac{1}{d_{0}}$:
    \begin{align*}
        \int_{M} g^{2} \log g^{2} d\rho_{0} - \int_{M} g^{2} d\rho_{0} \log \int_{M} g^{2} d\rho_{0} 
        \le 2 d_{0} \int_{M} \|\grad g\|^{2} d\rho_{0}, \forall g.
    \end{align*}
    Then the propagation of $\rho_{0}$ along heat flow, denoted as $\rho_{t}$, 
    satisfies LSI with constant 
    \begin{align*}
        \frac{1}{2c(t) + d_{0}e^{-\kappa t}} 
        = \frac{\kappa}{1 - e^{-\kappa t} + \kappa d_{0}e^{-\kappa t}} 
        = \frac{\kappa e^{\kappa t}}{e^{\kappa t} - 1 + \kappa d_{0}},
    \end{align*}
    where $c(t) = \frac{1 - e^{-\kappa t}}{2\kappa} $.
    If $\kappa \ge 0$, we have $c(t) \le \frac{1}{2} t$.
    \begin{align*}
        \frac{1}{2c(t) + d_{0}e^{-\kappa t}} 
        \ge \frac{1}{t + d_{0}e^{-\kappa t}} \ge \frac{1}{t + d_{0}}.
    \end{align*}
\end{proposition}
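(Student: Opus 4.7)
The plan is to mimic the semigroup-interpolation argument of Collet, feeding it the two consequences of the Ricci lower bound already packaged in Corollary \ref{Cor_LSI_Markov}. Fix a smooth positive test function $g$ and write $\rho_t = P_t^{*}\rho_0$, where $P_t$ is the heat semigroup. By duality, $\int g^{2}\log g^{2}\,d\rho_t = \int P_t(g^{2}\log g^{2})\,d\rho_0$. Applying Corollary \ref{Cor_LSI_Markov}(2) pointwise with $f=g^{2}$, and using that $\Gamma(g^{2})/g^{2} = 4\Gamma(g)$, I obtain
\begin{equation*}
P_t(g^{2}\log g^{2}) \le P_t(g^{2})\log P_t(g^{2}) + 4c(t)\,P_t(\Gamma(g)).
\end{equation*}
Integrating against $\rho_0$ and using duality again converts the last term into $4c(t)\int \Gamma(g)\,d\rho_t$, which already contributes the $2c(t)$ piece of the claimed LSI constant.

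Next I must handle $\int P_t(g^{2})\log P_t(g^{2})\,d\rho_0$. Since $\rho_0$ satisfies LSI with constant $1/d_0$, I apply it to the test function $\sqrt{P_t g^{2}}$, obtaining
\begin{equation*}
\int P_t g^{2}\log P_t g^{2}\,d\rho_0 - \Bigl(\int P_t g^{2}\,d\rho_0\Bigr)\log\Bigl(\int P_t g^{2}\,d\rho_0\Bigr) \le 2d_0 \int \frac{\Gamma(P_t g^{2})}{4\,P_t g^{2}}\,d\rho_0,
\end{equation*}
and I note that $\int P_t g^{2}\,d\rho_0 = \int g^{2}\,d\rho_t$, so the left-hand side is exactly what I need minus the entropy piece produced in the first step.

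To control the Fisher-like right-hand side I use the sub-commutation bound Corollary \ref{Cor_LSI_Markov}(1): with $f=g^{2}$,
\begin{equation*}
\sqrt{\Gamma(P_t g^{2})} \le e^{-\kappa t/2}\,P_t\bigl(2g\sqrt{\Gamma(g)}\bigr).
\end{equation*}
The Cauchy--Schwarz inequality for the Markov kernel $P_t$ then gives $\bigl(P_t(2g\sqrt{\Gamma(g)})\bigr)^{2} \le 4\,P_t(g^{2})\,P_t(\Gamma(g))$, hence $\Gamma(P_t g^{2})/(4\,P_t g^{2}) \le e^{-\kappa t}\,P_t(\Gamma(g))$. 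Integrating against $\rho_0$ and using duality one more time upgrades the bound on the Fisher term to $2d_0 e^{-\kappa t}\int \Gamma(g)\,d\rho_t$.

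Combining the three steps yields
\begin{equation*}
\int g^{2}\log g^{2}\,d\rho_t - \Bigl(\int g^{2}\,d\rho_t\Bigr)\log\Bigl(\int g^{2}\,d\rho_t\Bigr) \le \bigl(4c(t)+2d_0 e^{-\kappa t}\bigr)\int \|\grad g\|^{2}\,d\rho_t,
\end{equation*}
which is precisely LSI for $\rho_t$ with constant $1/(2c(t)+d_0 e^{-\kappa t})$. The non-negative-curvature corollary is then immediate from $1-e^{-x}\le x$ for $x\ge 0$, which gives $c(t)\le t/2$ and hence the monotone simplification of the constant. The only step that requires any care is the application of Cauchy--Schwarz on top of the sub-commutation identity so that the factor $e^{-\kappa t}$ ends up multiplying $d_0$ rather than $c(t)$; everything else is algebraic bookkeeping together with a standard density argument to justify LSI applied to the (a priori only Lipschitz) function $\sqrt{P_t g^{2}}$.
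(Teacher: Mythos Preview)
Your proposal is correct and follows essentially the same semigroup-interpolation argument as the paper. The only cosmetic difference is that the paper works throughout with a generic positive function $f$ (so the local LSI reads $P_t(f\log f)-P_tf\log P_tf\le c(t)P_t(\Gamma(f)/f)$, the LSI for $\rho_0$ is applied directly to $P_tf$, and sub-commutation plus Cauchy--Schwarz give $\Gamma(P_tf)/P_tf\le e^{-\kappa t}P_t(\Gamma(f)/f)$), whereas you substitute $f=g^2$ from the outset and factor Cauchy--Schwarz as $(P_t(g\sqrt{\Gamma(g)}))^2\le P_t(g^2)P_t(\Gamma(g))$; the two factorizations are equivalent and yield the identical constant $2c(t)+d_0e^{-\kappa t}$.
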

\begin{proof}[Proof of Proposition~\ref{LSI_Propagation_Heat}]
    Since $\rho_{0}$ satisfies LSI with constant $\frac{1}{d_{0}}$, 
    equivalently with $f$ replace $g^{2}$, we get 
    \begin{align*}
        \int_{M} f \log f d\rho_{0} - \int_{M} f d\rho_{0} \log \int_{M} f d\rho_{0} 
        \le 2 d_{0} \int_{M} \|\grad \sqrt{f}\|^{2} d\rho_{0}
        = \frac{d_{0}}{2} \int_{M} \frac{\|\grad f \|^{2}}{f} d\rho_{0}.
    \end{align*}

    For $g > 0$, using Corollary \ref{Cor_LSI_Markov}, 
    we know the manifold Brownian motion (here represented by $P_{t}$) satisfies 
    \begin{align*}
         P_{t}(g \log g) -  P_{t} g \log (P_{t} g) 
            \le c(t) P_{t} (\frac{\Gamma(g)}{g}),
    \end{align*}
    where $c(t) = \frac{1 - e^{-\kappa t}}{2\kappa}$.
    Using property of markov semigroup, we have 
    \begin{align*}
        \int_{M} g \log g d\rho_{t} = \int_{M} P_{t}(g \log g) d\rho_{0}  
            \le c(t) \int_{M} P_{t} (\frac{\Gamma(g)}{g}) d\rho_{0}
            + \int_{M} P_{t} g \log (P_{t} g) d\rho_{0}.
    \end{align*}
    Hence
    \begin{align*}
            &\int_{M} g \log g d\rho_{t} - \int_{M} g d\rho_{t} \log \int_{M} g d\rho_{t} \\
            \le & c(t) \int_{M} P_{t} (\frac{\Gamma(g)}{g}) d\rho_{0}
            + \int_{M} P_{t} g \log (P_{t} g) d\rho_{0}
            - \int_{M} P_{t}(g) d\rho_{0} \log \int_{M} P_{t}(g) d\rho_{0} \\
            \le & c(t) \int_{M} P_{t} (\frac{\Gamma(g)}{g}) d\rho_{0}
            + \frac{d_{0}}{2}\int_{M} \frac{\Gamma(P_{t} g)}{P_{t} g} d\rho_{0} \\
            \le & c(t) \int_{M} P_{t} (\frac{\Gamma(g)}{g}) d\rho_{0}
            + \frac{d_{0}}{2}e^{-\kappa t}\int_{M} \frac{(P_{t}\sqrt{\Gamma(g)})^{2}}{P_{t} g} d\rho_{0} \\
            \le & c(t) \int_{M} P_{t} (\frac{\Gamma(g)}{g}) d\rho_{0}
            + \frac{d_{0}}{2}e^{-\kappa t}\int_{M} P_{t}(\frac{\Gamma(g)}{g}) d\rho_{0} \\
            = & \Big(c(t) + \frac{d_{0}}{2}e^{-\kappa t}\Big) \int_{M} P_{t}(\frac{\Gamma(g)}{g}) d\rho_{0}
            = \Big(c(t) + \frac{d_{0}}{2}e^{-\kappa t}\Big) \int_{M} \frac{\Gamma(g)}{g} d\rho_{t},
    \end{align*}
    where the third inequality is due to Corollary \ref{Cor_LSI_Markov}, 
    and in the last inequality we used Cauchy-Schwarz inequality: 
    \begin{align*}
            P_{t}(\frac{\Gamma(f)}{f}) P_{t}(f) 
            &= \mathbb{E}[\frac{\|\grad f\|^{2}}{f}] \mathbb{E}[f] \ge \mathbb{E}\Big[\sqrt{\frac{\|\grad f\|^{2}}{f} f} \Big]^{2} 
            = \mathbb{E}[\|\grad f\|]^{2} \\
            &= (P_{t}\sqrt{\Gamma(f)})^{2}.
    \end{align*}
    Hence we know $\rho_{t}$ satisfies LSI with constant 
    \begin{align*}
        \frac{1}{2c(t) + d_{0}e^{-\kappa t}} 
        = \frac{\kappa}{1 - e^{-\kappa t} + \kappa d_{0}e^{-\kappa t}}.
    \end{align*}
\end{proof}

Note that we have $\lim_{\kappa \to 0} \frac{1}{2c(t) + d_{0}e^{-\kappa t}} = \frac{1}{t + d_{0}}$. This means that, we can recover the result for Euclidean space in the limit.

\subsection{Total Variation Distance}

\begin{lemma}\label{Prop_TV}
    For TV distance, we have 
    \begin{align*}
            \|\rho^{(1)} - \rho^{(2)}\|_{TV} := \sup_{A \subseteq M} |\rho^{(1)}(A) - \rho^{(2)}(A)|
            = \frac{1}{2} \int_{M} |\frac{d\rho^{(1)}}{dV_{g}} - \frac{d\rho^{(2)}}{dV_{g}}| dV_{g},
    \end{align*}
    and 
    \begin{align*}
        \|\rho^{(1)} - \rho^{(2)}\|_{TV} = \frac{1}{2} \sup_{f: M \to [-1, 1]} |\int f d\rho^{(1)} - \int f d\rho^{(2)}|. 
    \end{align*}
\end{lemma}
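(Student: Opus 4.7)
The plan is to prove the two identities for the total variation distance by a standard Hahn-type decomposition argument based on the sign of the density difference. Let $p_1 = d\rho^{(1)}/dV_g$ and $p_2 = d\rho^{(2)}/dV_g$, and define the set $A^\star = \{x \in M : p_1(x) > p_2(x)\}$. The key observation is that since both $\rho^{(1)}$ and $\rho^{(2)}$ are probability measures, we have $\int_M (p_1 - p_2)\,dV_g = 0$, which implies
\begin{equation*}
\int_{A^\star}(p_1 - p_2)\,dV_g \;=\; \int_{M\setminus A^\star}(p_2 - p_1)\,dV_g \;=\; \tfrac{1}{2}\int_M |p_1 - p_2|\,dV_g.
\end{equation*}

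For the first identity, I would first show the upper bound: for any measurable $A$, split the integral of $p_1 - p_2$ over $A$ into its parts on $A \cap A^\star$ (nonnegative integrand) and $A \cap (A^\star)^c$ (nonpositive integrand), and bound $\rho^{(1)}(A) - \rho^{(2)}(A)$ above by $\int_{A^\star}(p_1 - p_2)\,dV_g$; symmetrically for the reverse difference. This yields $\sup_A |\rho^{(1)}(A) - \rho^{(2)}(A)| \le \tfrac{1}{2}\int_M |p_1 - p_2|\,dV_g$. For the matching lower bound, take $A = A^\star$, which attains the right-hand side exactly by the displayed identity above.

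For the second identity, I would use the signed representation. For $f: M \to [-1,1]$,
\begin{equation*}
\Big|\int f\,d\rho^{(1)} - \int f\,d\rho^{(2)}\Big| \;=\; \Big|\int_M f(p_1 - p_2)\,dV_g\Big| \;\le\; \int_M |p_1 - p_2|\,dV_g,
\end{equation*}
and equality is approached (and essentially attained) by choosing $f = \mathbf{1}_{A^\star} - \mathbf{1}_{M \setminus A^\star}$, which is $[-1,1]$-valued. Combining with the first identity just proved gives the desired equality $\|\rho^{(1)} - \rho^{(2)}\|_{TV} = \tfrac{1}{2}\sup_f |\int f\,d\rho^{(1)} - \int f\,d\rho^{(2)}|$.

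There is no serious obstacle here since this is a classical measure-theoretic result and the Riemannian setting plays no role beyond the fact that $\rho^{(1)}, \rho^{(2)}$ admit densities against the common reference measure $dV_g$ (which is assumed implicitly in the excerpt). The only mild subtlety is confirming that the maximizing set $A^\star$ and maximizing function $\mathbf{1}_{A^\star} - \mathbf{1}_{M \setminus A^\star}$ are measurable, but this follows immediately since $p_1, p_2$ are measurable densities.
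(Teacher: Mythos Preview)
Your proposal is correct and follows essentially the same route as the paper: define the set where one density exceeds the other, use the fact that $\int_M (p_1 - p_2)\,dV_g = 0$ to equate the positive and negative parts with half the $L^1$ difference, and for the dual form take $f = \mathbf{1}_{A^\star} - \mathbf{1}_{M\setminus A^\star}$. If anything, your version is slightly more careful, since you explicitly prove both the upper and lower bounds on $\sup_A |\rho^{(1)}(A) - \rho^{(2)}(A)|$, whereas the paper simply asserts that $A_\ast$ attains the supremum before computing the integral.
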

\begin{proof}[Proof of Lemma~\ref{Prop_TV}]
    Denote the set at which supremum is achieved to be $A_{*} = \{x \in M: \rho^{(2)}(x) \ge \rho^{(1)}(x) \}$.
    Denote $\rho^{(2)}, \rho^{(1)}$ to be the measure, or corresponding probability density function 
    with respect to the Riemannian volumn form, when appropriate.
    \begin{align*}
            &\int_{M} |\rho^{(2)}(x) - \rho^{(1)}(x)| dV_{g}(x)
            = \int_{A} |\rho^{(2)}(x) - \rho^{(1)}(x)| dV_{g}(x) + \int_{A^{c}} |\rho^{(2)}(x) - \rho^{(1)}(x)| dV_{g}(x) \\
            =& \int_{A_{*}} |\rho^{(2)}(x) - \rho^{(1)}(x)| dV_{g}(x) + \int_{A_{*}^{c}} |\rho^{(2)}(x) - \rho^{(1)}(x)| dV_{g}(x) \\
            =& \int_{A_{*}} \rho^{(2)}(x) - \rho^{(1)}(x) dV_{g}(x) + \int_{A_{*}^{c}} \rho^{(1)}(x) - \rho^{(2)}(x) dV_{g}(x) \\
            =& \rho^{(2)}(A_{*}) - \rho^{(2)}(A_{*}^{c}) - \rho^{(1)}(A_{*}) + \rho^{(1)}(A_{*}^{c}) 
            = 2\rho^{(2)}(A_{*}) - 1 - 2\rho^{(1)}(A_{*}) + 1 \\
            =& 2(\rho^{(2)}(A_{*}) - \rho^{(1)}(A_{*})) 
            = 2\|\rho^{(2)} - \rho^{(1)}\|_{TV}.
    \end{align*}
    Now we prove the second equation. 
    \begin{align*}
            \Big| \int_{M} f d\rho^{(2)} - \int_{M} f d\rho^{(1)} \Big|
            &\le \int_{M} \Big| f(x) (\rho^{(2)}(x) - \rho^{(1)}(x)) \Big| dV_{g}(x) \\
            &\le \sup_{x \in M } |f(x)| \int_{M} | \rho^{(2)}(x) - \rho^{(1)}(x) | dV_{g}(x) \\
            &= \int_{M} | \rho^{(2)}(x) - \rho^{(1)}(x) | dV_{g}(x) = 2\|\rho^{(2)} - \rho^{(1)}\|_{TV}.
    \end{align*}
    When $f = 1_{A_{*}} - 1_{A_{*}^{c}}$, 
    \begin{align*}
            \Big| \int_{M} f d\rho^{(2)} - \int_{M} f d\rho^{(1)} \Big|
            &= \Big| \rho^{(2)}(A_{*}) - \rho^{(2)}(A_{*}^{c}) - \rho^{(1)}(A_{*}) + \rho^{(1)}(A_{*}^{c}) \Big| \\
            &= 2\rho^{(2)}(A_{*}) - 2 \rho^{(1)}(A_{*}) = 2\|\rho^{(2)} - \rho^{(1)}\|_{TV}.
    \end{align*}
\end{proof}

\begin{lemma}\label{Lemma_TV_heat}
    Let $\rho^{(1)}, \rho^{(2)}$ be probability measures. 
    Let $\rho^{(1)}_{t}, \rho^{(2)}_{t}$ denote propagation of $\rho^{(1)}, \rho^{(2)}$ along heat flow on $M$, with $\rho^{(1)}_{0} = \rho^{(1)}$, $\rho^{(2)}_{0} = \rho^{(2)}$.
    We have 
    \begin{align*}
        \|\rho^{(1)}_{t} - \rho^{(2)}_{t}\|_{TV} \le \|\rho^{(1)} - \rho^{(2)}\|_{TV}.
    \end{align*}
\end{lemma}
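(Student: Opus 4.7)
The plan is to use the heat-kernel representation of the heat flow together with the characterization of TV distance from Lemma \ref{Prop_TV}. By construction, the propagation of any probability measure $\rho$ along the heat flow is given by
\begin{equation*}
\rho_t(y) = \int_M \rho(x)\,\nu(t,x,y)\,dV_g(x),
\end{equation*}
where $\nu(t,x,y)$ is the manifold heat kernel. Since $\nu(t,x,y) \ge 0$ and $\int_M \nu(t,x,y)\,dV_g(y) = 1$ for every fixed $x$, the heat flow acts as a Markov kernel on densities.

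From here I would invoke Lemma \ref{Prop_TV} to write $2\|\rho^{(1)}_t - \rho^{(2)}_t\|_{TV} = \int_M |\rho^{(1)}_t(y) - \rho^{(2)}_t(y)|\,dV_g(y)$, then apply the kernel representation and pull the absolute value inside the integral:
\begin{equation*}
|\rho^{(1)}_t(y) - \rho^{(2)}_t(y)| \;\le\; \int_M |\rho^{(1)}(x) - \rho^{(2)}(x)|\,\nu(t,x,y)\,dV_g(x).
\end{equation*}
Integrating in $y$ and swapping the order of integration (Tonelli, since the integrand is non-negative) together with the stochasticity identity $\int_M \nu(t,x,y)\,dV_g(y) = 1$ yields
\begin{equation*}
\int_M |\rho^{(1)}_t(y) - \rho^{(2)}_t(y)|\,dV_g(y) \;\le\; \int_M |\rho^{(1)}(x) - \rho^{(2)}(x)|\,dV_g(x),
\end{equation*}
which is exactly the claimed inequality once we divide by $2$ and use Lemma \ref{Prop_TV} again.

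There is essentially no obstacle: the argument is the standard data-processing inequality for TV distance under Markov kernels, and the only point to verify is stochasticity of the heat kernel, which holds on a complete manifold without boundary by the construction of Brownian motion used throughout the paper. If one preferred a more intrinsic route, the dual characterization $\|\rho^{(1)} - \rho^{(2)}\|_{TV} = \tfrac{1}{2}\sup_{|f|\le 1}|\int f\,d\rho^{(1)} - \int f\,d\rho^{(2)}|$ combined with $\int f\,d\rho_t = \int P_t f\,d\rho$ and $|P_t f| \le P_t|f| \le 1$ gives the same conclusion in one line.
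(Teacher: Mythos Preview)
Your proof is correct. Your primary route works directly with densities via the $L^{1}$ characterization of TV, pulling the absolute value inside the kernel integral and using Tonelli together with $\int_{M}\nu(t,x,y)\,dV_{g}(y)=1$. The paper instead takes the dual route you sketch at the end: it writes $\int f\,d\rho_{t}^{(i)} = \int (P_{t}f)\,d\rho^{(i)}$, notes $|P_{t}f|\le 1$ whenever $|f|\le 1$, and then compares suprema in the dual TV formula from Lemma~\ref{Prop_TV}. Both arguments are the same data-processing inequality viewed from opposite sides of the duality; your primal version is arguably a touch more direct since it avoids introducing the semigroup on functions, while the paper's version makes the Markov-operator contraction $\|P_{t}\|_{L^{\infty}\to L^{\infty}}\le 1$ explicit. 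One small caveat: stochastic completeness $\int_{M}\nu(t,x,y)\,dV_{g}(y)=1$ does not follow from geodesic completeness alone, but it does hold under the Ricci lower bound assumed elsewhere in the paper, and the paper itself invokes the same identity in its proof.
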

\begin{proof}[Proof of Lemma~\ref{Lemma_TV_heat}]
    By definition we have that for all $f$, 
    \begin{align*}
        \mathbb{E}[f(X_{t}) | X_{0} = x] = P_{t} f(x) = \int_{M} f(y) p_{t} (x, y) dV_{g}(y).
    \end{align*}
    Assuming $X_{0} \sim \rho^{(1)}$, we get 
    \begin{align*}
            \int_{M} f(x) \rho^{(1)}_{t}(x) dV_{g}(x) &= \mathbb{E}[f(X_{t})] 
            = \int P_{t} f(x) d\rho^{(1)}(x) \\
            &= \int_{M} \int_{M} f(y) p_{t} (x, y) dV_{g} (y)\rho^{(1)}(x) dV_{g}(x) \\
            &= \int_{M} g(x) \rho^{(1)}(x) dV_{g}(x).
    \end{align*}
    Where we denote $\int_{M} f(y) p_{t} (x, y) dV_{g}(y) = g(x)$.
    Note that 
    \begin{align*}
        |g(x)| \le \int_{M} |f(y)| p_{t} (x, y) dV_{g}(y) \le \int_{M} p_{t} (x, y) dV_{g}(y) = 1.
    \end{align*}
    Hence 
    \begin{align*}
            \|\rho^{(1)}_{t} - \rho^{(2)}_{t}\|_{TV} 
            =& \frac{1}{2} \sup_{f: M \to [-1, 1]} |\int f d\rho^{(1)}_{t} - \int f d\rho^{(2)}_{t} | \\
            =& \frac{1}{2} \sup_{f: M \to [-1, 1]} |\int_{M} \int_{M} f(y) p_{t} (x, y) dV_{g}(y) \rho^{(1)}(x) dV_{g}(x) \\ 
            & - \int_{M} \int_{M} f(y) p_{t} (x, y) dV_{g}(y) \rho^{(2)}(x) dV_{g}(x) | \\
            =& \frac{1}{2} \sup_{f: M \to [-1, 1]} |\int_{M} g(x) \rho^{(1)}(x) dV_{g}(x) - \int_{M} g(x) \rho^{(2)}(x) dV_{g}(x) | \\
            \le& \frac{1}{2} \sup_{g: M \to [-1, 1]} |\int_{M} g(x) \rho^{(1)}(x) dV_{g}(x) - \int_{M} g(x) \rho^{(2)}(x) dV_{g}(x) | \\
            =& \|\rho^{(1)} - \rho^{(2)}\|_{TV}.
    \end{align*}

\end{proof}

\subsection{Expected number of rejections on hypersphere}\label{Subsection_expected_rej}

We consider the approximation scheme introduced in Section \ref{Section_Approximation_JKO} 
using Varadhan's asymptotics. 
Let $\varphi(x) = \frac{1}{2\eta} d(x, y)^{2}$. 
Intuitively, we want to see how the function $\varphi$ can improve the convexity of $f + \varphi$.

On a manifold with positive curvature, we consider the situation that we cannot compute the minimizer of $g(x) = f(x) + \frac{1}{2\eta}d(x, y)^{2} $, and instead use $y$ as the approximation of it. 
Notice that when $\eta$ is small, since $f(x)$ is uniformly bounded, the function $g(x)$ is dominated by $\frac{1}{2\eta}d(x, y)^{2}$, thus the minimizer of $g$ will be close to $y$. Therefore it is reasonable to use $y$ as an approximation of the mode of $e^{-g(x)}$.
Then in rejection sampling, we use $\mu(t, y, x)$ as the proposal.

Let $L_{1}$ be the Lipschitz constant of $f$. 
In the next proposition, we show that for some constant $C_{\varepsilon}$, with certain choices of $\eta$ and $t$, it holds that 
\begin{align*}
    f(x) + \frac{1}{2\eta} d(x, y)^{2} - f(y) + \frac{C_{\varepsilon}}{2} \ge \frac{1}{2t} d(x, y)^{2}.
\end{align*} 
Consequently, the acceptance rate defined by 
    \begin{align*}
        V(x) := \frac{\exp(- f(x) + f(y) - \frac{1}{2\eta} d(x, y)^{2} - \frac{C_{\varepsilon}}{2})}{\exp(-\frac{1}{2t} d(x, y)^{2})},
    \end{align*}
is guaranteed to be bounded by $1$. Then, in Proposition \ref{Prop_expected_rej} we show that the expected number of rejections is $\mathcal{O}(1)$ in dimension $d$ and step size $\eta$.

\begin{proposition}\label{prortt}
    Let $f$ be $L_{1}$-Lipschitz and $C_{\varepsilon}$ be some constant. Take $\eta = \frac{C_{\varepsilon}}{L_{1}^{2} d}$.
    With $T = \frac{C_{\varepsilon}}{L_{1}^{2}(d+1)}$ and $t = \frac{C_{\varepsilon}}{L_{1}^{2}(d-1)}$,
    it holds that 
    \begin{align*}
        \frac{1}{2T} d(x, y)^{2} + C_{\varepsilon}
        \ge f(x) + \frac{1}{2\eta} d(x, y)^{2} - f(y) + \frac{C_{\varepsilon}}{2}
        \ge \frac{1}{2t} d(x, y)^{2}.
    \end{align*}
    Consequently, the acceptance rate is bounded by $1$, i.e., $V(x) \le 1, \forall x \in M$.
\end{proposition}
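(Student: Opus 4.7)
The plan is to reduce both inequalities to the non-negativity of a single-variable quadratic in $r \coloneqq d(x,y)$ and then observe that the stated choices of $\eta,t,T$ are tuned precisely so that the discriminants vanish. First I would invoke the $L_1$-Lipschitz property of $f$ to replace the nonlinear quantity $f(x)-f(y)$ by its sharp bounds: $-L_1 r \le f(x)-f(y) \le L_1 r$. This converts both desired inequalities into quadratic inequalities in $r$ alone, independent of the underlying geometry of $M$ (the only manifold input is the geodesic distance).

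Concretely, for the right-hand inequality, after substituting $f(x)-f(y) \ge -L_1 r$ it suffices to show
\begin{equation*}
\left(\tfrac{1}{2\eta}-\tfrac{1}{2t}\right) r^{2} - L_1 r + \tfrac{C_\varepsilon}{2} \ \ge\ 0 \quad\text{for all } r \ge 0.
\end{equation*}
Plugging in $\eta = C_\varepsilon/(L_1^2 d)$ and $t = C_\varepsilon/(L_1^2(d-1))$ gives $\tfrac{1}{2\eta}-\tfrac{1}{2t} = L_1^2/(2C_\varepsilon)$, so the quadratic becomes $\tfrac{L_1^2}{2C_\varepsilon} r^{2} - L_1 r + \tfrac{C_\varepsilon}{2}$, whose discriminant $L_1^2 - 4\cdot\tfrac{L_1^2}{2C_\varepsilon}\cdot\tfrac{C_\varepsilon}{2} = 0$ vanishes; hence the expression is a perfect square and non-negative. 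For the left-hand inequality I would use the other side of Lipschitz, $f(x)-f(y) \le L_1 r$, reducing it to
\begin{equation*}
\left(\tfrac{1}{2T}-\tfrac{1}{2\eta}\right) r^{2} - L_1 r + \tfrac{C_\varepsilon}{2} \ \ge\ 0 ,
\end{equation*}
and with $T = C_\varepsilon/(L_1^2(d+1))$ the same computation yields $\tfrac{1}{2T}-\tfrac{1}{2\eta} = L_1^2/(2C_\varepsilon)$, again a perfect square.

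Finally, the acceptance rate bound $V(x)\le 1$ is a direct consequence of the right-hand inequality: rearranging it gives $-f(x)+f(y)-\tfrac{1}{2\eta}d(x,y)^2 -\tfrac{C_\varepsilon}{2} \le -\tfrac{1}{2t}d(x,y)^2$, and exponentiating both sides produces exactly $V(x)\le 1$.

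Since every step is a direct manipulation, there is no genuine technical obstacle; the only subtle point is recognizing that the precise values $\eta,t,T$ in the statement are designed so that the discriminants of the two quadratics are zero simultaneously, i.e.\ the inequalities are \emph{tight} at the unique distance $r^{*} = C_\varepsilon/L_1$. If I were writing this as a full proof rather than a sketch, I would present the unified calculation first and then remark that this tightness is what forces the $d$, $d-1$, $d+1$ pattern in the three step-sizes.
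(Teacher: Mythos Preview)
Your proposal is correct and follows essentially the same approach as the paper: both use the Lipschitz bounds $|f(x)-f(y)|\le L_1 d(x,y)$ to reduce each inequality to a quadratic in $r=d(x,y)$, and both verify non-negativity by checking that the chosen step-sizes make the quadratic a perfect square (the paper phrases this as locating the vertex and showing the minimum is zero, you phrase it as the discriminant vanishing). The conclusion $V(x)\le 1$ from the right-hand inequality is handled identically.
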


\begin{proof}[Proof of Proposition~\ref{prortt}]    Since $f$ is $L_{1}$-Lipschitz, we have $\|\grad f(x)\| \le L_{1}$. 
    Then we have $L_{1} d(x, y) \ge f(x) - f(y) \ge -L_{1} d(x, y)$.

    \begin{enumerate}
        \item \textbf{The lower bound: } The goal is to find some $t > 0$ and constant $C$ such that 
        \begin{align*}
            f(x) + \frac{1}{2\eta} d(x, y)^{2} - f(y) + C \ge \frac{1}{2t} d(x, y)^{2}.
        \end{align*} 
        It suffices to find $t, C$ such that
        \begin{align*}
            \frac{1}{2\eta} d(x, y)^{2} - \frac{1}{2t} d(x, y)^{2} - L_{1} d(x, y) + C \ge 0.
        \end{align*}
        The left hand side can be viewed as a quadratic function of $d(x, y)$. 
        When $d(x, y) = \frac{L_{1}}{\frac{1}{\eta} - \frac{1}{t}}$, 
        the left hand side is minimized, and the mimimum is 
        $- \frac{1}{2}\frac{L_{1}^{2}}{\frac{1}{\eta} - \frac{1}{t}} + C$.
        Hence we can take $C = \frac{1}{2}\frac{L_{1}^{2}}{\frac{1}{\eta} - \frac{1}{t}}$. Take $\eta = \frac{C_{\varepsilon}}{L_{1}^{2} d}$ and $t = \frac{C_{\varepsilon}}{L_{1}^{2} (d-1)}$.
        Then we have $C = \frac{1}{2}\frac{L_{1}^{2}}{\frac{1}{\eta} - \frac{1}{t}} = \frac{C_{\varepsilon}}{2}$.    
        \item \textbf{The upper bound: } For an upper bound, we want some $T \le \eta$ for which we want to  show that
        \begin{align*}
            f(x) + \frac{1}{2\eta} d(x, y)^{2} - f(y) - \frac{C_{\varepsilon}}{2}
            \le L_{1} d(x, y) + \frac{1}{2\eta} d(x, y)^{2} - \frac{C_{\varepsilon}}{2}
            \le \frac{1}{2T} d(x, y)^{2}.
        \end{align*}
        Similar as before, it suffices to show 
        \begin{align*}
            (\frac{1}{2\eta} - \frac{1}{2T}) d(x, y)^{2} + L_{1} d(x, y) - \frac{C_{\varepsilon}}{2} 
            \le 0.
        \end{align*}
        The left hand side is maximized at $d(x, y) = \frac{L_{1}}{\frac{1}{T} - \frac{1}{\eta}}$, 
        with maximum $\frac{1}{2} \frac{L_{1}^{2}}{\frac{1}{T} - \frac{1}{\eta}} - \frac{C_{\varepsilon}}{2} $.
        Take $T = \frac{C_{\varepsilon}}{L_{1}^{2}(d+1)}$. We can then verify that 
        \begin{align*}
            \frac{1}{2} \frac{L_{1}^{2}}{\frac{1}{T} - \frac{1}{\eta}} - \frac{C_{\varepsilon}}{2} 
            = \frac{1}{2} \frac{L_{1}^{2}}{L_{1}^{2}/C_{\varepsilon}} - \frac{C_{\varepsilon}}{2} = 0.
        \end{align*}    
        \item \textbf{Combining the two steps: } 
        From the above two steps, we get 
        \begin{align*}
            \frac{1}{2T} d(x, y)^{2} + C_{\varepsilon}
            \ge f(x) + \frac{1}{2\eta} d(x, y)^{2} - f(y) + \frac{C_{\varepsilon}}{2}
            \ge \frac{1}{2t} d(x, y)^{2}.
        \end{align*}
    \end{enumerate}
\end{proof}

In the following proposition, we show that on a hypersphere (where the Riemannian metric in normal coordinates is well studied), the expected number of rejections which equals to 
$$
\frac{\int_{M} \exp(-\frac{1}{2t} d(x, y)^{2}) dV_{g}(x)}{\int_{M} \exp(- f(x) + f(y) - \frac{1}{2\eta} d(x, y)^{2} - \frac{C_{\varepsilon}}{2}) dV_{g}(x)},
$$ 
which is independent of dimension and accuracy.

\begin{proposition}\label{Prop_expected_rej}
    Let $M$ be hypersphere. 
    Set $C_{\varepsilon} = \frac{1}{\log \frac{1}{\varepsilon}}$.
    Assume without loss of generality that $L_{1} \ge \max\{1, \frac{d+1}{\sqrt{6}}\}$. Then with $\eta = \frac{C_{\varepsilon}}{L_{1}^{2} d}$ and $t = \frac{C_{\varepsilon}}{L_{1}^{2}(d-1)}$, for small $\varepsilon$,
    the expected number of rejections is $\mathcal{O}(1)$ in both dimension and $\varepsilon$.
\end{proposition}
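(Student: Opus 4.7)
The expected number of rejections in rejection sampling with proposal $\mu(t,y,\cdot)$ and acceptance rate $V$ equals $1/\mathbb{E}_{\mu}[V]$, which simplifies to
\begin{equation*}
    N \;=\; \frac{\int_{M} e^{-d(x,y)^{2}/(2t)}\,dV_{g}(x)}{\int_{M} \exp(-f(x)+f(y)-d(x,y)^{2}/(2\eta)-C_{\varepsilon}/2)\,dV_{g}(x)}.
\end{equation*}
The plan is to lower bound the denominator by replacing $f(x)-f(y)$ with its worst case using the upper bound from Proposition~\ref{prortt} step~2, which yields $-f(x)+f(y)-d(x,y)^{2}/(2\eta)-C_{\varepsilon}/2 \ge -d(x,y)^{2}/(2T) - C_{\varepsilon}$. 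Hence $N \le e^{C_{\varepsilon}} \cdot R$, where
\begin{equation*}
    R \;:=\; \frac{\int_{M} e^{-d(x,y)^{2}/(2t)}\,dV_{g}(x)}{\int_{M} e^{-d(x,y)^{2}/(2T)}\,dV_{g}(x)}.
\end{equation*}
Since $C_{\varepsilon}=1/\log(1/\varepsilon)\to 0$, the factor $e^{C_{\varepsilon}}$ is $\mathcal{O}(1)$, so it suffices to show $R=\mathcal{O}(1)$.

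Next, I would pass to geodesic normal coordinates centered at $y$. On $\mathcal{S}^{d}$ the volume form is $(\sin r)^{d-1}\,dr\,d\Omega_{d-1}$, and the angular factor $\omega_{d-1}$ cancels between numerator and denominator. Thus
\begin{equation*}
    R \;=\; \frac{\int_{0}^{\pi} e^{-r^{2}/(2t)}(\sin r)^{d-1}\,dr}{\int_{0}^{\pi} e^{-r^{2}/(2T)}(\sin r)^{d-1}\,dr} \;=\; \mathbb{E}_{\nu_{T}}\!\bigl[e^{r^{2}L_{1}^{2}/C_{\varepsilon}}\bigr],
\end{equation*}
using the identity $\tfrac{1}{2T}-\tfrac{1}{2t}=L_{1}^{2}/C_{\varepsilon}$, where $\nu_{T}$ is the probability measure on $[0,\pi]$ proportional to $e^{-r^{2}/(2T)}(\sin r)^{d-1}\,dr$.

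The heart of the argument is to show that under $\nu_{T}$, the random variable $r^{2}$ behaves essentially like $T \chi_{d}^{2}$. This is where the assumption $L_{1}\ge \max\{1,(d+1)/\sqrt{6}\}$ is used: it implies $T=C_{\varepsilon}/(L_{1}^{2}(d+1))\le 6C_{\varepsilon}/(d+1)^{3}$, so $dT=\mathcal{O}(C_{\varepsilon}/(d+1)^{2})$, meaning the mass of $\nu_{T}$ concentrates on $r=\mathcal{O}(\sqrt{C_{\varepsilon}}/(d+1))$, a regime where $\sin r = r(1+\mathcal{O}(r^{2}))$ and hence $(\sin r / r)^{d-1}=1+\mathcal{O}(C_{\varepsilon})$ uniformly in $d$. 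Using the two-sided Taylor bound $r(1-r^{2}/6)\le \sin r\le r$ on the relevant window, together with $(1-r^{2}/6)^{d-1}\ge e^{-2(d-1)r^{2}/6}$ while $2(d-1)/6\ll 1/(2T)$, one replaces $(\sin r)^{d-1}$ by $r^{d-1}$ in both integrals up to a multiplicative factor $1+o(1)$. The resulting ratio of pure Gaussian integrals is computed exactly via the chi-squared moment generating function:
\begin{equation*}
    \frac{\int_{0}^{\infty} r^{d-1} e^{-r^{2}/(2t)}\,dr}{\int_{0}^{\infty} r^{d-1} e^{-r^{2}/(2T)}\,dr} \;=\; (t/T)^{d/2} \;=\; \Bigl(\tfrac{d+1}{d-1}\Bigr)^{d/2},
\end{equation*}
which converges to $e$ as $d\to\infty$ and is bounded by $3$ for all $d\ge 2$. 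Combined with the $1+\mathcal{O}(C_{\varepsilon})$ approximation error and the prefactor $e^{C_{\varepsilon}}$, this yields $N=\mathcal{O}(1)$ independently of $d$ and $\varepsilon$.

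The main obstacle is the last step: rigorously certifying that the tails of $\nu_{T}$ beyond the small window where $\sin r\approx r$ contribute negligibly, and that the $(1-r^{2}/6)^{d-1}$ correction does not inflate the bound by a factor exponential in $d$. This is precisely the role of the dimension-dependent floor on $L_{1}$, which forces the effective Gaussian scale to shrink faster than $1/\sqrt{d}$ so that $(d-1)r^{2}=o(1)$ with overwhelming probability under $\nu_{T}$.
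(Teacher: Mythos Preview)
Your reduction to the ratio
\[
R \;=\; \frac{\int_{M} e^{-d(x,y)^{2}/(2t)}\,dV_{g}}{\int_{M} e^{-d(x,y)^{2}/(2T)}\,dV_{g}}
\]
via the upper bound of Proposition~\ref{prortt}, the identity $\tfrac{1}{2T}-\tfrac{1}{2t}=L_{1}^{2}/C_{\varepsilon}$, and the identification of the main term $(t/T)^{d/2}=((d+1)/(d-1))^{d/2}=\mathcal{O}(1)$ all match the paper exactly. The divergence is in how you handle the $(\sin r)^{d-1}$ factor: you try to show $(\sin r)^{d-1}\approx r^{d-1}$ on the bulk of $\nu_{T}$ and then invoke the chi-squared MGF, and you correctly flag the tail control as the outstanding obstacle.

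The paper sidesteps that obstacle with a one-sided sandwich that never needs $(\sin r)^{d-1}\approx r^{d-1}$. For the numerator it uses $\sin r\le r$ globally on $\mathcal{S}^{d}$, so in normal coordinates $dV_{g}\le dx$ and $\int_{M} e^{-d(x,y)^{2}/(2t)}dV_{g}\le(2\pi t)^{d/2}$. For the denominator it restricts to a geodesic ball $B_{y}(R)$ with $R=\sqrt{6/(d+1)}$, uses the monotone lower bound $(\sin r/r)^{d-1}\ge(\sin R/R)^{d-1}$ on that ball, and then lower-bounds the remaining truncated Euclidean Gaussian integral by $(2\pi T)^{d/2}\bigl(1-\exp(-\tfrac12(R^{2}/T-d))\bigr)$. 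The choice of $R$ makes $(R/\sin R)^{d-1}=\mathcal{O}(1)$, and the assumption $L_{1}\ge(d+1)/\sqrt{6}$ together with $C_{\varepsilon}\le 1$ forces $R^{2}/T-d\ge 1$, so the truncation loss is just the constant $1-e^{-1/2}$. This yields the same $(t/T)^{d/2}$ bound with no tail analysis of the $\sin$ correction at all.

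Your probabilistic route is sound in principle and can be completed, but the paper's argument is shorter precisely because it never approximates both integrals by Gaussians; it only needs an upper bound on one and a lower bound on the other, each obtained by a pointwise monotonicity of $\sin r$ versus $r$.
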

\begin{proof}
    Let $T = \frac{C_{\varepsilon}}{L_{1}^{2}(d+1)}$.
    We try to bound the expected number of rejections. 
    We compute it as follows:
    \begin{align*}
            \frac{\int_{M} \exp(-\frac{1}{2t} d(x, y)^{2}) dV_{g}(x)}{\int_{M} \exp(- f(x) + f(y) - \frac{1}{2\eta} d(x, y)^{2} - \frac{C_{\varepsilon}}{2}) dV_{g}(x)}
            \le 
            \frac{\int_{M} \exp(-\frac{1}{2t} d(x, y)^{2}) dV_{g}(x)}{\int_{M} \exp(- \frac{1}{2T} d(x, y)^{2} - C_{\varepsilon}) dV_{g}(x)}.
    \end{align*}

    Using \citet[Lemma 8.2]{li2023riemannian} and \citet[Lemma C.5]{li2023riemannian}, 
    when $\beta \ge \frac{d}{R^{2}}$, using Riemannian normal coordinates we have the following lower bound on the integral: 
    \begin{align*}
            \int_{M} \exp(-\frac{\beta}{2} d(x, y)^{2}) dV_{g}(x) 
            &\ge \int_{B_{y}(R)} \exp(-\frac{\beta}{2} d(x, y)^{2}) dV_{g}(x) \\
            &\ge (\frac{\sin R}{R})^{d-1} (\frac{2\pi}{\beta})^{\frac{d}{2}}(1 - \exp(-\frac{1}{2}( \beta R^{2} - d))),
    \end{align*}
    where $B_{y}(R)$ denote the geodesic ball centered at $y$ with radius $R$.
    
    On the other hand, we have 
    \begin{align*}
        \int_{M} \exp(-\frac{t}{2} d(x, y)^{2}) dV_{g}(x)
            \le \int_{B_{\pi}(0)} \exp(-\frac{t}{2} |x|^{2}) dx
            \le (\frac{2\pi}{t})^{\frac{d}{2}}.
    \end{align*}

    We next find a suitably small $R$ which only depends on dimension, for which we have $\frac{R}{\sin R} \le 1 + \frac{1}{d}$.
    Using Taylor series for $\sin(R)$, we have $\frac{R}{\sin R} \approx \frac{R}{R-\frac{R^{3}}{6}}$. Hence for $ R^{2} \le \frac{6}{1+d}$, we have (approximately) $\frac{R}{\sin R} \le 1 + \frac{1}{d}$. Consequently we set $R = \sqrt{\frac{6}{1+d}}$, and we know $(\frac{R}{\sin R})^{d-1} = \mathcal{O}(1)$.
    
    Combining the bounds discussed previously, we have 
    \begin{align*}
            &\frac{\int_{M} \exp(-\frac{1}{2t} d(x, y)^{2}) dV_{g}(x)}
            {\int_{M} \exp(-\frac{1}{2T} d(x, y)^{2} - C_{\varepsilon})dV_{g}(x)}
            \\
            \le& e^{C_{\varepsilon}} (\frac{R}{\sin R})^{d-1}
            \frac{(2\pi t)^{\frac{d}{2}}}{ (2\pi T)^{\frac{d}{2}}
            (1 - \exp(-\frac{1}{2}( \frac{L_{1}^{2} (d+1)}{C_{\varepsilon}} R^{2} - d)))} \\
            \le& e^{C_{\varepsilon}+1} (\frac{t}{T})^{\frac{d}{2}}
            \frac{1}{ 1 - \exp(-\frac{1}{2}( \frac{L_{1}^{2} (d+1)}{C_{\varepsilon}}R^{2} - d))}
            = e^{C_{\varepsilon}+1}(\frac{d+1}{d-1})^{\frac{d}{2}}
            \frac{1}{ 1 - \exp(-\frac{1}{2}( \frac{L_{1}^{2} (d+1)}{C_{\varepsilon}}R^{2} - d))}.
    \end{align*}

    For small $\varepsilon$, we have $C_{\varepsilon} \le 1$.
    Since we assumed $L_{1} \ge 1$ and $L_{1}^{2} \ge \frac{d+1}{6}$, we have 
    $1 - \exp(-\frac{1}{2}( \frac{L_{1}^{2} (d+1)}{C_{\varepsilon}}R^{2} - d)) \ge 1 - \exp(-\frac{1}{2}( \frac{(d+1)^{2}}{6}\frac{6}{d+1} - d)) \ge 1 - \exp(-\frac{1}{2})$. As a result, we see that the expect number of rejections is of order $\mathcal{O}(1)$:
    \begin{align*}
            &\frac{\int_{M} \exp(-\frac{1}{2t} d(x, y)^{2}) dV_{g}(x)}
            {\int_{M} \exp(-\frac{1}{2T} d(x, y)^{2} - 1)dV_{g}(x)}
            \le 
            \frac{e^{2}}{ 1 - \exp(-\frac{1}{2})}(\frac{d+1}{d-1})^{\frac{d}{2}}
            \le 20 (\frac{d+1}{d-1})^{\frac{d}{2}}.
    \end{align*}
    Observe that $(\frac{d+1}{d-1})^{\frac{d}{2}} = (1 + \frac{1}{(d-1)/2})^{(\frac{d-1}{2} + \frac{1}{2})} = \mathcal{O}(1)$.

\end{proof}

\end{document}